\documentclass[11pt]{article}

\usepackage[letterpaper,margin=1in]{geometry} 
\usepackage{setspace}
\usepackage[utf8]{inputenc}
\usepackage{graphicx} 
\usepackage[english]{babel}

\usepackage{float}
\usepackage{subfigure}
\usepackage{amsfonts}
\usepackage{amsmath}
\usepackage{amssymb}
\usepackage{graphicx}
\usepackage{algpseudocode}
\usepackage{amsthm}
\usepackage{physics}
\theoremstyle{definition}
\newtheorem{definition}{Definition}[section]
\newtheorem{theorem}{Theorem}[section]

\newtheorem{lemma}[theorem]{Lemma}

\usepackage{bbm}
\usepackage[dvipsnames]{xcolor}
\usepackage[pagebackref]{hyperref}
\hypersetup{
colorlinks=true,
urlcolor=blue,
linkcolor=blue,
citecolor=OliveGreen,
}
\usepackage[capitalise,nameinlink]{cleveref}
\crefname{lemma}{Lemma}{Lemmas}
\crefname{fact}{Fact}{Facts}
\crefname{theorem}{Theorem}{Theorems}

\usepackage{amsmath,enumitem,amsthm,float}
\let\oldcap\cap
\let\cap\oldcap
\usepackage[linesnumbered,algoruled,boxed,lined]{algorithm2e}

\usepackage{graphicx}		
\usepackage{amssymb}
\usepackage{amsmath}
\usepackage{amsthm}
\usepackage{mathtools}
\usepackage{enumitem}
\usepackage{verbatim}
\usepackage{xcolor}
\usepackage[utf8]{inputenc}
\usepackage[T1]{fontenc}
\usepackage{microtype}
\usepackage[bottom]{footmisc}

\newcommand{\E}{\mathop{\mathbb{E}}}
\newcommand{\ex}[2]{{\ifx&#1& \mathbb{E} \else \underset{#1}{\mathbb{E}} \fi \left[#2\right]}}
\renewcommand{\var}[2]{{\ifx&#1& \mathsf{Var} \else \underset{#1}{\mathsf{Var}} \fi \left[#2\right]}}

\newcommand{\D}{\mathcal{D}}

\newcommand{\Z}{\mathcal{Z}}

\newcommand{\sZ}{\tilde{Z}}
\newcommand{\sz}{\tilde{z}}

\newcommand{\1}{\mathbbm{1}}
\newcommand{\simiid}{\overset{\text{i.i.d.}}{\sim}}
\newcommand{\Unif}[1]{\mathrm{Unif}\left( #1\right)}
\newcommand{\sett}[1]{\left\{ #1 \right\}}

\let\originalleft\left
\let\originalright\right
\renewcommand{\left}{\mathopen{}\mathclose\bgroup\originalleft}
\renewcommand{\right}{\aftergroup\egroup\originalright}

\newcommand{\mybignote}[2]{}

\title{The optimal information complexity of VC Learning}

\author{ 
   Steve Hanneke\thanks{Department of Computer Science, Purdue University; \texttt{steve.hanneke@gmail.com.}}
   \and Juexiao Wang\thanks{Department of Computer Science, Purdue University; \texttt{wang5360@purdue.edu.}}
}

\date{}
\allowdisplaybreaks
\begin{document}
\maketitle
\begin{abstract}
Steinke and Zakynthinou \cite{pmlr-v125-steinke20a} introduces the Conditional Mutual Information (CMI) framework of analyzing the information complexity of learning algorithms based on algorithm-dependent information-theoretic quantities.
We study one of these quantities, the evaluated Conditional Mutual Information (eCMI).
It has been an interesting question whether the optimal PAC guarantee for VC classes can be recovered from the algorithm-dependent analyses via CMI.
And we show that it is possible to recover this guarantee by constructing a learning algorithm whose eCMI is of order $O(d)$ in the realizable case, where $d$ is the VC-dimension of the concept class. Specially, our algorithm is a randomized Majority-of-5 base learners with optimal in-expectation generalization guarantee.
\end{abstract}
\newpage

{
\hypersetup{linkcolor=blue,
citecolor=red}   
\tableofcontents
}
\newpage
\section{Introduction}
The probably approximately correct (PAC) learning model defined by Vapnik
and Chervonenkis \cite{vapnik1974theory} and Valiant \cite{valiant1984theory} is the most classic theoretical framework for studying classification problems in supervised learning. In this framework, a learning algorithm receives a training data set of $n$ i.i.d. samples $\{(X_i,Y_i)\}_{i=1}^n$ drawn from an unknown distribution $\mathcal{D}$ over $\mathcal{Z}=\mathcal{X}\times\{0,1\}$. The distribution is assumed to be realizable with respect to a concept class $\mathcal{H}\subseteq \{0,1\}^{\mathcal{X}}$ if $Y=h^*(X)$ for some $h^*\in \mathcal{H}$. The goal is to minimize the probability that the output classifier makes a mistake on predicting the label of a fresh sample drawn from the unknown distribution $\mathcal{D}$.

PAC learnability is equivalent to the finiteness of a combinatorial measure, known as VC-dimension. 
Classical work focuses on the empirical risk minimization (ERM) algorithm, for which uniform convergence provides an upper bound on its expected generalization error guarantee by ensuring that the empirical error rate of a hypothesis $h\in \mathcal{H}$ converges to its true error rate uniformly over the class $\mathcal{H}$. 
Nevertheless, the ERM algorithm can be suboptimal in some cases. 
In fact, the optimal expected error rate for PAC learning in the realizable case is of order $O(\frac{d}{n})$, where $d$ denotes the VC-dimension of a concept class $\mathcal{H}$ and $n$ the sample size.
The proof of this is by a direct analysis of learning algorithms. \cite{HAUSSLER1994248} provides an optimal guarantee on the expected generalization error  and \cite{JMLR:v17:15-389} provides an optimal guarantee on the high probability generalization error.

\paragraph{Generalization with Mutual Information.}
Recent literature focuses on deriving generalization guarantees through algorithm-dependent analyses with an emphasis on the information complexity of learning algorithms for PAC learning a concept class. 
In particular, \cite{pmlr-v51-russo16,NIPS2017_ad71c82b} bound the expected difference between the empirical and true loss of a learning algorithm with the mutual information between the input training samples and the output hypothesis, denoted by $I(A(S);S)$, which quantifies how much information the output hypothesis of a learning algorithm reveal about its input training samples. A learning algorithm with low $I(A(S);S)$ is guaranteed to generalize well.
However, $I(A(S);S)$ can be unbounded for some choice of concept class and distribution.
Specifically, \cite{pmlr-v75-nachum18a,pmlr-v83-bassily18a} show that any proper consistent learner for class of threshold functions of VC-dimension 1 has input-output mutual information $\Omega(\log\log |\mathcal{Z}|)$ for some realizable distribution. 

\paragraph{Conditional Mutual Information and Related Work.}
Steinke and Zakynthinou \cite{pmlr-v125-steinke20a} proposed to normalize the information content of each sample to one bit with Conditional Mutual Information (CMI). 
In line with the symmetrization argument in analyzing the generalization guarantee of the ERM algorithm, the CMI framework conditions on a supersample $\sZ$ consisting of $n$ training samples and $n$ ghost samples drawn i.i.d. from the distribution. Then it considers the mutual information between a uniform random selection $\boldsymbol{\sigma}$ of the input training samples $\sZ_{\boldsymbol{\sigma}}$ and the output hypothesis $A(\sZ_{\boldsymbol{\sigma}})$ of an algorithm.
CMI of a learning algorithm is at most $n$ bits, corresponding to an algorithm that reveals all of its input induced by $\boldsymbol{\sigma}$ over $n$ independent swap variables.
A variant of the CMI framework is the evaluated CMI. In contrast, eCMI measures the information that the algorithm reveals about its input training samples through the loss vector $\vec{\ell}(h,\sZ)$ evaluated on the supersample, where $h$ denotes the output hypothesis of the algorithm.

CMI and other closely related techniques have been used to analyze many other different algorithms such as stochastic gradient descent, Langevin dynamics, stochastic convex optimization \cite{NEURIPS2020_712a3c98, pmlr-v134-neu21a,10.5555/3692070.3692151}, etc.
It remains an interesting \textit{open question} whether the optimal PAC guarantee for VC classes can be recovered from the algorithm-dependent analyses via CMI.
In particular, \cite{pmlr-v125-steinke20a} shows that for every concept class of VC-dimension $d$, there exists an ERM algorithm with CMI at most $d\log n +2$ bits and conjectures that there exists a learning algorithm with CMI of order $O(d)$, which would entail a tight generalization guarantee via CMI.

A line of work has made interesting progress towards this question.
\cite{NEURIPS2021_ddbc86dc} show that it is not possible to find a proper learning algorithm for every VC class that removes the $\log n$ factor.
Additionally, they show that the sublinear growth of the evaluated CMI characterizes the learnability by consistent algorithms. 
\cite{isit9834400} show that the leave-one-out variants of the CMI framework characterizes the expected generalization guarantee of learning algorithms. Specifically, they recover the optimal guarantee for learning VC classes in the realizable case via $\text{LOO}^{e}\text{CMI}_{D}$ through the one-inclusion-graph algorithm \cite{HAUSSLER1994248}.
However, the leave-one-out eCMI has not been found in many applications, perhaps because it is restricted to sample-consistent learners.

\paragraph{Main Result.}
In this paper, we resolve this question via eCMI, for which an optimal guarantee has not been shown.
The main contribution (Theorem \ref{thm:eCMI(A)}, Theorem \ref{thm:eCMI(maj-of-3)}) of our work is to present a randomized Majority-of-5 learner that achieves an optimal worst-case eCMI of order $O(d)$ and the expected training error rate of order $O(\frac{d}{n})$, thereby recovering the optimal expected generalization guarantee in the realizable case. More generally, we show that for any base learner with expected error rate $R(m)$,
a randomized Majority-of-5 of such base learners would have a guarantee on the worst-case eCMI of order $O(nR(\frac{n}{5}))$ and its expected training error rate is at most $5\cdot R(\frac{n}{5})$. We also study the distributional eCMI and present a simpler algorithm, a Majority-of-3 base learners with optimal expected error rate, that achieves the optimal guarantee on distributional eCMI.

Our algorithm performs a uniform random permutation over the $n$ training data points, then it runs the base learner on five disjoint pieces of the training data points of equal size, after adjusting for duplicates on each to emulate i.i.d. sampling from the empirical distribution on $n$ training data points.
Finally, it combines the predictions of the five output classifiers with a majority vote. 

Note that our constructed randomized Majority-of-5 learner is improper \cite{pmlr-v125-bousquet20a} as it may output any classifier, not necessarily in the concept class, which as mentioned above is known to be unnecessary. 
While it remains open whether there exists a learning algorithm with an optimal guarantee on CMI, 
we find eCMI easier to work with. For instance, any ERM has $O(d\log n)$ guarantee on eCMI while it takes a special ERM algorithm that break ties based on well-ordering the concept class to achieve $O(d\log n)$ guarantee on CMI \cite{pmlr-v125-steinke20a}.
Moreover, we discuss in Section \ref{Gen_from_eCMI} that an optimal guarantee on eCMI is sufficient for optimal generalization guarantees. 

\paragraph{Overview of Techniques.}
The starting point of our analysis is to bound the eCMI of a randomized learning algorithm with its conditional entropy of errors on $2n$ data points.
Inspired by the Majority-of-3 learner in \cite{pmlr-v247-aden-ali24a}, in which independent base classifiers reduces the pointwise probability of error, we partition the training data points into disjoint pieces to have conditionally independent base classifiers given the randomness of the algorithm, such that the pointwise conditional probability of error is reduced, thereby reducing the conditional binary entropy of errors.

In addition, we divide the $2n$ data points into conditional error regions based on the conditional probability of error of the base learner such that the guarantee on $R(m)$ of the base learner yields a geometrically increasing sequence of expected size of conditional error regions.
By having five such base learners, for any data point in the $k$-th part of the partition, at which the $k$-th base learner might not guarantee its conditional probability of error over random swaps, we have at least two of the other four base classifiers err when the majority vote classifier errs. 
Thus, the conditional probability of error of the majority vote learner can be bounded by a squared conditional probability of error due to the conditional independence of base classifiers, which suffices to compensate for the exponential growth rate of the sequence of expected size of conditional error regions.
If the base learner has an optimal expected error rate of order $O(\frac{d}{n})$, then it yields a series of conditional binary entropies of errors that converges to a constant $d$.
We elaborate on this approach in Section \ref{Our apporach}.


\section{Preliminaries}\label{Preliminaries}
In this section, we introduce the definition of the evaluated conditional mutual information of a learning algorithm and present the bound on the expected generalization error in terms of eCMI.

\vspace{-6pt}
\subsection{Evaluated CMI}
\begin{definition}(Evaluated Conditional Mutual Information (eCMI) of an Algorithm)
Let $A:\mathcal{Z}^n\rightarrow \{0,1\}^{\mathcal{X}}$ be a randomized or deterministic algorithm. Let $\mathcal{D}$ be a realizable probability distribution on $\mathcal{Z}$ and let $\sZ\in \mathcal{Z}^{2n}$ be a realizable supersample consisting of $2n$ i.i.d. samples from $\mathcal{D}$. Let $\boldsymbol{\sigma} = (\boldsymbol{\sigma}(1), \boldsymbol{\sigma}(2), \dots, \boldsymbol{\sigma}(n))$ be independent uniform random variables with $\boldsymbol{\sigma}(i)\sim \text{Unif}(\{i,i+n\})$ for all $i\in[n]$. The independent random swaps $\boldsymbol{\sigma}$ are independent of $\tilde{Z}$ and the randomness of $A$. 
Define $\sZ_{\boldsymbol{\sigma}} = \{ (X_{\boldsymbol{\sigma}(1)}, Y_{\boldsymbol{\sigma}(1)}), \dots, (X_{\boldsymbol{\sigma}(n)}, Y_{\boldsymbol{\sigma}(n)})\} \in \mathcal{Z}^n$ as the $n$ training samples selected by $\boldsymbol{\sigma}$, we also refer to it as training data variables.
Let $\ell(h,(x,y)) = \1[h(x)\neq y]$ be a 0-1 loss function.
Define $\vec{\ell}:\{0,1\}^{\mathcal{X}} \times \mathcal{Z}^{2n} \rightarrow \sett{0,1}^{2n}$ as vector of 0-1 losses on $2n$ samples defined by
$\vec{\ell}(A(\sZ_{\boldsymbol{\sigma}}),\tilde{Z})_{i} = \ell(A(\sZ_{\boldsymbol{\sigma}}), \sZ_{i})$ for all $i\in[2n]$.

\noindent
The evaluated conditional mutual information of $A$ with respect to the distribution $\mathcal{D}$ is
\[
\text{eCMI}_{\mathcal{D}}(\ell(A)) = I(\vec{\ell}(A(\sZ_{\boldsymbol{\sigma}}),\tilde{Z}); \boldsymbol{\sigma}|\tilde{Z}).
\]
The distribution-free, also known as worst-case evaluated conditional mutual information, of $A$ on a worst-case supersample is
\[
\text{eCMI}(\ell(A)) = \sup_{\substack{\sz\in \mathcal{Z}^{2n} \text{realizable}}} I(\vec{\ell}(A(\sz_{\boldsymbol{\sigma}}),\sz); \boldsymbol{\sigma}).
\]
And we have $\text{eCMI}_{\mathcal{D}}(\ell(A)) \leq \text{eCMI}(\ell(A))$ for any algorithm $A$ and distribution $\mathcal{D}$.
\end{definition}

\vspace{-12pt}
\subsection{Generalization from eCMI}\label{Gen_from_eCMI}
Theorem 5.8 of \cite{pmlr-v125-steinke20a} derives a generalization bound of a learning algorithm in terms of CMI, assuming that the training error rate of the algorithm is not necessarily zero. 
By slight modifications to its proof, it can be extended to a generalization bound in terms of $\text{eCMI}_D(A)$, as follows.

\begin{theorem}\label{thm:cmigen-realize2}
Let $\mathcal{D}$ be a distribution on $\mathcal{Z}$. Let $A :\mathcal{Z}^n \to \{0,1\}^{\mathcal{X}}$ be a randomized algorithm. Let $\ell : \{0,1\}^{\mathcal{X}} \times \mathcal{Z} \to [0,1]$ be an arbitrary bounded function.
Then $$\ex{Z \gets \D^n, A}{\ell(A(Z),\mathcal{D})} \le  2 \cdot \ex{Z \gets \D^n, A}{\ell(A(Z),Z)} + \frac{3 \cdot \text{eCMI}_D(A)}{n}.$$
\end{theorem}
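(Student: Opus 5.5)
Following Steinke--Zakynthinou's proof of their Theorem 5.8, I would reduce everything to the supersample setting and then apply a change-of-measure (Donsker--Varadhan) inequality to the pair $(\vec{\ell}, \boldsymbol{\sigma})$ conditional on $\sZ$.

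\textbf{Step 1: supersample rewrite.} Fix $\sZ\sim\D^{2n}$ and $\boldsymbol{\sigma}$. Since $\sZ_{\boldsymbol{\sigma}}\sim\D^n$, the training error is
\[
\ex{Z,A}{\ell(A(Z),Z)}=\E\Big[\tfrac1n\sum_{i=1}^n L_{\boldsymbol{\sigma}(i)}\Big],
\]
where $L=\vec{\ell}(A(\sZ_{\boldsymbol{\sigma}}),\sZ)\in[0,1]^{2n}$. The ghost samples $\sZ_{\bar{\boldsymbol{\sigma}}(i)}$, with $\bar{\boldsymbol{\sigma}}(i)$ the complementary index, are fresh and independent of $A(\sZ_{\boldsymbol{\sigma}})$. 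Hence the population loss is
\[
\E\Big[\tfrac1n\sum_i L_{\bar{\boldsymbol{\sigma}}(i)}\Big].
\]
Both quantities are functions of $(L,\boldsymbol{\sigma})$ only. This is exactly why eCMI suffices in place of CMI: every quantity needed is measurable with respect to the loss vector and the swaps.

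\textbf{Step 2: change of measure.} Conditional on $\sZ=\sz$, let $P$ be the joint law of $(L,\boldsymbol{\sigma})$ and $Q=P_L\otimes P_{\boldsymbol{\sigma}}$ the product of its marginals, so that $\mathrm{KL}(P\|Q)=I(L;\boldsymbol{\sigma}\mid\sZ=\sz)$. For $\lambda>0$, define
\[
f(L,\boldsymbol{\sigma})=\lambda\sum_i\big(L_{\bar{\boldsymbol{\sigma}}(i)}-c\,L_{\boldsymbol{\sigma}(i)}\big),\qquad c=2.
\]
By Donsker--Varadhan,
\[
\E_P[f]\le I(L;\boldsymbol{\sigma}\mid\sZ=\sz)+\log\E_Q[e^f].
\]
Under $Q$, the swaps are independent of $L$ and independent across $i$. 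So, conditionally on $L$, the moment generating function factors as
\[
\prod_i\tfrac12\Big(e^{\lambda(a_i-2b_i)}+e^{\lambda(b_i-2a_i)}\Big),\qquad a_i=L_i,\; b_i=L_{i+n}\in[0,1].
\]

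\textbf{Step 3 (main obstacle): choose $\lambda$ so each factor is at most $1$.} I would show that for a suitable $\lambda$ (around $\log 2\cdot$ const, e.g.\ $\lambda=1/3$ or the value that yields the constant $3$), the inequality
\[
\tfrac12\big(e^{\lambda(a-2b)}+e^{\lambda(b-2a)}\big)\le1
\]
holds for all $a,b\in[0,1]$. The function is convex in each coordinate, so it suffices to check the corners $(0,0),(1,0),(0,1),(1,1)$. At $(0,0)$ the value is $1$; at $(1,1)$ it is $e^{-\lambda}$. At $(1,0)$ the condition becomes
\[
\tfrac12\big(e^{\lambda}+e^{-2\lambda}\big)\le1,
\]
which holds for $\lambda$ up to roughly $0.9$, and the case $(0,1)$ is symmetric. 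Convexity in each coordinate separately handles the interior (a function convex in each coordinate attains its maximum over the box at a vertex). Then $\log\E_Q[e^f]\le0$, and so
\[
\lambda\,\E\Big[\textstyle\sum_i L_{\bar{\boldsymbol{\sigma}}(i)}-2L_{\boldsymbol{\sigma}(i)}\Big]\le I(L;\boldsymbol{\sigma}\mid\sZ=\sz).
\]

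\textbf{Step 4: conclude.} Average over $\sz$ and divide by $n\lambda$ to obtain
\[
\text{population loss}\le 2\cdot\text{training loss}+\frac{\text{eCMI}_\D(A)}{\lambda n}.
\]
Choosing $\lambda\ge1/3$ (for example $\lambda=1/3$, which satisfies the corner check $\tfrac12(e^{1/3}+e^{-2/3})\approx0.95\le1$) yields the constant $3$, and a larger admissible $\lambda$ only improves it. If $A$ is randomized, its internal randomness is absorbed into $L$; it is independent of $\boldsymbol{\sigma}$, so the argument is unchanged. The genuinely delicate point is keeping $f$ a function of $(L,\boldsymbol{\sigma})$, so that the information term is the eCMI rather than the CMI; the factorization in Step 2 and the corner check in Step 3 are the only computations.
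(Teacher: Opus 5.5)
Your proof is correct and follows the paper's route: Donsker--Varadhan, applied conditionally on $\sZ$, to (ghost loss) $-\,u\cdot$(training loss), where the test function depends on the output only through its evaluations on the supersample, so the information term is the eCMI; applying it directly to the loss vector, as you do, is slightly cleaner than the paper's detour through the projection $A(\sZ_{\boldsymbol{\sigma}})(\sZ)$ and its bijection with the 0-1 loss vector. One numerical slip does not affect the argument: $\tfrac12(e^{\lambda}+e^{-2\lambda})\le 1$ holds exactly for $\lambda\le\ln\frac{1+\sqrt{5}}{2}\approx 0.48$, not up to $0.9$, but your choice $\lambda=1/3$ lies in this range, so the constant $3$ stands.
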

The proof of Theorem 5.8 of \cite{pmlr-v125-steinke20a} begins by expressing the expected difference between the true error rate and the training error rate of the output hypothesis of the algorithm $A$ in terms of the expected difference between its empirical error rate on the ghost samples and training samples.
Then it applies Lemma 2.11 of \cite{pmlr-v125-steinke20a} (known as "Donsker-Varadhan dual characterization of KL divergence" or "Gibbs variational principle") to the expected difference between the empirical error rate of the output hypothesis of $A$ on the ghost and training samples: $f_{\sZ}(A(\sZ_{\boldsymbol{\sigma}}),\boldsymbol{\sigma}) = l(A(\sZ_{\boldsymbol{\sigma}}),\sZ_{\bar{\boldsymbol{\sigma}}}) - u\cdot l(A(\sZ_{\boldsymbol{\sigma}}),\sZ_{\boldsymbol{\sigma}})$ where $u > 0$, yielding an expected generalization guarantee in terms of $\text{CMI}_{D}(A)$.
The proof of the generalization bound in terms of eCMI is nearly identical, except that we replace the function $f_{\sZ}$ with $g_{\sZ}(A(\sZ_{\boldsymbol{\sigma}})(\sZ),\boldsymbol{\sigma}) = l(A(\sZ_{\boldsymbol{\sigma}})(\sZ_{\bar{\boldsymbol{\sigma}}}),\sZ_{\bar{\boldsymbol{\sigma}}}) - u
\cdot l(A(\sZ_{\boldsymbol{\sigma}})(\sZ_{\boldsymbol{\sigma}}),\sZ_{{\boldsymbol{\sigma}}})$, where $A(\sZ_{\boldsymbol{\sigma}})(\sZ)$ is the projection of the output hypothesis of $A$ onto the supersample $\sZ$. Since there exists a bijection between $A(\sZ_{\boldsymbol{\sigma}})(\sZ)$ and $\vec{\ell}(A(\sZ_{\boldsymbol{\sigma}}),\tilde{Z})$, we have $I(A(\sZ_{\boldsymbol{\sigma}})(\sZ); {\boldsymbol{\sigma}}|\sZ) = I(\vec{\ell}(A(\sZ_{\boldsymbol{\sigma}}),\tilde{Z}); {\boldsymbol{\sigma}}|\sZ)$, yielding an expected generalization guarantee in terms of $\text{eCMI}_{D}(A)$.

\subsection{Bounding eCMI from VC-dimension}
VC-dimension \cite{vapnik1974theory} of a concept class $\mathcal{H}$ is the largest number of data points $x_1,\dots,x_d \in \mathcal{X}$ such that all $2^d$ possible labelings can be realized by the hypotheses in the class.
In the following, we derive a guarantee on eCMI of the ERM learner in terms of VC-dimension.

According to the Sauer-Shelah lemma, the number of possible labelings of a supersample $\sz$ induced by the hypotheses in a concept class can be bounded by $\sum_{k=0}^d \binom{2n}{k} \leq e^2 \cdot n^d$, yielding a bound on eCMI in terms of the VC-dimension of the concept class:
$\text{eCMI}(A) \leq \log |\mathcal{H}_{\sz}| \leq \log (e^2 \cdot n^d) \leq d\log n +2,$ where $\mathcal{H}_{\sz}$ denotes the projection of the concept class $\mathcal{H}$ onto a fixed arbitrary supersample $\sz \in \mathcal{Z}^{2n}$ realizable by $\mathcal{H}$. Note that \cite{pmlr-v125-steinke20a} uses $\log$ to denote the natural logarithm.
Interestingly, the same $O(d\log n)$ guarantee also applies to CMI of a globally consistent ERM.

Steinke and Zakynthinou \cite{pmlr-v125-steinke20a} show that any concept class with VC-dimension $d$ admits an empirical risk minimizer with CMI of order $O(d\log n)$. 
In particular, it focuses on deterministic algorithms that satisfy a global consistency property. 
For instance, a globally consistent ERM algorithm breaks ties according to a predefined ordering of hypotheses, when there are multiple ERMs that achieve the same empirical error rate. 
Thus, the entropy of the output hypotheses of a globally consistent algorithm is equal to the entropy of its output predictions on the supersample, which implies that CMI of a global consistent ERM is equal to the eCMI of the ERM learner. 

However, the upper bound in terms of VC-dimension is a worst-case entropy guarantee for ERM learners, where the worst-case distribution is a uniform distribution over the label space of the supersample. 
Steinke and Zakynthinou \cite{pmlr-v125-steinke20a} conjecture that there exists a deterministic or randomized learning algorithm such that $\text{CMI}_{\mathcal{D}}(A) \leq c\cdot d$ for every distribution $\mathcal{D}$, where $c$ is a universal constant.

\vspace{-8pt}
\subsection{Our Contributions}
We present a randomized Majority-of-5 algorithm $A$ with base learner $\mathcal{B}$. Consider a fixed (but arbitrary) concept class $\mathcal{H}$ of functions $h: \mathcal{X}\rightarrow \{0,1\}$ of VC-dimension $d\geq 1$. Let $n \geq d$ denotes the number of training data points input to $A$.
\begin{theorem}\label{thm:eCMI(A)}
For any base learner $\mathcal{B}: (\mathcal{X} \times \{0,1\})^{m} \to \{0,1\}^{\mathcal{X}}$ having guarantee that $\forall \mathcal{D}$ realizable, for i.i.d. data set $S_m \sim \mathcal{D}^m$, the expected error rate $\E_{S_m\sim \mathcal{D}^m}[\text{er}_{\mathcal{D}}(\mathcal{B}(S_m))] \leq R(m)$. The worst-case eCMI of a randomized Majority-of-5 (Algorithm \ref{algo:m5m3}) of such base learners is $\text{eCMI}(\ell(A)) \leq 1920\cdot nR(\frac{n}{5})$. For every $z\in  (\mathcal{X} \times \{0,1\})^n$, the expected training error rate of $A$ with respect to the randomness $\boldsymbol{\beta}$ of the algorithm
$\E_{\boldsymbol{\beta}}\allowbreak[\ell(A(z), z)] \allowbreak\leq 5\cdot R(\frac{n}{5})$.
\end{theorem}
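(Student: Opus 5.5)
The plan is to bound the worst-case eCMI by a conditional entropy of the loss vector, and then to control that entropy pointwise on the supersample. Fix a realizable supersample $\sz\in\mathcal{Z}^{2n}$. Write $\boldsymbol{\beta}$ for the algorithm's internal randomness: the random permutation, the five disjoint pieces of size $n/5$, and the resampling with duplicates that makes each piece look like an i.i.d. sample from the empirical distribution of the $n$ training points.

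Since $\boldsymbol{\beta}$ is independent of $\boldsymbol{\sigma}$, we have
\[
I(\vec{\ell};\boldsymbol{\sigma}) \le I(\vec{\ell};\boldsymbol{\sigma}\mid\boldsymbol{\beta}) \le H(\vec{\ell}\mid\boldsymbol{\beta}) \le \sum_{i=1}^{2n} \E_{\boldsymbol{\beta}}\bigl[h_b\bigl(p_i(\boldsymbol{\beta})\bigr)\bigr],
\]
where $p_i(\boldsymbol{\beta})=\Pr_{\boldsymbol{\sigma}}[A \text{ errs on } \sz_i\mid\boldsymbol{\beta}]$ and $h_b$ is the binary entropy. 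Conditioned on $\boldsymbol{\beta}$, the five base classifiers are trained on disjoint coordinate blocks of $\boldsymbol{\sigma}$, so they are independent. Let $q_{k,i}$ be the conditional probability, over $\boldsymbol{\sigma}$, that the $k$-th base classifier errs on $\sz_i$. A majority of five errs only if at least three base classifiers err. So for any $k$, at least two of the other four must err, which gives
\[
p_i \le \binom{4}{2}\max_{j\ne k} q_{j,i}^2 .
\]

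Next I relate the $q$'s to $R$. Averaging over $\boldsymbol{\sigma}$ and the resampling, the $k$-th piece is an i.i.d. sample of size $n/5$ from the uniform distribution on the realizable set $\{\sz_1,\dots,\sz_{2n}\}$, or at least a distribution close enough to it. Hence
\[
\frac{1}{2n}\sum_i \E_{\boldsymbol{\beta}}[q_{k,i}] \le R(n/5).
\]
For the same reason, the training error bound is immediate: if $A$ errs on a training point, some base learner errs there, and each base learner's expected error on the uniform distribution over $z$ is at most $R(n/5)$. A union bound gives $5R(n/5)$.

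For the eCMI bound I partition the points into dyadic regions $T_{k,j}=\{i: q_{k,i}\in(2^{-j-1},2^{-j}]\}$. Markov's inequality gives $\E|T_{k,j}|\le 2^{j+2}nR(n/5)$, which grows geometrically in $j$. Assign each point $i$ to the base learner whose $q$ is largest, since that is the learner that might not reduce error there. The other four then have $q\le 2^{-j}$, so $p_i\lesssim 6\cdot 4^{-j}$, and
\[
h_b(p_i)\lesssim 4^{-j}\, j .
\]
Summing $2^{j}nR\cdot 4^{-j}j$ over $j$ converges to $O(nR(n/5))$. Tracking constants should give the stated $1920$.

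The main obstacle is this step. The bound on $\E|T_{k,j}|$ concerns learner $k$, but the squared bound needs the other learners to have small $q$. Assigning $i$ to $\arg\max_k q_{k,i}$ handles this, but then the region sizes have to be bounded using the other learners' guarantees in a way that survives the expectation over $\boldsymbol{\beta}$. I would handle it by summing over all $k$ and all pairs of levels, using independence across pieces given $\boldsymbol{\beta}$. I would also split off points where $q$ is tiny and use $h_b(p)\le p\log(e/p)$ to absorb the logarithmic factor. A second technical point is to justify that the duplicate-adjustment makes each piece exactly i.i.d. from the empirical distribution on $2n$ points once averaged over $\boldsymbol{\sigma}$.
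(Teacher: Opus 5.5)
\textbf{There is a genuine gap in the step relating the $q$'s to $R$.} Your reduction to $\sum_i\E_{\boldsymbol{\beta}}[h_b(p_i(\boldsymbol{\beta}))]$ and your $5R(n/5)$ training-error argument are fine. The problem is the claim that the $k$-th piece, averaged over $\boldsymbol{\sigma}$, is (close to) an i.i.d.\ sample from $\mathrm{Unif}(\sz)$; the ``technical point'' you planned to justify there is false. Given $\boldsymbol{\sigma}$, the duplicates adjustment makes $S_k^{\boldsymbol{\beta},\boldsymbol{\sigma}}$ i.i.d.\ from the $n$ \emph{training} points only. After averaging over $\boldsymbol{\sigma}$, repeated draws of a pair always carry the same copy, and the two copies of a pair never co-occur. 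By contrast, i.i.d.\ draws from $\mathrm{Unif}(\sz)$ contain both copies of some pair with probability $1-e^{-\Omega(n)}$, so the two laws are nearly singular rather than close.

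What the hypothesis on $\mathcal{B}$ does control is the errors on training points. It also controls the errors on both copies of pairs outside block $k$, because their swap is a fair coin independent of $S_k^{\boldsymbol{\beta},\boldsymbol{\sigma}}$ given $\boldsymbol{\beta}$. It says nothing about the ghost copies of the pairs learner $k$ was trained on, and the inequality $\sum_{i\le 2n}\E_{\boldsymbol{\beta}}[q_{k,i}]\le 2nR(n/5)$ can fail. For instance:
\begin{itemize}
\item Let $\mathcal{H}$ be the two constant functions on a domain with a fixed pairing $x\leftrightarrow x'$.
\item Let $\mathcal{B}$ output the observed label, except that if its sample contains no pair $\{x,x'\}$, it flips the label on the partners of all observed points.
\item By exchangeability of $m+1$ draws, its risk is at most $2/(m+1)=:R(m)$.
\item Take a supersample with $x_{t+n}=x_t'$ (and $x_1,\dots,x_n$ in distinct pairs). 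Learner $k$ errs on the ghost copy of each of its $\approx 0.18n$ observed pairs.
\end{itemize}
At these points $q_{k,i}=1/2$ and the argmax is the block owner $k$. So $|T_{k,1}|=\Omega(n)$, and the sum $\sum_j|T_{k,j}|\,4^{-j}j$ your argument relies on is $\Omega(n)$. Meanwhile $nR(n/5)=O(1)$, and the majority never errs, since at most one base learner errs at any point.

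\textbf{The missing idea is to measure learner $k$ only on $\sz_x\setminus\Gamma_k^{\boldsymbol{\pi}}$.} The swap argument plus the i.i.d.-from-training-points property give $\E_{\boldsymbol{\beta},\boldsymbol{\sigma}}\bigl[\sum_{x\notin\Gamma_k^{\boldsymbol{\pi}}}\1[h_k(x)\neq h^*(x)]\bigr]\le 2nR(n/5)$. Markov then bounds learner $k$'s level sets only outside its own block. For $x\in\Gamma_{k_0}^{\boldsymbol{\pi}}$, you must discard the block owner $k_0$ (not the argmax) and use that two of the other four learners err. This is exactly why five learners are needed rather than three. After that, $q_kq_{k'}\le\max\{q_k^2,q_{k'}^2\}$ together with $f(\max\{a,b\})\le f(a)+f(b)$ for $f(p)=p\log(4/p)$ charges each term to a single non-owner's level sets. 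This also removes the obstacle you flagged. Your argmax scheme can be repaired the same way: use $p_i\le 6q_{(2),i}^2$, where $q_{(2),i}$ is the second-largest of $q_{1,i},\dots,q_{5,i}$, since $q_{(2),i}\le\max_{k\neq k_0}q_{k,i}$.
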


\begin{theorem}\label{thm:eCMI(maj-of-3)}
    There exists a randomized algorithm $A : (\mathcal{X} \times \{0,1\})^n \to \{0,1\}^{\mathcal{X}}$ with $\text{eCMI}(\ell(A)) = O(d)$, and for every $z\in  (\mathcal{X} \times \{0,1\})^n$, the expected training error rate of $A$ with respect to the randomness $\boldsymbol{\beta}$ of the algorithm $\E_{\boldsymbol{\beta}}\allowbreak[\ell(A(z), z)] \allowbreak= O(\frac{d}{n})$.
\end{theorem}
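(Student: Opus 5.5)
The plan is to obtain Theorem~\ref{thm:eCMI(maj-of-3)} as a corollary of Theorem~\ref{thm:eCMI(A)}, instantiating the base learner $\mathcal{B}$ with a learner whose expected error rate is optimal. Concretely, I would take $\mathcal{B}$ to be the one-inclusion-graph predictor of \cite{HAUSSLER1994248}, or any learner with the same guarantee, such as the majority-of-3 ERM learner of \cite{pmlr-v247-aden-ali24a}. For every realizable $\mathcal{D}$ and every $m\geq 1$, this learner satisfies
\[
\E_{S_m\sim\mathcal{D}^m}[\text{er}_{\mathcal{D}}(\mathcal{B}(S_m))] \le R(m) := \frac{c_0\, d}{m}
\]
for a universal constant $c_0$; for one-inclusion graphs one may take $c_0=1$, since the bound is $d/(m+1)$. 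I then let $A$ be the randomized Majority-of-5 (Algorithm~\ref{algo:m5m3}) built from this $\mathcal{B}$.

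Applying Theorem~\ref{thm:eCMI(A)} directly, the eCMI bound becomes
\[
\text{eCMI}(\ell(A)) \le 1920\cdot n\, R\!\left(\tfrac{n}{5}\right) = 1920\cdot n\cdot \frac{5c_0 d}{n} = 9600\, c_0\, d = O(d),
\]
and the training-error bound becomes $\E_{\boldsymbol{\beta}}[\ell(A(z),z)] \le 5R(n/5) = 25c_0 d/n = O(d/n)$ for every $z$. The $n$ cancels exactly, and that cancellation is the whole point of requiring an optimal base learner. An ERM base learner, with $R(m)=O(d\log m/m)$, would only give $O(d\log n)$.

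The only things to check are bookkeeping. First, $n/5$ must be a legitimate sample size. If $n$ is not divisible by $5$, I would either read $R(n/5)$ as $R(\lfloor n/5\rfloor)$ as in the algorithm's definition, or pad or discard the remainder. Then $R(\lfloor n/5\rfloor)\le c_0 d/\lfloor n/5\rfloor \le 10c_0 d/n$ for $n\ge 5$, which changes only constants. For $d\le n<5$, the trivial bounds $\text{eCMI}\le n\log 2\le 5\log 2\cdot d$ and training error $\le 1 \le 5d/n$ suffice. Second, the guarantee of $\mathcal{B}$ must hold for every realizable distribution. This includes the empirical-type distributions that Algorithm~\ref{algo:m5m3} emulates by duplicate adjustment, which are realizable whenever $z$ is, so the hypothesis of Theorem~\ref{thm:eCMI(A)} is met. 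The main obstacle is therefore not in this corollary; all the difficulty sits in Theorem~\ref{thm:eCMI(A)}, which I assume. Here I only need to confirm that a learner with $R(m)=O(d/m)$ uniformly over realizable $\mathcal{D}$ exists, and this is supplied by \cite{HAUSSLER1994248}.
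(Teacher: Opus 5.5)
Your proposal is correct and follows the paper's proof: both obtain the theorem as a direct corollary of Theorem~\ref{thm:eCMI(A)} by choosing a base learner with $R(m)=O(d/m)$ (the paper uses the Majority-of-3 learner of \cite{pmlr-v247-aden-ali24a}, you use the one-inclusion graph of \cite{HAUSSLER1994248}, and either satisfies the hypothesis). Your bookkeeping $R(n/5)=5c_0d/n$, which yields $9600\,c_0 d$ and $25\,c_0 d/n$, is more careful than the paper's, which drops the factor $5$ from the argument $n/5$ (writing $1920\cdot n\cdot c\frac{d}{n}$ and $5c\frac{d}{n}$), though this affects only the constants.
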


\noindent
We also study distributional eCMI and present a simpler algorithm, a Majority-of-3 base learners with optimal expected error rate, that achieves the optimal guarantee on $\text{eCMI}_{\mathcal{D}}(\ell(A))$ with a simpler analysis. We discuss this in Section \ref{distributional eCMI}.

\subsection{Notation}
\noindent
Consider a fixed but arbitrary supersample $\sz = \{(x_1,y_1),\dots,(x_{2n},y_{2n})\} \in \mathcal{Z}^{2n}$ that is realizable by $\mathcal{H}$. The set of unlabeled data examples is denoted as $\sz_x = \{x_1,\dots,x_{2n}\}$.
We use $\mathcal{Z}=\mathcal{X}\times\{0,1\}$ to denote the space of labeled examples and $\mathcal{H}\subseteq \{0,1\}^{\mathcal{X}}$ to denote a concept class.
We denote by $h^*\in \mathcal{H}$ the unknown target function consistent with $\sz$.
The majority vote function is defined as $\text{Majority}(h_1,\dots,h_t)(x) = \1[h_1(x) + \dots + h_t(x) \geq \frac{t}{2}]$ for any classifiers $h_1,\dots, h_t:\mathcal{X}\rightarrow \{0,1\}$.

For random variables $X, Y$ and $Z$,
the mutual information between $X$ and $Y$ is denoted by $I(X;Y)$ and the conditional mutual information given $Z$ is denoted by $I(X;Y|Z)$.
The conditional entropy of $X$ given $Y$ is denoted by $H(X|Y)$. 
The binary entropy function\footnote{Note that we take $\log$ to denote the logarithm base 2, by a change of base, we have $h_2(p) = h_e(p)/\ln2$, thus $h_2(p) \geq h_e(p)$ for $p\in [0,1]$, hence our results extend to the binary entropy function defined with natural logarithms.} is denoted by $h(p) = -p\log p - (1-p)\log(1-p)$ for $p \in [0,1]$, or equivalently by $H(\1[.])$, the Shannon entropy of a Bernoulli random variable.
We denote by $H(\1[A(\sz_{\boldsymbol{\sigma}},\boldsymbol{\beta})(x) \neq h^*(x)] | \boldsymbol{\beta})$ the conditional binary entropy of errors of the output prediction of algorithm $A$ running on $\sz_{\boldsymbol{\sigma}}$ at any $x \in \sz_x$.
The subadditivity of Shannon entropy states that $H(X,Y) \leq H(X) + H(Y)$, and 
the chain rule for mutual information gives $I(X, Y ; Z) = I(X; Z) + I(Y ; Z|X)$.

\vspace{-8pt}\section{Our Approach}\label{Our apporach}
{\vskip -2mm}
In this section, we present a randomized learning algorithm with an optimal guarantee on the worst-case eCMI and illustrate the high level ideas of the analysis of the algorithm.

\vspace{-8pt}
\subsection{A randomized Majority-of-5 learner}

Given $n$ training data points $z \in \mathcal{Z}^n$,
the algorithm first performs a uniform random permutation $\boldsymbol{\pi}$ over $n$ training data points. Then it partitions the training data points into 5 disjoint pieces $S^{\boldsymbol{\pi}}_1,\dots,S^{\boldsymbol{\pi}}_5$ consisting of $\frac{n}{5}$ training data points each,
and adjusts for duplicates in the sequence of indices $I^{\boldsymbol{\pi}}_k$ of each disjoint data set $S_k^{\boldsymbol{\pi}}$, based on the relative frequencies of distinct values in an i.i.d. sequence of indices $\boldsymbol{U}$ from $\text{Unif}(\{1,\dots,n\})$. 
%
By doing this, each training data set is conditionally i.i.d. given $\boldsymbol{\sigma}$ from the empirical distribution on $n$ training data points. Then it runs a base learner $\mathcal{B}$ on each duplicates adjusted training data set $S^{\boldsymbol{\beta}}_1,\dots,S^{\boldsymbol{\beta}}_5$ to obtain output classifiers $h^{\boldsymbol{\beta}}_1, \dots, h^{\boldsymbol{\beta}}_5$, where $h^{\boldsymbol{\beta}}_k = \mathcal{B}(S_k^{\boldsymbol{\beta}})$ for any $k\in[5]$.
Finally, it combines the classifiers with a Majority-of-5.

We denote by $A : (\mathcal{X} \times \{0,1\})^n \to \{0,1\}^{\mathcal{X}}$ a randomized Majority-of-5 algorithm with base learner $\mathcal{B}: (\mathcal{X} \times \{0,1\})^{m} \to \{0,1\}^{\mathcal{X}}$. And $\boldsymbol{\beta} = (\boldsymbol{\pi},\boldsymbol{U})$ denotes the randomness of the algorithm, involving a random permutation $\boldsymbol{\pi}$ and a duplicates adjustment with an i.i.d. sequence $\boldsymbol{U}$.
In the following, we present the pseudo-code of the algorithm $A$ in Algorithm \ref{algo:m5m3}.

\vspace{3pt}
\begin{algorithm}[H]
\KwIn{$n$ data points $z_{}\in\Z^{n}$. Base learner $\mathcal{B}$.}

\KwResult{Classifier $h^{\boldsymbol{\beta}}: \mathcal{X} \to \{0,1\}$.}

Let $\boldsymbol{\pi}$ be uniform random permutation of $\sett{1,\dots,n}$ independent of $\boldsymbol{\sigma}$.

Let $\boldsymbol{U} \gets \{ \boldsymbol{u}_1, \boldsymbol{u}_2, \dots, \boldsymbol{u}_{\frac{n}{5}}\} \simiid \Unif{\sett{1, \dots, n} }$ independent of $\boldsymbol{\pi}$, $\boldsymbol{\sigma}$.

Let $\boldsymbol{\beta}=(\boldsymbol{\pi},\boldsymbol{U})$ denote the randomness of the algorithm.

\For{$k=1,\dots,5$}{
    Let $I^{\boldsymbol{\pi},\boldsymbol{U}}_k = \textsc{DuplicatesAdjustment}(\boldsymbol{U}, \{\boldsymbol{\pi}((k-1)\frac{n}{5}+1),\dots, \boldsymbol{\pi}(k\frac{n}{5})\})$.

    Let $S^{\boldsymbol{\beta}}_k = z_{I^{\boldsymbol{\pi},\boldsymbol{U}}_k}$ denote the training data set of the $k$-th learner.

    Let $h^{\boldsymbol{\beta}}_k = \mathcal{B}(S^{\boldsymbol{\beta}}_k)$.
}
\Return $h^{\boldsymbol{\beta}} = $Majority($h^{\boldsymbol{\beta}}_1, h^{\boldsymbol{\beta}}_2, h^{\boldsymbol{\beta}}_3, h^{\boldsymbol{\beta}}_4, h^{\boldsymbol{\beta}}_5$).
\caption{\textsc{Majority-of-5}($z_{}, \mathcal{B}$)}\label{algo:m5m3}
\end{algorithm}
\vspace{6pt}
\noindent
To analyze the eCMI of the algorithm, we define $\boldsymbol{\sigma}$ to be independent of $\boldsymbol{\beta} = (\boldsymbol{\pi},\boldsymbol{U})$. Moreover, we define the input to the algorithm as the $n$ training data points of the supersample,i.e.,$\sz_{\boldsymbol{\sigma}}$.
And we find it useful to let $\Gamma_k^{\boldsymbol{\pi}} = \{x_{\boldsymbol{\pi}((k-1)\frac{n}{5}+1))},\dots, x_{\boldsymbol{\pi}(k\frac{n}{5})} ,\dots, x_{n+\boldsymbol{\pi}((k-1)\frac{n}{5}+1))}, \dots, x_{n+\boldsymbol{\pi}(k\frac{n}{5})} \}$ denote the $k$-th part of the partition of $\sz_x$ consisting of $\frac{n}{5}$ pairs of unlabeled data points.
And we denote by $S^{\boldsymbol{\pi},\boldsymbol{\sigma}}_k = \{(x_{\boldsymbol{\sigma}({\boldsymbol{\pi}((k-1)\frac{n}{5}+1)})}, y_{\boldsymbol{\sigma}({\boldsymbol{\pi}((k-1)\frac{n}{5}+1)})}),\allowbreak \dots, \allowbreak (x_{\boldsymbol{\sigma}(\boldsymbol{\pi}(k\frac{n}{5}))}, y_{\boldsymbol{\sigma}(\boldsymbol{\pi}(k\frac{n}{5}))})\}$ the $k$-th disjoint subset of a uniform random permutation $\boldsymbol{\pi}$ of the $n$ training data points.
For every $k\in [5]$, we let $S_k^{\boldsymbol{\beta},\boldsymbol{\sigma}}$ denote the training data set of the $k$-th learner after adjusting for duplicates in $S_k^{\boldsymbol{\pi},\boldsymbol{\sigma}}$,i.e., $S^{\boldsymbol{\beta},\boldsymbol{\sigma}}_k = \sz_{\boldsymbol{\sigma}(I^{\boldsymbol{\pi},\boldsymbol{U}}_k)}$. And the output classifier of the $k$-th learner is $h^{\boldsymbol{\beta},\boldsymbol{\sigma}}_k = \mathcal{B}(S^{\boldsymbol{\beta},\boldsymbol{\sigma}}_k)$.

\vspace{-8pt}
\subsection{Proof Overview}
In this section, we give an overview of our approach.

Our approach starts with an analysis of the eCMI of a randomized learning algorithm, assuming that its internal randomness $\boldsymbol{\beta}$ is independent of the random swaps between training and ghost data points. 
Let $\sz \in \mathcal{Z}^{2n}$ be a fixed but arbitrary supersample that is realizable by $\mathcal{H}$.
We show that the eCMI of such randomized learning algorithm can be bounded by the sum of conditional entropies of errors at $2n$ data points of $\sz$. That is, 
$
\text{eCMI}(\ell(A)) \leq \sum_{i\leq 2n} H\big(\ell\big(A(\sz_{\boldsymbol{\sigma}},\boldsymbol{\beta}),\sz_{i}\big)\mid \boldsymbol{\beta}\big),
$
which is determined by the conditional probability of error over $\boldsymbol{\sigma}$ given $\boldsymbol{\beta}$ of the algorithm at $2n$ data points. 

Taking inspiration from the elements of proof of the Majority-of-3 learner in \cite{pmlr-v247-aden-ali24a} that achieves the optimal in-expectation generalization guarantee,
we define the $j$-th conditional error region of the $k$-th base learner as $R^{k,\boldsymbol{\beta}}_j \subseteq \sz_x \backslash \Gamma_k^{\boldsymbol{\pi}}$ based on the conditional probability of error of the base learner over $\boldsymbol{\sigma}$ given $\boldsymbol{\beta}$: $p^{\boldsymbol{\beta}}_k(x) = \mathbb{P}_{\boldsymbol{\sigma}|\boldsymbol{\beta}}(\mathcal{B}(S_k^{\boldsymbol{\beta},\boldsymbol{\sigma}})(x) \neq h^*(x)|\boldsymbol{\beta})$ for any $x\in \sz_x \backslash \Gamma_k^{\boldsymbol{\pi}}$.
Specifically, $R^{k,\boldsymbol{\beta}}_j$ contains all $x \in \sz_x \backslash \Gamma_k^{\boldsymbol{\pi}} $ such that $p^{\boldsymbol{\beta}}_k(x)\in(2^{-j},2^{-j+1}]$ for any $j\in \mathbb{N}$ and for any $k\in [5]$. 

In the following, we show that the constructed randomized Majority-of-5 learner outputs five base classifiers that are conditionally independent given $\boldsymbol{\beta}$ and the expected size of the conditional error region of each base learner $\mathbb{E}_{\boldsymbol{\beta}}[|R_{j}^{k,\boldsymbol{\beta}}|] \leq 2^j\cdot 2nR(\frac{n}{5})$ for any $j\in \mathbb{N}$ and for every $k\in [5]$. 
%
Moreover, if the base learner has an optimal expected error rate of order $O(\frac{d}{n})$, then the worst-case eCMI of this randomized learning algorithm is minimal, of order $O(d)$.
Next, we introduce the main ingredients of the construction of the algorithm
that achieves an optimal guarantee on the worst-case eCMI.
\vspace{-8pt}
\paragraph{Conditional independence of $S^{\boldsymbol{\beta},\boldsymbol{\sigma}}_1, S^{\boldsymbol{\beta},\boldsymbol{\sigma}}_2,\dots, S^{\boldsymbol{\beta},\boldsymbol{\sigma}}_{5}$ given $\boldsymbol{\beta}$.}
We partition the $n$ training data points into disjoint pieces such that the uniform random selection of training data sets $S^{\boldsymbol{\beta},\boldsymbol{\sigma}}_1, S^{\boldsymbol{\beta},\boldsymbol{\sigma}}_2,\dots, S^{\boldsymbol{\beta},\boldsymbol{\sigma}}_{5}$, induced by random swaps between training and ghost data points in disjoint subsets of the partition of $\sz$, are conditionally independent given $\boldsymbol{\beta}$. 
Thus, the output classifiers $h^{\boldsymbol{\beta},\boldsymbol{\sigma}}_1, \dots, h^{\boldsymbol{\beta},\boldsymbol{\sigma}}_{5}$ of the base learners trained on disjoint training data sets are conditionally independent given $\boldsymbol{\beta}$. The conditional probability over $\boldsymbol{\sigma}$ given $\boldsymbol{\beta}$ that two base learners indexed by $k,k'\in [5]$ simultaneously err at any $x\in \sz_x\backslash \Gamma_k^{\boldsymbol{\pi}}\cup\Gamma_{k'}^{\boldsymbol{\pi}} $ is therefore $p^{\boldsymbol{\beta}}_k(x)p^{\boldsymbol{\beta}}_{k'}(x)$.

\vspace{-8pt}
\paragraph{Conditionally i.i.d. training data set given $\boldsymbol{\sigma}$.} 
Since the $2n$ data points in a fixed supersample $\sz$ are arbitrary, the data points in any disjoint  subset of the partition of $\sz$ are generally non-i.i.d.
To obtain an expected error guarantee for each base learner running in disjoint subsets of the partition, we perform a uniform random permutation $\boldsymbol{\pi}$ over $n$ training data points such that each base learner observes a random sample from the $2n$ data points, with the aim that applying the standard concentration argument can yield a guarantee on the expected number of errors on data points in $\sz$, for each base learner.

However, deriving such a guarantee under sampling without replacement is technically challenging, especially for majority vote algorithms.
Nevertheless, we can reduce the problem to a setting with independent draws by introducing duplicate observations to a random sample drawn without replacement using algorithm \textsc{DuplicatesAdjustment} (Algorithm \ref{algo:dup_adjust}), thereby allowing us to invoke an i.i.d. expected error guarantee.
The algorithm samples an i.i.d. sequence $\boldsymbol{U}$ from $\Unif{\{1,\dots,n \}}$ and replaces its $\boldsymbol{N}$ distinct values with the first $\boldsymbol{N}$ distinct values of a sequence of indices $I^{\boldsymbol{\pi}}_k$ of a uniform random sample $S^{\boldsymbol{\pi},\boldsymbol{\sigma}}_k$ drawn without replacement from the $n$ training data points $\sz_{\boldsymbol{\sigma}}$.
The reconstructed sequence $I^{\boldsymbol{\pi},\boldsymbol{U}}_k $ can be proved to be an i.i.d. sequence from $\Unif{\sett{1, \dots, n} }$.
Since an empirical distribution is a discrete uniform distribution over the $n$ distinct indices, the training data set of each base learner $S^{\boldsymbol{\beta},\boldsymbol{\sigma}}_k = \sz_{\boldsymbol{\sigma}(I^{\boldsymbol{\pi},\boldsymbol{U}}_k)}$ is conditionally i.i.d. from $\Unif{\sett{x_{\boldsymbol{\sigma}(1)},\dots,x_{\boldsymbol{\sigma}(n)}}}$ given $ \boldsymbol{\sigma}$, for any $k\in[5]$.

\paragraph{Expected number of errors and expected size of the conditional error region.}
The challenge in establishing a guarantee on the expected number of errors for a base learner on data points in a fixed arbitrary supersample $\sz$ originates from the dependence between the training and ghost data points under random swaps and random permutation. Conditioning on part of the randomness enables us to exploit the resulting conditional independence.

Our key observation is that conditional on $\boldsymbol{\beta}$, the uniform random choice of the test data point $\{x_{\boldsymbol{\pi}(i)}, x_{\boldsymbol{\pi}(i)+n}\} \notin \Gamma_k^{\boldsymbol{\pi}}$ induced by $\boldsymbol{\sigma}(\boldsymbol{\pi}(i))$, is independent of the uniform random selection of the training data set $S^{\boldsymbol{\beta},\boldsymbol{\sigma}}_k$ induced by the swap variables
in the $k$-th part of the partition of $\sz$, for any $k\in [5]$. 
Thus, for the $k$-th base learner, the expected number of errors over the random swaps within the $k$-th part of the partition, among the other $\frac{4n}{5}$ pairs of data points 
is equivalent to two times the expected number of errors over the randomness of all $n$ swap variables in $\boldsymbol{\sigma}$, 
among the other $\frac{4n}{5}$ training data variables $x_{\boldsymbol{\sigma}(\boldsymbol{\pi}(i))} \notin \Gamma_k^{\boldsymbol{\pi}}$.
Hence it suffices to count the expected number of errors of the base learner over $\boldsymbol{\beta},\boldsymbol{\sigma}$ among the other $\frac{4n}{5}$ training data variables.

Recall that we run \textsc{DuplicatesAdjustment} (Algorithm \ref{algo:dup_adjust}) on the $n$ training 
data points to emulate sampling a conditionally i.i.d. training data set $S_k^{\boldsymbol{\beta},\boldsymbol{\sigma}}$ from $\Unif{\sett{x_{\boldsymbol{\sigma}(1)},\dots,x_{\boldsymbol{\sigma}(n)}}}$ given $\boldsymbol{\sigma}$.
We can therefore condition on $\boldsymbol{\sigma}$ and the standard concentration argument implies that 
the expected number of errors over $\boldsymbol{\beta}$ of the base learner among $n$ training data points
equals
the i.i.d. expected error guarantee, which is at most $nR(\frac{n}{5})$.
Since the number of errors among the other $\frac{4n}{5}$ training data points is at most that among $n$ training data points, to avoid complications due to dependencies among pairs of data points in $\Gamma^{\boldsymbol{\pi}}_k$ induced by random swaps (e.g. data points are paired in some adversarial way),
we have each base learner admit a guarantee on the expected number of errors over $\boldsymbol{\beta},\boldsymbol{\sigma}$ among the other $\frac{4n}{5}$ pairs of data points only, which is at most two times the i.i.d. guarantee on the expected number of errors of the base learner among $n$ training data points and hence at most $2nR(\frac{n}{5})$.

By the linearity of conditional expectation, the expected number of errors on $\sz_x \setminus \Gamma_k^{\boldsymbol{\pi}}$ over $\boldsymbol{\beta}, \boldsymbol{\sigma}$ equals the expectation over $\boldsymbol{\beta}$ of the sum of conditional probabilities of error over $\boldsymbol{\sigma}$ given $\boldsymbol{\beta}$ on $\sz_x \setminus \Gamma_k^{\boldsymbol{\pi}}$. With an application of Markov's inequality, we have the expected size of the $j$-th conditional error region over $\boldsymbol{\beta}$: $\mathbb{E}_{\boldsymbol{\beta}}[|R_{j}^{k,\boldsymbol{\beta}}|] \leq 2^j\cdot 2nR(\frac{n}{5})$ where $R_{j}^{k,\boldsymbol{\beta}} \subseteq \sz_x \backslash \Gamma_k^{\boldsymbol{\pi}}$ for any $j\in \mathbb{N}$.

\paragraph{Majority-of-5 base learners.}
From the above analysis, we know that the $k$-th base learner, running in the $k$-th part of the partition, might not have a guarantee on the conditional probability of error $p^{\boldsymbol{\beta}}_{k}(x)$ at data points $x\in \Gamma_k^{\boldsymbol{\pi}}$ in the $k$-th part of the partition, while it
can guarantee $p^{\boldsymbol{\beta}}_{k}(x)$ at the data points $x\notin \Gamma_k^{\boldsymbol{\pi}}$ in the other parts of the partition. 
And the base classifiers $h^{\boldsymbol{\beta},\boldsymbol{\sigma}}_1, \dots, h^{\boldsymbol{\beta},\boldsymbol{\sigma}}_5$ are conditionally independent given $\boldsymbol{\beta}$. 
Thus, for any data point in $k$-th part of the partition, it takes at least two of the other base classifiers to err when the majority vote classifier errs to get the benefit of the product of conditional probability of error over $\boldsymbol{\sigma}$ given $\boldsymbol{\beta}$.

To see why we use five base learners rather than three, we consider a worst-case scenario in which the $k$-th base learner always errs at $x$, for any data point $x\in \Gamma_k^{\boldsymbol{\pi}}$ in the $k$-th part of the partition.
Since the majority vote classifier errs when at least half of the base classifiers err,
if we have 3 base learners, then the majority vote classifier errs only if at least one of the other two base classifiers err. 
In contrast, with 5 base learners, at least two of the other 4 base classifiers must err when the majority vote classifier errs. 
Thus, the conditional probability of error of such Majority-of-5 learner can be bounded by the probability that at least two of the other 4 base learners err.
By the conditional independence of the base classifiers,
the conditional probability of error of the majority vote learner at $x$ can be bounded by a squared conditional probability of error of the base learner at $x$, which is sufficient to compensate for the exponential dependence on $j$ in the bound on the expected size of the conditional error region: $2^j\cdot 2n R(\frac{n}{5})$, yielding a convergent series of conditional binary entropies of errors at data points in the $k$-th part of the partition, for any $k\in [5]$.

\vspace{-6pt}
\paragraph{Optimal guarantee on eCMI.}
What remains to show is the guarantee on the worst-case eCMI of the constructed randomized Majority-of-5 learner. 
Without loss of generality, consider any data point $x\in \Gamma_1^{\boldsymbol{\pi}}$ in the first part of the partition of $\sz$. 
From the above analysis, we know that  
it suffices to bound the probability over random swaps that such Majority-of-5 learner errs at $x$ by the probability that there exists at least two of the other 4 base learners indexed by $k,k'\geq 2$ both err at $x$: 
$\mathbb{P}_{\boldsymbol{\sigma}|\boldsymbol{\beta}}(A(\sz_{\boldsymbol{\sigma}},\boldsymbol{\beta})(x) \neq h^*(x)|\boldsymbol{\beta}) \leq \binom{4}{2}\cdot p^{\boldsymbol{\beta}}_k(x)p^{\boldsymbol{\beta}}_{k'}(x)$.

Since the binary entropy function is bounded by a monotone non-decreasing function $f(p) = -p\log p + 2p = p\log \frac{4}{p}$ on $[0,1]$, combined with the inequality $p^{\boldsymbol{\beta}}_k(x)p^{\boldsymbol{\beta}}_{k'}(x) \leq \max\{p^{\boldsymbol{\beta}}_k(x)^2,p^{\boldsymbol{\beta}}_{k'}(x)^2\}$, we have 
the conditional entropy of errors on data points in $\Gamma_k^{\boldsymbol{\pi}}$ bounded by $2\cdot\binom{4}{2}\cdot\sum_{j=1}^{\infty}\E_{\boldsymbol{\beta}}[|R^{k,\boldsymbol{\beta}}_j|]\cdot 2^{-2j+2} \cdot 2j \leq 384 \cdot nR(\frac{n}{5})$.
By the symmetry of a uniform random permutation over $n$ training data points, the conditional entropy of errors of the randomized majority vote learner on $2n$ data points is five times that on the data points in the first part of the partition $\Gamma_1^{\boldsymbol{\pi}}$.
Hence, we have that the conditional entropy of errors given $\boldsymbol{\beta}$ on $2n$ data points is at most $1920 \cdot nR(\frac{n}{5})$, which implies that the worst-case eCMI of the randomized Majority-of-5 learner is of order $O(nR(\frac{n}{5}))$.
Specifically, if the base learner has an optimal expected error rate of order $O(\frac{d}{n})$, 
then the worst-case eCMI of the randomized Majority-of-5 learner is of order $O(d)$.

\vspace{-6pt}
\section{Reduction to Conditional entropy}
We start with the following lemma that 
the eCMI of a randomized learning algorithm can be bounded by the sum of conditional entropies of losses on a fixed arbitrary supersample.
\begin{lemma}\label{lemma:epy_of_points}
For any supersample $\sz \in \mathcal{Z}^{2n}$. We have 
$$I\big(\vec{\ell}\big(A(\sz_{\boldsymbol{\sigma}},\boldsymbol{\beta}),\sz\big);\boldsymbol{\sigma}\big) \leq H\big(\vec{\ell}\big(A(\sz_{\boldsymbol{\sigma}},\boldsymbol{\beta}),\sz\big)\mid \boldsymbol{\beta}\big) \leq \sum_{i\leq 2n} H\big(\ell\big(A(\sz_{\boldsymbol{\sigma}},\boldsymbol{\beta}),\sz_{i}\big)\mid \boldsymbol{\beta}\big).$$
\end{lemma}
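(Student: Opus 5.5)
Write $L = \vec{\ell}\big(A(\sz_{\boldsymbol{\sigma}},\boldsymbol{\beta}),\sz\big)\in\{0,1\}^{2n}$ and $L_i = \ell\big(A(\sz_{\boldsymbol{\sigma}},\boldsymbol{\beta}),\sz_i\big)$. The argument has two parts: the first inequality comes from the chain rule for mutual information and the independence of $\boldsymbol{\sigma}$ and $\boldsymbol{\beta}$, and the second from subadditivity of conditional entropy.

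\textbf{First inequality.} Since $\sz$ is fixed, $L$ is a deterministic function of $(\boldsymbol{\sigma},\boldsymbol{\beta})$, and $\boldsymbol{\sigma}$ is independent of $\boldsymbol{\beta}$. Apply the chain rule in two ways:
\[
I(L,\boldsymbol{\beta};\boldsymbol{\sigma}) = I(\boldsymbol{\beta};\boldsymbol{\sigma}) + I(L;\boldsymbol{\sigma}\mid\boldsymbol{\beta}) = I(L;\boldsymbol{\sigma}\mid\boldsymbol{\beta}),
\]
using $I(\boldsymbol{\beta};\boldsymbol{\sigma})=0$, and also
\[
I(L,\boldsymbol{\beta};\boldsymbol{\sigma}) = I(L;\boldsymbol{\sigma}) + I(\boldsymbol{\beta};\boldsymbol{\sigma}\mid L) \ge I(L;\boldsymbol{\sigma}).
\]
Together these give $I(L;\boldsymbol{\sigma})\le I(L;\boldsymbol{\sigma}\mid\boldsymbol{\beta})$. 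Next,
\[
I(L;\boldsymbol{\sigma}\mid\boldsymbol{\beta}) = H(L\mid\boldsymbol{\beta}) - H(L\mid\boldsymbol{\sigma},\boldsymbol{\beta}) = H(L\mid\boldsymbol{\beta}),
\]
because $L$ is determined by $(\boldsymbol{\sigma},\boldsymbol{\beta})$. This holds whenever $A$'s only randomness is $\boldsymbol{\beta}$, which is the case for Algorithm~\ref{algo:m5m3}. If extra internal randomness existed, the inequality $H(L\mid\boldsymbol{\sigma},\boldsymbol{\beta})\ge 0$ would still suffice, so determinism is not actually needed.

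\textbf{Second inequality.} Apply the chain rule for entropy conditionally on $\boldsymbol{\beta}$:
\[
H(L\mid\boldsymbol{\beta}) = \sum_{i\le 2n} H(L_i\mid L_1,\dots,L_{i-1},\boldsymbol{\beta}) \le \sum_{i\le 2n} H(L_i\mid\boldsymbol{\beta}),
\]
since conditioning does not increase entropy. This is the conditional form of the subadditivity noted in the notation section.

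\textbf{Main subtlety.} The only delicate point is measure-theoretic: $\boldsymbol{\beta}=(\boldsymbol{\pi},\boldsymbol{U})$ must take finitely many values so that conditional entropies are ordinary finite sums. Here $\boldsymbol{\pi}$ is a permutation of $[n]$ and $\boldsymbol{U}\in[n]^{n/5}$, so $\boldsymbol{\beta}$ is indeed finite-valued. Also $\boldsymbol{\sigma}$ is finite, and $L\in\{0,1\}^{2n}$ takes at most $2^{2n}$ values, so every quantity above is finite and the discrete identities apply directly.
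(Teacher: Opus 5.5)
Your proof is correct and follows essentially the same route as the paper. Both arguments bound $I(L;\boldsymbol{\sigma})$ by $I((L,\boldsymbol{\beta});\boldsymbol{\sigma})$ via the chain rule, reduce this to $H(L\mid\boldsymbol{\beta})$ using the independence of $\boldsymbol{\sigma}$ and $\boldsymbol{\beta}$ together with $H(L\mid\boldsymbol{\sigma},\boldsymbol{\beta})=0$, and finish with subadditivity; the only cosmetic difference is that you pass through $I(L;\boldsymbol{\sigma}\mid\boldsymbol{\beta})$, whereas the paper expands $I((L,\boldsymbol{\beta});\boldsymbol{\sigma})$ into four entropy terms and cancels $H(\boldsymbol{\beta})=H(\boldsymbol{\beta}\mid\boldsymbol{\sigma})$.
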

\begin{proof}
Consider a fixed but arbitrary $\sz \in \mathcal{Z}^{2n}$. According to the chain rule for mutual information, we have
\begin{align*}
    I\big(\vec{\ell}\big(A(\sz_{\boldsymbol{\sigma}},\boldsymbol{\beta}),\sz\big);\boldsymbol{\sigma}\big) &= I\big(\big(\vec{\ell}\big(A(\sz_{\boldsymbol{\sigma}},\boldsymbol{\beta}),\sz\big), \boldsymbol{\beta} \big) ; \boldsymbol{\sigma}\big) - I\big(\boldsymbol{\beta}; \boldsymbol{\sigma} \mid \vec{\ell}\big(A(\sz_{\boldsymbol{\sigma}},\boldsymbol{\beta}),\sz\big)\big).
\end{align*}
By the non-negativity of mutual information, the above expression is bounded by
$$
   I\big(\big(\vec{\ell}\big(A(\sz_{\boldsymbol{\sigma}},\boldsymbol{\beta}),\sz\big), \boldsymbol{\beta} \big) ; \boldsymbol{\sigma}\big).
$$
Applying the chain rule for Shannon entropy, we have the above expression equals
\begin{align*}
    &H\big(\vec{\ell}\big(A(\sz_{\boldsymbol{\sigma}},\boldsymbol{\beta}),\sz\big)\mid\boldsymbol{\beta}\big) + H\big(\boldsymbol{\beta}\big) - H\big(\vec{\ell}\big(A(\sz_{\boldsymbol{\sigma}},\boldsymbol{\beta}),\sz\big) \mid \boldsymbol{\beta}, \boldsymbol{\sigma} \big) - H\big(\boldsymbol{\beta} \mid \boldsymbol{\sigma}\big).
\end{align*}
Since $H\big(\vec{\ell}\big(A(\sz_{\boldsymbol{\sigma}},\boldsymbol{\beta}),\sz\big) \mid \boldsymbol{\beta}, \boldsymbol{\sigma} \big) = 0$ and $H(\boldsymbol{\beta}) = H(\boldsymbol{\beta}\mid \boldsymbol{\sigma})$, 
we have the above expression equals
\begin{align*}
    &H\big(\vec{\ell}\big(A(\sz_{\boldsymbol{\sigma}},\boldsymbol{\beta}),\sz\big)\mid\boldsymbol{\beta}\big).
\end{align*}
By the subadditivity of Shannon entropy, we conclude
\begin{align*}
H\big(\vec{\ell}\big(A(\sz_{\boldsymbol{\sigma}},\boldsymbol{\beta}),\sz\big)\mid\boldsymbol{\beta}\big) \leq \sum_{i\leq 2n} H\big(\ell\big(A(\sz_{\boldsymbol{\sigma}},\boldsymbol{\beta}),\sz_{i}\big)\mid\boldsymbol{\beta}\big),
\end{align*}
as claimed.
\end{proof}

\section{Guarantee on the expected size of the conditional error region}
In this section, we introduce the \textsc{DuplicatesAdjustment} algorithm that constructs a conditionally i.i.d. data set $S_k^{\boldsymbol{\beta},\boldsymbol{\sigma}}$ given $\boldsymbol{\sigma}$ from the empirical distribution of $n$ training data points. 

The developed algorithm forms the basis of the proof of Lemma \ref{lemma:expected_error}, which derives a guarantee for each base learner on the expected number of errors over $\boldsymbol{\beta},\boldsymbol{\sigma}$ among the other $\frac{4n}{5}$ pairs of data points in a fixed arbitrary realizable supersample $\sz \in \mathcal{Z}^{2n}$.

In Lemma \ref{expected_err_region}, we prove that such expected error guarantee of the base learner yields a guarantee on the expected size of its conditional error regions, which is one of the ingredients for a more elaborate analysis on the conditional entropy of errors of the randomized Majority-of-5 learner in the next section.

\subsection{Duplicates adjustment}
The algorithm adjusts for duplicates in a uniform random sample $S^{\boldsymbol{\pi},\boldsymbol{\sigma}}_k$ drawn without replacement from the $n$ training data points $\sz_{\boldsymbol{\sigma}}$ by 
sampling an i.i.d. sequence $\boldsymbol{U}$ from $\Unif{\{1,\dots,n \}}$ and replacing its $N$ distinct values with the first $N$ distinct values of the sequence of indices $I^{\boldsymbol{\pi}}_k$ of $S^{\boldsymbol{\pi},\boldsymbol{\sigma}}_k$.
Since an empirical distribution on $n$ training data points is a discrete uniform distribution over the $n$ distinct indices, if we can construct an i.i.d. index sequence $I^{\boldsymbol{\pi},\boldsymbol{U}}_k$
from $\Unif{\sett{1, \dots, n} }$, then $S_k^{\boldsymbol{\beta},\boldsymbol{\sigma}} = \sz_{\boldsymbol{\sigma}(I^{\boldsymbol{\pi},\boldsymbol{U}}_k)}$  is a conditionally i.i.d. data set from $\Unif{\sett{x_{\boldsymbol{\sigma}(1)},\dots,x_{\boldsymbol{\sigma}(n)}}}$ given $\boldsymbol{\sigma}$.

The motivation for this algorithm is to convert a uniform random sample drawn without replacement into an i.i.d. sample such that each base learner, running on a disjoint subset of the partition of $\sz$, can have an i.i.d. guarantee on the expected number of errors on all realizable supersamples $\sz \in \mathcal{Z}^{2n}$, thus yielding a guarantee on the worst-case eCMI for the randomized majority vote learner.
We present the pseudo-code of the algorithm, shown in Algorithm \ref{algo:dup_adjust}.\\

\begin{algorithm}[H]
\KwIn{$\{u_1,\dots,u_{\frac{n}{5}}\}$, $\{k_1,\dots,k_{\frac{n}{5}}\}$.}
\KwResult{$\{i_1,\dots,i_{\frac{n}{5}}\}$.}
Let $N$ be the number of distinct values in $\{u_1, u_2, \dots, u_{\frac{n}{5}}\}$.

Let $v_1,\dots,v_N$ be these distinct values (in order of first appearance).

\For{$t=1,\dots,\frac{n}{5}$}{
    \For{$s = 1, \dots, N$}{
        \If{$u_t == v_s$}{
            define $i_{t}=k_s$.
        }
    }
}
\Return{$\{i_1,\dots,i_{\frac{n}{5}}\}$}.
\caption{\textsc{DuplicatesAdjustment}($\{u_1,\dots,u_{\frac{n}{5}}\}$, $\{k_1,\dots,k_{\frac{n}{5}}\}$)}\label{algo:dup_adjust}
\end{algorithm}
\paragraph{Brief Overview.}
Before diving into the formal proof, we introduce some notations and briefly discuss some steps of Algorithm \ref{algo:dup_adjust} along with some high level ideas in the analysis.
Let $\boldsymbol{U}= ( \boldsymbol{u}_1, \boldsymbol{u}_2, \dots, \boldsymbol{u}_{\frac{n}{5}}) \simiid \Unif{\sett{1, \dots, n} }$ denote an i.i.d. sequence of indices.
Let $\boldsymbol{N}$ be the number of distinct indices in $\boldsymbol{U}$.
Let $(\boldsymbol{v}_1,\dots,\boldsymbol{v}_{\boldsymbol{N}})$ be the sequence of distinct values in $\boldsymbol{U}$. 
Let $I^{\boldsymbol{\pi}}_k = (\boldsymbol{\pi}((k-1)\frac{n}{5}+1),\dots, \boldsymbol{\pi}(k\frac{n}{5}))$
denote the sequence of indices of a uniform random sample $S^{\boldsymbol{\pi},\boldsymbol{\sigma}}_k$ drawn without replacement from the $n$ training data points $\sz_{\boldsymbol{\sigma}}$.
Let $I^{\boldsymbol{\pi},\boldsymbol{U}}_k $ denote the sequence of indices after the duplicates adjustment of the distinct values in $I^{\boldsymbol{\pi}}_k$, based on the relative frequencies of the distinct values in an i.i.d. sequence $\boldsymbol{U} \simiid \Unif{\{1,…,n\}}$.

We will use $\boldsymbol{U}$ to adjust the duplicates of distinct values in $I^{\boldsymbol{\pi}}_k$ to produce the sequence of indices $I^{\boldsymbol{\pi},\boldsymbol{U}}_k$. Concretely, the for-loop of Algorithm \ref{algo:dup_adjust} in steps 3-9 replaces the $s$th distinct value $v_s$ in an i.i.d. index sequence ${U}$ with the $s$th distinct value $k_s$ in the uniform random index sequence $I^{\boldsymbol{\pi}}_k$ drawn without replacement from $\{1,\dots,n\}$, for every $s\in [N]$. At a high level, the algorithm replaces the $N$ distinct values $(v_1,\dots,v_N)$ in $U$ with the first $N$ distinct values of $I_k^{\boldsymbol{\pi}}$. 

The intuition behind the algorithm builds on the observation that a sample is uniquely characterized by its empirical distribution and the ordering of the sample values. Equivalently, it is determined by its distinct values, a histogram recording the relative frequencies of the distinct values and the ordering of the sample values. 
More formally, for any i.i.d. sequence $(\boldsymbol{u}_1, \boldsymbol{u}_2, \dots, \boldsymbol{u}_{\frac{n}{5}}) \in \{1,\dots,n\}^{\frac{n}{5}}$,
we define $j_i^{\boldsymbol{u}}$ as the index $j$ such that 
$\boldsymbol{u}_i = \boldsymbol{v}_j$ for all $i\leq \frac{n}{5}$. 
Then $G(\boldsymbol{U}) = (j_1^{\boldsymbol{u}},\dots,j_{\frac{n}{5}}^{\boldsymbol{u}})$ is an index sequence that consists of the indices of the distinct values that appears at each position in $\boldsymbol{U}$, thereby encoding both the histogram of relative frequencies and the ordering of the values in $\boldsymbol{U}$. 
For any sequence of distinct values $(w_1, \dots, w_{\boldsymbol{N}})$,
we define a function $f (G(\boldsymbol{U}), w_1, \dots, w_{\boldsymbol{N}})=(w_{j_1^{\boldsymbol{u}}},\dots,w_{j_{\frac{n}{5}}^{\boldsymbol{u}}})$ that constructs a sequence whose distinct values are $(w_1,\dots,w_{\boldsymbol{N}})$, with its histogram of relative frequencies and the ordering of the values specified by $G(\boldsymbol{U})$.
In particular, we have $\boldsymbol{U} = f(G(\boldsymbol{U}), \boldsymbol{v}_1, \dots,\boldsymbol{v}_{\boldsymbol{N}})$ and $I^{\boldsymbol{\pi},\boldsymbol{U}}_k = f(G(\boldsymbol{U}), \boldsymbol{\pi}((k-1)\frac{n}{5}+1),\dots, \boldsymbol{\pi}((k-1)\frac{n}{5}+ \boldsymbol{N}))$.

In the following lemma, we prove that 
the reconstructed sequence of indices $I^{\boldsymbol{\pi},\boldsymbol{U}}_k = (i_1,\dots,i_{\frac{n}{5}})$ has the same distribution as an i.i.d. sequence drawn from $\Unif{\sett{1, \dots, n} }$, implying that $S_k^{\boldsymbol{\beta},\boldsymbol{\sigma}}=\sz_{\boldsymbol{\sigma}(I^{\boldsymbol{\pi},\boldsymbol{U}}_k)}$  is a conditionally i.i.d. data set from $\Unif{\sett{x_{\boldsymbol{\sigma}(1)},\dots,x_{\boldsymbol{\sigma}(n)}}}$ given $\boldsymbol{\sigma}$.

\begin{lemma}\label{lemma:duplicates_adjust}
By adjusting for duplicates in a uniform random sample $S^{\boldsymbol{\pi},\boldsymbol{\sigma}}_k$ of size $\frac{n}{5}$ drawn without replacement from the $n$ training data points, the training data set $S_k^{\boldsymbol{\beta},\boldsymbol{\sigma}}$ of size $\frac{n}{5}$ is conditionally i.i.d. from the empirical distribution of $n$ training data points given $\boldsymbol{\sigma}$. That is, for any $k\in[5]$,
\begin{align*}
        S_k^{\boldsymbol{\beta},\boldsymbol{\sigma}} | \boldsymbol{\sigma} \simiid \Unif{\sett{x_{\boldsymbol{\sigma}(1)},\dots,x_{\boldsymbol{\sigma}(n)}}} | \boldsymbol{\sigma}.
    \end{align*}
\end{lemma}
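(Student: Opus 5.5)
The plan is to show that the index sequence $I^{\boldsymbol{\pi},\boldsymbol{U}}_k$ is itself i.i.d. $\Unif{\sett{1,\dots,n}}$ and independent of $\boldsymbol{\sigma}$. Since $S_k^{\boldsymbol{\beta},\boldsymbol{\sigma}} = \sz_{\boldsymbol{\sigma}(I^{\boldsymbol{\pi},\boldsymbol{U}}_k)}$ is a deterministic function of $\boldsymbol{\sigma}$ and this index sequence, conditioning on $\boldsymbol{\sigma}$ then gives coordinates $\sz_{\boldsymbol{\sigma}(i_t)}$ that are i.i.d. uniform over the $n$ training points, which is the claim. Independence from $\boldsymbol{\sigma}$ is immediate, because $\boldsymbol{\beta}=(\boldsymbol{\pi},\boldsymbol{U})$ is independent of $\boldsymbol{\sigma}$ by construction. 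So the substance is the distributional identity
\[
f\big(G(\boldsymbol{U}), \boldsymbol{\pi}((k-1)\tfrac{n}{5}+1),\dots,\boldsymbol{\pi}((k-1)\tfrac{n}{5}+\boldsymbol{N})\big) \overset{d}{=} \boldsymbol{U} = f\big(G(\boldsymbol{U}),\boldsymbol{v}_1,\dots,\boldsymbol{v}_{\boldsymbol{N}}\big),
\]
using the representation of both sequences through $f$ and $G$ from the overview.

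To prove this, I would condition on $G(\boldsymbol{U})=g$, which fixes the pattern and hence $\boldsymbol{N}=N_g$. The key claim is that, conditionally on $G(\boldsymbol{U})=g$, the tuple of distinct values $(\boldsymbol{v}_1,\dots,\boldsymbol{v}_{N_g})$ is uniformly distributed over ordered $N_g$-tuples of distinct elements of $\{1,\dots,n\}$. This holds because, for any fixed distinct tuple $w$, the event $\{G(\boldsymbol{U})=g,\ \boldsymbol{v}=w\}$ is exactly the single sequence $\{\boldsymbol{U}=f(g,w)\}$, which has probability $n^{-n/5}$ regardless of $w$. Equivalently, the bijection $(g,w)\leftrightarrow f(g,w)$ pushes the uniform measure on $\{1,\dots,n\}^{n/5}$ to a product form, so the conditional law of $w$ is uniform.

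Next, because $\boldsymbol{\pi}$ is a uniform permutation independent of $\boldsymbol{U}$, the first $N_g$ entries of the $k$-th block, $(\boldsymbol{\pi}((k-1)\frac{n}{5}+s))_{s\le N_g}$, form a uniform ordered tuple of distinct indices. This uses $N_g\le n/5$, so the entries stay inside the block. This tuple is also independent of $G(\boldsymbol{U})$. Hence the pair $(G(\boldsymbol{U}),\text{first } \boldsymbol{N} \text{ entries})$ has the same joint law as $(G(\boldsymbol{U}),\boldsymbol{v})$. Applying $f$ to both pairs gives equality in distribution between $I^{\boldsymbol{\pi},\boldsymbol{U}}_k$ and $\boldsymbol{U}$, which is i.i.d. uniform. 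It remains to check that the output of \textsc{DuplicatesAdjustment} equals $f(G(\boldsymbol{U}),k_1,\dots,k_N)$; this is direct from the loop, which sets $i_t=k_s$ exactly when $u_t=v_s$, i.e.\ when $s=j^{\boldsymbol{u}}_t$.

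The main obstacle is stating the conditional uniformity of $\boldsymbol{v}$ given $G(\boldsymbol{U})$ cleanly; the counting argument above (each pattern–tuple pair corresponds to one sequence) should handle it. Finally, the joint independence of $I^{\boldsymbol{\pi},\boldsymbol{U}}_k$ from $\boldsymbol{\sigma}$ yields the conditional statement for every $k\in[5]$.
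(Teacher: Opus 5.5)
Your proposal is correct and follows essentially the same route as the paper. Both arguments write $I^{\boldsymbol{\pi},\boldsymbol{U}}_k$ and $\boldsymbol{U}$ as $f$ applied to $G(\boldsymbol{U})$ and an $\boldsymbol{N}$-tuple of distinct values, show that the two tuples have the same conditional law given the pattern, and then push this through $f$. One refinement: your counting argument (each pair $(g,w)$ corresponds to the single sequence $f(g,w)$, which has probability $n^{-n/5}$) proves the conditional uniformity of $(\boldsymbol{v}_1,\dots,\boldsymbol{v}_{\boldsymbol{N}})$ given $G(\boldsymbol{U})$, which the paper only asserts.
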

\begin{proof}

Since $S_1^{\boldsymbol{\beta},\boldsymbol{\sigma}},\dots,S_5^{\boldsymbol{\beta},\boldsymbol{\sigma}}$ has the same conditional distribution given $\boldsymbol{\sigma}$, without loss of generality, we consider $k=1$.
To show that the training data set $S_1^{\boldsymbol{\beta},\boldsymbol{\sigma}}$ is conditionally i.i.d. given $\boldsymbol{\sigma}$ from the empirical distribution of $n$ training data points, it suffices to show that the sequence of indices $I^{\boldsymbol{\pi},\boldsymbol{U}}_1$  of the training data set is i.i.d. $\Unif{\sett{1, \dots, n}}$. 

Recall that the algorithm adjusts for duplicates in $I^{\boldsymbol{\pi}}_1$
by replacing the $\boldsymbol{N}$ distinct values $(\boldsymbol{v}_1,\dots,\boldsymbol{v}_{\boldsymbol{N}})$ in an i.i.d. sequence $\boldsymbol{U}$ with the first $\boldsymbol{N}$ distinct values $(\boldsymbol{\pi}(1),\dots,\boldsymbol{\pi}(\boldsymbol{N}))$ in $I^{\boldsymbol{\pi}}_1$.
To show that the reconstructed sequence $I_1^{\boldsymbol{\pi},\boldsymbol{U}}$ is i.i.d. $\Unif{\{1,\dots,n\}}$, we first characterize the distribution of $I_1^{\boldsymbol{\pi},\boldsymbol{U}}$.
Indeed, $I_1^{\boldsymbol{\pi},\boldsymbol{U}}$ is constructed by first sampling a random histogram of relative frequencies of distinct values and the ordering of the values from the conditional distribution of $G(\boldsymbol{U})$ given the number of the distinct values $\boldsymbol{N}$. Conditional on $\boldsymbol{N}$, a sequence of $\boldsymbol{N}$ distinct values $(\boldsymbol{\pi}(1),\dots,\boldsymbol{\pi}(\boldsymbol{N}))$ is drawn uniformly at random without replacement from $\{1,\dots,n\}$, independently of $G(\boldsymbol{U})$.
Assigning these distinct values to $G(\boldsymbol U)$ yields the reconstructed sequence $I_1^{\boldsymbol{\pi},\boldsymbol U}$.

By the chain rule of probability, the above sampling process can be formalized as sampling sequentially from the following sequence of conditional distributions.
Therefore,
the reconstructed sequence of indices $I_1^{\boldsymbol{\pi},\boldsymbol{U}}$ is indeed sampled from the marginal distribution of the following joint distribution,
\begin{align*}
P_{I_1^{\boldsymbol{\pi},\boldsymbol{U}},G(\boldsymbol{U}),\boldsymbol{N}} = P_{I_1^{\boldsymbol{\pi},\boldsymbol{U}} | G(\boldsymbol{U}),\boldsymbol{N}} \cdot P_{G(\boldsymbol{U}) | \boldsymbol{N}} \cdot P_{\boldsymbol{N}}.
\end{align*}
To show that $I_1^{\boldsymbol{\pi},\boldsymbol{U}}$ has the same distribution as the i.i.d. sequence $\boldsymbol{U}$,
it suffices to establish $P_{I_1^{\boldsymbol{\pi},\boldsymbol{U}} | G(\boldsymbol{U}),\boldsymbol{N}}=P_{\boldsymbol{U}| G(\boldsymbol{U}),\boldsymbol{N}}$. 

Note that because $(\boldsymbol{\pi}(1),\dots,\boldsymbol{\pi}(\boldsymbol{N}))$ consists of the first $\boldsymbol{N}$ elements of a uniform random permutation, they form a uniform random sample drawn without replacement from $\{1,\dots,n\}$ given $\boldsymbol{N}$. 
Similarly, the sequence of distinct values $(\boldsymbol{v}_1, \dots,\boldsymbol{v}_{\boldsymbol{N}})$ of an i.i.d. sequence $\boldsymbol{U}$ also form a uniform random sample without replacement from $\sett{1,\dots,n}$ given $\boldsymbol{N}$. It follows that
\begin{equation}
P_{\boldsymbol{\pi}(1),\dots,\boldsymbol{\pi}(\boldsymbol{N})|\boldsymbol{N}}=P_{\boldsymbol{v}_1, \dots,\boldsymbol{v}_{\boldsymbol{N}}|\boldsymbol{N}}.
\label{eq:pi-v}
\end{equation}
Since $\boldsymbol{U}= ( \boldsymbol{u}_1, \boldsymbol{u}_2, \dots, \boldsymbol{u}_{\frac{n}{5}})$ is a sequence of i.i.d. $\Unif{\sett{1, \dots, n} }$ random variables, we know that $G(\boldsymbol{U})$ is conditionally independent of the uniform random choice of the $\boldsymbol{N}$ distinct values $(\boldsymbol{v}_1,\dots,\boldsymbol{v}_{\boldsymbol{N}})$ given $\boldsymbol{N}$. Moreover, by the construction of the algorithm, we also have that $G(\boldsymbol{U})$ is conditionally independent of the uniform random sequence of distinct values $(\boldsymbol{\pi}(1),\dots,\boldsymbol{\pi}(\boldsymbol{N}))$  drawn without replacement given $\boldsymbol{N}$. 
Therefore, \eqref{eq:pi-v} implies that
\begin{align}
\label{eq:cond_indep}
P_{\boldsymbol{\pi}(1),\dots,\boldsymbol{\pi}(\boldsymbol{N})|G(\boldsymbol{U}),\boldsymbol{N}} = P_{\boldsymbol{v}_1, \dots,\boldsymbol{v}_{\boldsymbol{N}}|G(\boldsymbol{U}),\boldsymbol{N}}.
\end{align}
Note that by definition, both $I^{\boldsymbol{\pi},\boldsymbol{U}}_1$ and $\boldsymbol{U}$ are obtained by applying the same function $f$ to $(\boldsymbol{\pi}(1),\dots,\boldsymbol{\pi}(\boldsymbol{N}))$ and $(\boldsymbol{v}_1, \dots,\boldsymbol{v}_{\boldsymbol{N}})$ respectively.  Together with~\eqref{eq:cond_indep}, this immediately implies that
$$
P_{I^{\boldsymbol{\pi},\boldsymbol{U}}_1|G(\boldsymbol{U}),\boldsymbol{N}} = P_{\boldsymbol{U}|G(\boldsymbol{U}),\boldsymbol{N}}.
$$
Altogether, we can conclude that $(I_1^{\boldsymbol{\pi},\boldsymbol{U}},G(\boldsymbol{U}),\boldsymbol{N})$ and $(\boldsymbol{U},G(\boldsymbol{U}),\boldsymbol{N})$ have identical joint distributions, which implies that the reconstructed sequence of indices $I_{1}^{\boldsymbol{\pi, U}}$ has an identical marginal distribution as the i.i.d. sequence $\boldsymbol{U}$. 
%

Since an empirical distribution on $n$ training data points is equivalent to a uniform distribution over $n$ distinct indices,
we have that the training data set $S_1^{\boldsymbol{\beta},\boldsymbol{\sigma}} = \sz_{\boldsymbol{\sigma}(I^{\boldsymbol{\pi},\boldsymbol{U}}_1)} $ is conditionally i.i.d. from $\Unif{\sett{x_{\boldsymbol{\sigma}(1)},\dots,x_{\boldsymbol{\sigma}(n)}}}$ given $\boldsymbol{\sigma}$.
This concludes the proof of the lemma.
\end{proof}

\subsection{Expected number of errors}
In the following lemma, we show that, conditional on $\boldsymbol{\beta}$, for each base learner, its expected number of errors over the random swaps in the $k$-th part of the partition, among the other $\frac{4n}{5}$ pairs of data points is equivalent to two times the expected number of errors over random swaps of all $n$ swap variables among the other $\frac{4n}{5}$ training data variables. 

Since the training data set $S_k^{\boldsymbol{\beta},\boldsymbol{\sigma}}$ is conditionally i.i.d. given $\boldsymbol{\sigma}$ after duplicates adjustment, we can provide a guarantee on the expected number of errors of each base learner over $\boldsymbol{\beta}$ given $\boldsymbol{\sigma}$ among the other $\frac{4n}{5}$ pairs of data points, that is at most two times the i.i.d. guarantee on the expected number errors of the base learner among the $n$ training data points.
\newpage

\begin{lemma}\label{lemma:expected_error}
      The expected number of errors of the $k$-th base learner $\mathcal{B}$ over random swaps, random permutation and duplicates adjustment, among the other $\frac{4n}{5}$ pairs of data points that are not in $\Gamma_{k}^{\boldsymbol{\pi}}$, for any $k\in[5]$ is
     \[
     \E_{\boldsymbol{\beta}, \boldsymbol{\sigma}}\Bigg[\sum_{x\notin \Gamma_{k}^{\boldsymbol{\pi}}} \1 \left[\mathcal{B}\left(S_k^{\boldsymbol{\beta},\boldsymbol{\sigma}}\right)\left(x\right) \neq h^*\left(x\right)\right]\Bigg] \leq 2nR\left(\frac{n}{5}\right).
     \]
\end{lemma}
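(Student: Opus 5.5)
We reduce the sum over the $\frac{8n}{5}$ points of $\sz_x\setminus\Gamma_k^{\boldsymbol{\pi}}$ to a sum over training data variables, and then invoke the i.i.d. guarantee through Lemma \ref{lemma:duplicates_adjust}.

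\textbf{Step 1 (reduction to training points).} Fix $\boldsymbol{\beta}=(\boldsymbol{\pi},\boldsymbol{U})$. Write $\boldsymbol{\sigma}=(\boldsymbol{\sigma}_{\mathrm{in}},\boldsymbol{\sigma}_{\mathrm{out}})$, where $\boldsymbol{\sigma}_{\mathrm{in}}$ are the swap variables $\boldsymbol{\sigma}(\boldsymbol{\pi}(t))$ for $t$ in the $k$-th block, and $\boldsymbol{\sigma}_{\mathrm{out}}$ are the remaining ones. Since $S_k^{\boldsymbol{\beta},\boldsymbol{\sigma}}=\sz_{\boldsymbol{\sigma}(I_k^{\boldsymbol{\pi},\boldsymbol{U}})}$ and $I_k^{\boldsymbol{\pi},\boldsymbol{U}}$ only contains indices $\boldsymbol{\pi}(t)$ from the $k$-th block, the classifier $\mathcal{B}(S_k^{\boldsymbol{\beta},\boldsymbol{\sigma}})$ is a function of $(\boldsymbol{\beta},\boldsymbol{\sigma}_{\mathrm{in}})$ alone, and is independent of $\boldsymbol{\sigma}_{\mathrm{out}}$. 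For each index $i=\boldsymbol{\pi}(t)$ outside the block, $\boldsymbol{\sigma}(i)$ is uniform on $\{i,i+n\}$ and independent of the classifier. Therefore
\[
\1[h(x_i)\neq h^*(x_i)]+\1[h(x_{i+n})\neq h^*(x_{i+n})] = 2\,\E_{\boldsymbol{\sigma}(i)}\big[\1[h(x_{\boldsymbol{\sigma}(i)})\neq h^*(x_{\boldsymbol{\sigma}(i)})]\big].
\]
Summing over the $\frac{4n}{5}$ pairs and taking expectations, the left side of the lemma equals
\[
2\,\E_{\boldsymbol{\beta},\boldsymbol{\sigma}}\Big[\sum_{i\notin \mathrm{block}_k}\1[\mathcal{B}(S_k^{\boldsymbol{\beta},\boldsymbol{\sigma}})(x_{\boldsymbol{\sigma}(i)})\neq h^*(x_{\boldsymbol{\sigma}(i)})]\Big].
\]
This quantity is at most $2\,\E_{\boldsymbol{\sigma}}\E_{\boldsymbol{\beta}\mid\boldsymbol{\sigma}}\big[\sum_{i=1}^n \1[\mathcal{B}(S_k^{\boldsymbol{\beta},\boldsymbol{\sigma}})(x_{\boldsymbol{\sigma}(i)})\neq h^*(x_{\boldsymbol{\sigma}(i)})]\big]$, because adding the in-block training points only adds nonnegative terms. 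We use the sum over all $n$ points precisely to avoid the dependence between in-block points and $S_k$.

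\textbf{Step 2 (i.i.d. guarantee).} Condition on $\boldsymbol{\sigma}$ and let $\mathcal{D}_{\boldsymbol{\sigma}}$ be the uniform distribution on the $n$ labeled training points $\sz_{\boldsymbol{\sigma}(1)},\dots,\sz_{\boldsymbol{\sigma}(n)}$, counting multiplicity by index. This distribution is realizable by $h^*$ because $\sz$ is realizable. The inner sum equals $n\cdot \text{er}_{\mathcal{D}_{\boldsymbol{\sigma}}}(\mathcal{B}(S_k^{\boldsymbol{\beta},\boldsymbol{\sigma}}))$. By Lemma \ref{lemma:duplicates_adjust}, $S_k^{\boldsymbol{\beta},\boldsymbol{\sigma}}\mid\boldsymbol{\sigma}\sim \mathcal{D}_{\boldsymbol{\sigma}}^{n/5}$ over $\boldsymbol{\beta}$; this uses that $\boldsymbol{\beta}$ is independent of $\boldsymbol{\sigma}$. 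Applying the hypothesis on $\mathcal{B}$ with $m=\frac{n}{5}$ then gives $\E_{\boldsymbol{\beta}\mid\boldsymbol{\sigma}}[\cdot]\le nR(\frac{n}{5})$. Averaging over $\boldsymbol{\sigma}$ and multiplying by $2$ yields the bound $2nR(\frac{n}{5})$.

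\textbf{Main obstacle.} The delicate point is Step 1. We must verify carefully that $S_k^{\boldsymbol{\beta},\boldsymbol{\sigma}}$ depends on $\boldsymbol{\sigma}$ only through the in-block swaps, including after duplicates adjustment, since $I_k^{\boldsymbol{\pi},\boldsymbol{U}}$ takes its values among $\boldsymbol{\pi}$ of the block. Only then does the pair-to-single-point identity hold conditionally on $(\boldsymbol{\beta},\boldsymbol{\sigma}_{\mathrm{in}})$. A second point to handle is that the "empirical distribution" must be taken over indices, so that repeated points in $\sz$ cause no issue.
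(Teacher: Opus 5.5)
Your proposal is correct and follows essentially the same route as the paper. You condition on $\boldsymbol{\beta}$, use the independence of the out-of-block swaps from $\mathcal{B}(S_k^{\boldsymbol{\beta},\boldsymbol{\sigma}})$ to turn each pair into twice the error at the training variable, enlarge the sum to all $n$ training points, swap the order of expectation, and apply Lemma \ref{lemma:duplicates_adjust} together with the i.i.d.\ guarantee $R(\frac{n}{5})$; your explicit split $\boldsymbol{\sigma}=(\boldsymbol{\sigma}_{\mathrm{in}},\boldsymbol{\sigma}_{\mathrm{out}})$ is just a cleaner statement of the conditional independence the paper invokes.
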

\begin{proof}
Fix any realizable $\sz \in \mathcal{Z}^{2n}$.
Without loss of generality, consider the 5th base learner with training data set $S^{\boldsymbol{\beta},\boldsymbol{\sigma}}_5$ in the 5th part of the partition of $\sz$.

We first condition on permutations and duplicates adjustments. Then we have that the expected number of errors of the base learner over random swaps in the 5th part of the partition, among the other $\frac{4n}{5}$ pairs of data points $\{x_{\boldsymbol{\pi}(i)}, x_{\boldsymbol{\pi}(i)+n}\} \notin \Gamma_5^{\boldsymbol{\pi}}$, for all $i\in [\frac{4n}{5}]$, is
\begin{align*}
    &\E_{\boldsymbol{\beta}}\Bigg[\E_{\boldsymbol{\sigma} | \boldsymbol{\beta}}\bigg[\sum_{x \notin \Gamma_5^{\boldsymbol{\pi}}} \1\bigg[\mathcal{B}\big(S_5^{\boldsymbol{\beta},\boldsymbol{\sigma}}\big)\left(x\right) \neq h^*\left(x\right) \bigg] \bigg| \boldsymbol{\beta}\bigg]\Bigg]\\
    =&\E_{\boldsymbol{\beta}}\left[\E_{\boldsymbol{\sigma} | \boldsymbol{\beta}}\Bigg[ \sum_{i=1}^{\frac{4n}{5}} \1 \left[\mathcal{B}\left(S_5^{\boldsymbol{\beta},\boldsymbol{\sigma}}\right)\left(x_{\boldsymbol{\pi}\left(i\right)}\right) \neq h^*\left(x_{\boldsymbol{\pi}\left(i\right)}\right)\right] + \1 \left[\mathcal{B}\left(S_5^{\boldsymbol{\beta},\boldsymbol{\sigma}}\right)\left(x_{\boldsymbol{\pi}(i)+n}\right) \neq h^*\left(x_{\boldsymbol{\pi}(i)+n}\right)\right] \bigg| \boldsymbol{\beta}\Bigg]\right].
\end{align*}
By the linearity of conditional expectation, we can focus on the sum of conditional probabilities of error of the base learner over $\boldsymbol{\sigma}$ given $\boldsymbol{\beta}$ at $\frac{4n}{5}$ pairs of test data points.
Since $x_{\boldsymbol{\pi}(i)}$ or $x_{\boldsymbol{\pi}(i)+n}$ is swapped to the training side with probability $\frac{1}{2}$, 
we have the above expression equals
\begin{multline*}
 \E_{\boldsymbol{\beta}}\Bigg[\sum_{i=1}^{\frac{4n}{5}} \E_{\boldsymbol{\sigma} | \boldsymbol{\beta}}\bigg[2\mathbb{P}_{\boldsymbol{\sigma}\left(\boldsymbol{\pi}\left(i\right)\right) | S^{\boldsymbol{\beta},\boldsymbol{\sigma}}_5,\boldsymbol{\beta}}\left(\boldsymbol{\sigma}\left(\boldsymbol{\pi}\left(i\right)\right) = \boldsymbol{\pi}(i) \big| S^{\boldsymbol{\beta},\boldsymbol{\sigma}}_5,\boldsymbol{\beta}\right) 
 \cdot \1 \left[ \mathcal{B}\left(S_5^{\boldsymbol{\beta},\boldsymbol{\sigma}}\right)\left(x_{\boldsymbol{\pi}\left(i\right)}\right) \neq h^*\left(x_{\boldsymbol{\pi}\left(i\right)}\right)\right] +\\
2\mathbb{P}_{\boldsymbol{\sigma}\left(\boldsymbol{\pi}\left(i\right)\right) | S^{\boldsymbol{\beta},\boldsymbol{\sigma}}_5,\boldsymbol{\beta}}\left(\boldsymbol{\sigma}\left(\boldsymbol{\pi}\left(i\right)\right) = \boldsymbol{\pi}(i)+n \big| S^{\boldsymbol{\beta},\boldsymbol{\sigma}}_5,\boldsymbol{\beta}\right)\cdot\1 \left[\mathcal{B}\left(S_5^{\boldsymbol{\beta},\boldsymbol{\sigma}}\right)\left(x_{\boldsymbol{\pi}(i)+n}\right) \neq h^*\left(x_{\boldsymbol{\pi}(i)+n}\right)\right] \bigg|\boldsymbol{\beta}\bigg]\Bigg].
\end{multline*}
Note that conditional on $\boldsymbol{\beta}$, the uniform random choice of the test data point $x_{\boldsymbol{\sigma}\left(\boldsymbol{\pi}\left(i\right)\right)} \notin \Gamma_5^{\boldsymbol{\pi}}$ is independent of the uniform random selection of the training data set $S_5^{\boldsymbol{\beta},\boldsymbol{\sigma}}$ induced by the swap variables in $\Gamma_5^{\boldsymbol{\pi}}$, for all $i\in [\frac{4n}{5}]$, we can write
\begin{multline*}
\label{eq:cond_indep_sigma}
2\cdot \E_{\boldsymbol{\beta}}\Bigg[\sum_{i=1}^{\frac{4n}{5}} \E_{\boldsymbol{\sigma}|\boldsymbol{\beta}}\bigg[ \E_{\boldsymbol{\sigma}(\boldsymbol{\pi}(i))|S^{\boldsymbol{\beta},\boldsymbol{\sigma}}_5, \boldsymbol{\beta}}\bigg[\1 \Big[\boldsymbol{\sigma}\left(\boldsymbol{\pi}\left(i\right)\right) = \boldsymbol{\pi}(i) \Big] \cdot \1 \left[ \mathcal{B}\left(S_5^{\boldsymbol{\beta},\boldsymbol{\sigma}}\right)\left(x_{\boldsymbol{\pi}\left(i\right)}\right) \neq h^*\left(x_{\boldsymbol{\pi}\left(i\right)}\right) \right] +\\
       \1\Big[\boldsymbol{\sigma}\left(\boldsymbol{\pi}\left(i\right)\right) = \boldsymbol{\pi}(i)+n\Big] \cdot \1 \left[\mathcal{B}\left(S_5^{\boldsymbol{\beta},\boldsymbol{\sigma}}\right)\left(x_{\boldsymbol{\pi}(i)+n}\right) \neq h^*\left(x_{\boldsymbol{\pi}(i)+n}\right)\right]  \bigg| S^{\boldsymbol{\beta},\boldsymbol{\sigma}}_5, \boldsymbol{\beta}\bigg] \bigg| \boldsymbol{\beta} \bigg] \Bigg].
\end{multline*}
The above expression can be further simplified to \eqref{eq:cond_expectation}, where the inner conditional expectation of the 0-1 loss over $\boldsymbol{\sigma}(\boldsymbol{\pi}(i))$ given $S_5^{\boldsymbol{\beta},\boldsymbol{\sigma}},\boldsymbol{\beta}$ is equal to the expected 0-1 loss of a fixed arbitrary classifier $h_5^{\boldsymbol{\beta},\boldsymbol{\sigma}}$ at $x_{\boldsymbol{\sigma}(\boldsymbol{\pi}(i))}$ over the random swaps of $\boldsymbol{\sigma}(\boldsymbol{\pi}(i))$.
And the outer conditional expectation over $\boldsymbol{\sigma}$ given $\boldsymbol{\beta}$ is to average over all classifiers  $h_5^{\boldsymbol{\beta},\boldsymbol{\sigma}}$ output by the base learner under random swaps, such expected 0-1 loss at $x_{\boldsymbol{\sigma}\left(\boldsymbol{\pi}\left(i\right)\right)}$, for any $i \in [\frac{4n}{5}]$.
\begin{equation}\label{eq:cond_expectation}
        2\cdot \E_{\boldsymbol{\beta}}\left[\sum_{i=1}^{\frac{4n}{5}} \E_{\boldsymbol{\sigma}|\boldsymbol{\beta}}\left[\E_{{\boldsymbol{\sigma}(\boldsymbol{\pi}(i))|S^{\boldsymbol{\beta},\boldsymbol{\sigma}}_5, \boldsymbol{\beta}}}\left[\1 \left[ \mathcal{B}\left(S_5^{\boldsymbol{\beta},\boldsymbol{\sigma}}\right)(x_{\boldsymbol{\sigma}(\boldsymbol{\pi}(i))}) \neq h^*(x_{\boldsymbol{\sigma}(\boldsymbol{\pi}(i))})\right] \bigg|S^{\boldsymbol{\beta},\boldsymbol{\sigma}}_5,\boldsymbol{\beta}\right]\bigg| \boldsymbol{\beta}\right]\right].
\end{equation}
By the law of total conditional expectation and linearity of conditional expectations, 
we have that the above expression equals two times the expected number of errors of the base learner over $\boldsymbol{\beta}, \boldsymbol{\sigma}$ among the other $\frac{4n}{5}$ training data variables, as shown in \eqref{eq:exp_training errors}. 
Hence it suffices to count the expected number of errors on the training side.
\begin{align*}
2\cdot\E_{\boldsymbol{\beta}}\left[\sum_{i=1}^{\frac{4n}{5}} \E_{\boldsymbol{\sigma}|\boldsymbol{\beta}}\left[\1 \left[ \mathcal{B}\left(S_5^{\boldsymbol{\beta},\boldsymbol{\sigma}}\right)\left(x_{\boldsymbol{\sigma}(\boldsymbol{\pi}(i))}\right) \neq h^*\left(x_{\boldsymbol{\sigma}(\boldsymbol{\pi}(i))}\right)\right] \bigg| \boldsymbol{\beta}\right] \right]
\end{align*}
\begin{equation}\label{eq:exp_training errors}
=2\cdot\E_{\boldsymbol{\beta}}\left[\E_{\boldsymbol{\sigma}|\boldsymbol{\beta}}\Bigg[\sum_{i=1}^{\frac{4n}{5}} \1 \left[ \mathcal{B}\left(S_5^{\boldsymbol{\beta},\boldsymbol{\sigma}}\right)(x_{\boldsymbol{\sigma}(\boldsymbol{\pi}(i))}) \neq h^*(x_{\boldsymbol{\sigma}(\boldsymbol{\pi}(i))})\right] \bigg|\boldsymbol{\beta} \Bigg]\right].
\end{equation}
By Fubini's theorem, we can swap the order of expectation and condition on $\boldsymbol{\sigma}$. This gives,
\begin{align*}
        &2\cdot\E_{\boldsymbol{\sigma}}\left[\E_{\boldsymbol{\beta}| \boldsymbol{\sigma}}\Bigg[\sum_{i=1}^{\frac{4n}{5}} \1 \left[ \mathcal{B}\left(S_5^{\boldsymbol{\beta},\boldsymbol{\sigma}}\right)(x_{\boldsymbol{\sigma}(\boldsymbol{\pi}(i))}) \neq h^*(x_{\boldsymbol{\sigma}(\boldsymbol{\pi}(i))}) \right] \bigg|\boldsymbol{\sigma} \Bigg] \right].
\end{align*}
Since the expected number of errors over $\boldsymbol{\beta}$ given $\boldsymbol{\sigma}$ among $\frac{4n}{5}$ training data points is at most that among $n$ training data points, it follows that the above expression is bounded by
\begin{align*}
        &2\cdot\E_{\boldsymbol{\sigma}}\left[\E_{\boldsymbol{\beta}| \boldsymbol{\sigma}}\bigg[\sum_{i=1}^{n} \1 \left[ \mathcal{B}\left(S_5^{\boldsymbol{\beta},\boldsymbol{\sigma}}\right)(x_{\boldsymbol{\sigma}(\boldsymbol{\pi}(i))}) \neq h^*(x_{\boldsymbol{\sigma}(\boldsymbol{\pi}(i))})\right] \bigg|\boldsymbol{\sigma}\bigg]\right].
\end{align*}
Recall that the randomness of $\boldsymbol{\beta}$ comes from the random permutation $\boldsymbol{\pi}$ and the duplicates adjustment with an i.i.d. sequence $\boldsymbol{U}$ drawn from $\Unif{\{1,\dots,n\}}$.
According to Lemma $\ref{lemma:duplicates_adjust}$, after adjusting for duplicates in $S_5^{\boldsymbol{\pi},\boldsymbol{\sigma}}$,
we have a conditionally i.i.d. data set $S_5^{\boldsymbol{\beta},\boldsymbol{\sigma}}$ from $\Unif{\sett{x_{\boldsymbol{\sigma}(1)},\dots,x_{\boldsymbol{\sigma}(n)}}}$ given $\boldsymbol{\sigma}$.
%
Therefore, the above expression is bounded by two times the i.i.d. guarantee on the expected number of errors of the base learner among $n$ training data points, as follows:
\begin{align*}
        &2n \cdot \E_{\boldsymbol{\sigma}}\left[\E_{\boldsymbol{\beta} | \boldsymbol{\sigma}}\bigg[\frac{1}{n} \sum_{i=1}^{n} \1 \left[ \mathcal{B}\left(S_5^{\boldsymbol{\beta},\boldsymbol{\sigma}}\right)(x_{\boldsymbol{\sigma}(\boldsymbol{\pi}(i))}) \neq h^*(x_{\boldsymbol{\sigma}(\boldsymbol{\pi}(i))})\right]\bigg|\boldsymbol{\sigma}\bigg]\right]\\
        &=2n \cdot \E_{\boldsymbol{\sigma}}\left[\E_{\boldsymbol{\beta}| \boldsymbol{\sigma}}\left[\text{er}_{\text{Unif}\left(x_{\boldsymbol{\sigma}(1)},\dots,x_{\boldsymbol{\sigma}(n)}\right) | \boldsymbol{\sigma}} \left(\mathcal{B}\left(S_5^{\boldsymbol{\beta},\boldsymbol{\sigma}}\right) \right)\bigg|\boldsymbol{\sigma}\right]\right]\\
        &\leq 2n \cdot \E_{\boldsymbol{\sigma}}\left[R\left(\frac{n}{5}\right)\right]\\
        &= 2nR\left(\frac{n}{5}\right).
\end{align*}
as claimed.
\end{proof}

\subsection{Expected size of the conditional error region}
In the following lemma, we apply a Markov's inequality to derive a guarantee on the expected size of the conditional error regions for each base learner, using the guarantee on the expected number of errors among the other $\frac{4n}{5}$ pairs of data points, estabilished in Lemma \ref{lemma:expected_error}.
\begin{lemma}\label{expected_err_region}
   Define a partition of $\sz_x \setminus \Gamma_k^{\boldsymbol{\pi}}$ based on the conditional probability of error of the $k$-th base learner over random swaps given permutations and duplicates adjustments. Consider the following set for any $k\in [5]$ and any $j\in \mathbb{N}$:
   \begin{align*}
     R_{j}^{k,\boldsymbol{\beta}} &:= \left\{ x\notin \Gamma_k^{\boldsymbol{\pi}}: \mathbb{P}_{\boldsymbol{\sigma}|\boldsymbol{\beta}}\left(\mathcal{B}\left(S_k^{\boldsymbol{\beta},\boldsymbol{\sigma}}\right)\left(x\right) \neq h^*\left(x\right)\big|\boldsymbol{\beta} \right)\in \left(2^{-j},2^{-j+1}\right]\right\}.
    \end{align*}
    Then we have that the expected size of the $j$-th conditional error region of the $k$-th base learner over random permutation and duplicates adjustment is at most $2^j\cdot 2nR\left(\frac{n}{5}\right)$. That is, for every $j\in \mathbb{N}$,
    \[
    \mathbb{E}_{\boldsymbol{\beta}}\big[\big|R_{j}^{k,\boldsymbol{\beta}}\big|\big] \leq 2^j\cdot 2nR\left(\frac{n}{5}\right).
    \]
\end{lemma}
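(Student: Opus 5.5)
The plan is to derive the statement directly from Lemma~\ref{lemma:expected_error} by a pointwise Markov-type (level-set) argument. Fix a realizable $\sz$ and $k\in[5]$. For each realization of $\boldsymbol{\beta}$ and each $x\in \sz_x\setminus\Gamma_k^{\boldsymbol{\pi}}$, write
\[
p_k^{\boldsymbol{\beta}}(x)=\mathbb{P}_{\boldsymbol{\sigma}|\boldsymbol{\beta}}\big(\mathcal{B}(S_k^{\boldsymbol{\beta},\boldsymbol{\sigma}})(x)\neq h^*(x)\mid\boldsymbol{\beta}\big).
\]
By the definition of $R_j^{k,\boldsymbol{\beta}}$, every $x$ in it satisfies $p_k^{\boldsymbol{\beta}}(x)>2^{-j}$. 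Hence, pointwise in $\boldsymbol{\beta}$,
\[
\big|R_j^{k,\boldsymbol{\beta}}\big| \le \sum_{x\in R_j^{k,\boldsymbol{\beta}}} 2^{j}\,p_k^{\boldsymbol{\beta}}(x)\le 2^j\sum_{x\notin\Gamma_k^{\boldsymbol{\pi}}} p_k^{\boldsymbol{\beta}}(x),
\]
where the second inequality uses nonnegativity of the remaining terms. The sum is over the multiset of $\frac{8n}{5}$ points of $\sz_x$ outside $\Gamma_k^{\boldsymbol{\pi}}$, counted the same way as in Lemma~\ref{lemma:expected_error}. If $\sz_x$ has repeated points, I will treat $R_j^{k,\boldsymbol{\beta}}$ as a set of indices so that the two counts match.

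Next, I rewrite the right-hand side using linearity of conditional expectation. Since $\Gamma_k^{\boldsymbol{\pi}}$ is $\boldsymbol{\beta}$-measurable,
\[
\sum_{x\notin\Gamma_k^{\boldsymbol{\pi}}} p_k^{\boldsymbol{\beta}}(x)=\E_{\boldsymbol{\sigma}|\boldsymbol{\beta}}\Big[\sum_{x\notin\Gamma_k^{\boldsymbol{\pi}}}\1[\mathcal{B}(S_k^{\boldsymbol{\beta},\boldsymbol{\sigma}})(x)\neq h^*(x)]\;\Big|\;\boldsymbol{\beta}\Big].
\]
Taking $\E_{\boldsymbol{\beta}}$ of both sides and using the tower property, the expected count equals the quantity bounded in Lemma~\ref{lemma:expected_error}. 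That lemma gives at most $2nR(\frac{n}{5})$, so $\E_{\boldsymbol{\beta}}[|R_j^{k,\boldsymbol{\beta}}|]\le 2^j\cdot 2nR(\frac{n}{5})$ for every $j\in\mathbb{N}$.

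The only point needing care is checking that the conditional probability defining $R_j^{k,\boldsymbol{\beta}}$ is taken over exactly the same randomness as in Lemma~\ref{lemma:expected_error}. That randomness is $\boldsymbol{\sigma}$ given $\boldsymbol{\beta}$, with $\boldsymbol{\sigma}$ independent of $\boldsymbol{\beta}$. Once this is confirmed, the tower-property step is legitimate and the argument is essentially Markov's inequality applied pointwise, so I do not expect a substantive obstacle.
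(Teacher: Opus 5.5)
Your proposal is correct and follows the paper's proof: it bounds $\1\big[p_k^{\boldsymbol{\beta}}(x)\in(2^{-j},2^{-j+1}]\big]$ pointwise by $2^j p_k^{\boldsymbol{\beta}}(x)$, then uses $\boldsymbol{\beta}$-measurability of $\1[x_i\notin\Gamma_k^{\boldsymbol{\pi}}]$, linearity of conditional expectation and the tower property to turn $\E_{\boldsymbol{\beta}}\big[\sum_{x\notin\Gamma_k^{\boldsymbol{\pi}}}p_k^{\boldsymbol{\beta}}(x)\big]$ into the expected error count bounded in Lemma~\ref{lemma:expected_error}. Your choice to index by $i\in[2n]$ to handle repeated points is exactly how the paper writes the sum.
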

\begin{proof}
Fix any realizable $\sz \in \mathcal{Z}^{2n}$.
The number of data points with conditional probability of error in the range $\left(2^{-j},2^{-j+1}\right]$ is at most that with conditional probability of error at least $2^{-j}$ for any $j\in \mathbb{N}$. Thus, we have the expected size of the $j$-th conditional error region of $k$-th base learner bounded by
\begin{align*}
    \mathbb{E}_{\boldsymbol{\beta}}\big[\big|R_{j}^{k,\boldsymbol{\beta}}\big|\big]&= \mathbb{E}_{\boldsymbol{\beta}}\bigg[\sum_{x \notin \Gamma_k^{\boldsymbol{\pi}}} \1\left[\mathbb{P}_{\boldsymbol{\sigma}| \boldsymbol{\beta}}\left(\mathcal{B}\left(S_k^{\boldsymbol{\beta},\boldsymbol{\sigma}}\right)\left(x\right) \neq h^*(x)\big| \boldsymbol{\beta}\right) \in \left(2^{-j},2^{-j+1}\right]\right]\bigg]\\
    &\leq \mathbb{E}_{\boldsymbol{\beta}}\bigg[\sum_{x \notin \Gamma_k^{\boldsymbol{\pi}}} \mathbbm{1}\left[\mathbb{P}_{\boldsymbol{\sigma}| \boldsymbol{\beta}}\left(\mathcal{B}\left(S_k^{\boldsymbol{\beta},\boldsymbol{\sigma}}\right)\left(x\right) \neq h^*(x)| \boldsymbol{\beta}\right) > 2^{-j}\right]\bigg].
\end{align*}
Note that both the training data set $S_k^{\boldsymbol{\beta},\boldsymbol{\sigma}}$ and test data points $x\in \sz_x\textbackslash\Gamma_k^{\boldsymbol{\pi}}$ are dependent on random permutation $\boldsymbol{\pi}$. With an application of Markov's inequality to the indicator function, the above expression can be bounded by
\begin{align*}
     &\mathbb{E}_{\boldsymbol{\beta}}\bigg[\sum_{x \notin \Gamma_k^{\boldsymbol{\pi}}} 2^{j}\cdot \mathbb{P}_{\boldsymbol{\sigma}| \boldsymbol{\beta}}\left(\mathcal{B}\left(S_k^{\boldsymbol{\beta},\boldsymbol{\sigma}}\right)\left(x\right) \neq h^*(x)\big| \boldsymbol{\beta}\right)\bigg].
\end{align*}
Note that $\1\left[x_i\notin \Gamma_k^{\boldsymbol{\pi}}\right]$ is ${\sigma}(\boldsymbol{\beta})$-measurable. By the linearity of conditional expectation, we have the above expression equals
\begin{align*}
   &\mathbb{E}_{\boldsymbol{\beta}}\bigg[\sum_{i=1}^{2n} 2^{j}\cdot\1\big[x_i\notin \Gamma_k^{\boldsymbol{\pi}}\big]\cdot \mathbb{P}_{\boldsymbol{\sigma}|\boldsymbol{\beta}}\left(\mathcal{B}\left(S_k^{\boldsymbol{\beta},\boldsymbol{\sigma}}\right)\left(x_i\right) \neq h^*(x_i) \Big|\boldsymbol{\beta}\right)\bigg]\\
   &= 2^{j}\cdot\mathbb{E}_{\boldsymbol{\beta}}\Bigg[ \E_{\boldsymbol{\sigma}|  \boldsymbol{\beta}}\bigg[\sum_{i=1}^{2n}\1\Big[x_i\notin \Gamma_k^{\boldsymbol{\pi}}\Big]\1\left[\mathcal{B}\left(S_k^{\boldsymbol{\beta},\boldsymbol{\sigma}}\right)\left(x_i\right) \neq h^*(x_i)\right] \Big| \boldsymbol{\beta}\bigg]\Bigg].
\end{align*}
By the law of total expectation, we have the above expression equivalent to $2^j$ times the guarantee on the expected number of errors of the base learner over $\boldsymbol{\beta},\boldsymbol{\sigma}$ among the other $\frac{4n}{5}$ pairs of data points, as follows:
\begin{align*}
&2^{j}\cdot\mathbb{E}_{\boldsymbol{\beta},\boldsymbol{\sigma}}\Bigg[\sum_{i=1}^{2n}\1\Big[x_i\notin \Gamma_k^{\boldsymbol{\pi}}\Big]\1\left[\mathcal{B}\left(S_k^{\boldsymbol{\beta},\boldsymbol{\sigma}}\right)\left(x_i\right) \neq h^*(x_i)\right] \Bigg]\\
    &= 2^j\cdot \E_{\boldsymbol{\beta}, \boldsymbol{\sigma}}\bigg[\sum_{x\notin \Gamma_k^{\boldsymbol{\pi}}} \1 \left[\mathcal{B}(S^{\boldsymbol{\beta},\boldsymbol{\sigma}}_k)(x) \neq h^*(x)\right]\bigg].
\end{align*}
According to Lemma~\ref{lemma:expected_error}, we have the above expression bounded by
$2^j\cdot 2nR(\frac{n}{5})$ as claimed.
\end{proof}

\section{Conditional entropy of a randomized Majority-of-5 learner}
With all the ingredients above, in the following lemma, we derive a guarantee on the conditional entropy of errors of the randomized Majority-of-5 learner on a fixed arbitrary realizable supersample $\sz\in \mathcal{Z}^{2n}$, providing an upper bound on the worst-case eCMI of the learner.
\begin{lemma}\label{lemma:cond enty of maj-of-5}
The conditional entropy of errors of the output classification of a randomized Majority-of-5 learner $A$ given permutations and duplicates adjustments, on $2n$ data points in a realizable supersample $\sz \in \mathcal{Z}^{2n}$ is
\begin{align*}
    \sum_{i\leq 2n} H\Big(\1\big[A(\sz_{\boldsymbol{\sigma}},\boldsymbol{\beta})\left(x_i\right) \neq h^*(x_i)\big] \big| \boldsymbol{\beta}\Big) = O\left(nR\left(\frac{n}{5}\right)\right).
\end{align*}
\end{lemma}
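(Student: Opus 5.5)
We bound the sum by splitting the $2n$ points according to which part $\Gamma_k^{\boldsymbol{\pi}}$ of the partition they fall into. Since $\Gamma_1^{\boldsymbol{\pi}},\dots,\Gamma_5^{\boldsymbol{\pi}}$ partition $\sz_x$, and the distribution of the algorithm is symmetric in the index $k$ under a uniform random permutation, it suffices to bound
$$\E_{\boldsymbol{\beta}}\Big[\sum_{x\in\Gamma_1^{\boldsymbol{\pi}}} h\big(q^{\boldsymbol{\beta}}(x)\big)\Big],\qquad q^{\boldsymbol{\beta}}(x):=\mathbb{P}_{\boldsymbol{\sigma}|\boldsymbol{\beta}}\big(A(\sz_{\boldsymbol{\sigma}},\boldsymbol{\beta})(x)\neq h^*(x)\mid\boldsymbol{\beta}\big),$$
and multiply by $5$. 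Here we use $H(\1[\cdot]\mid\boldsymbol{\beta})=\E_{\boldsymbol{\beta}}[h(q^{\boldsymbol{\beta}}(x))]$, and membership of $x_i$ in $\Gamma_1^{\boldsymbol{\pi}}$ is $\boldsymbol{\beta}$-measurable. This lets us write everything as an expectation over $\boldsymbol{\beta}$ of a sum over $i\le 2n$ with the indicator $\1[x_i\in\Gamma_k^{\boldsymbol{\pi}}]$, and then sum over $k$.

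\textbf{Pointwise bound.} Fix $\boldsymbol{\beta}$ and $x\in\Gamma_1^{\boldsymbol{\pi}}$. The classifiers $h_2^{\boldsymbol{\beta},\boldsymbol{\sigma}},\dots,h_5^{\boldsymbol{\beta},\boldsymbol{\sigma}}$ depend on swap variables in the disjoint parts $\Gamma_2^{\boldsymbol{\pi}},\dots,\Gamma_5^{\boldsymbol{\pi}}$, so they are conditionally independent given $\boldsymbol{\beta}$. Moreover $x\notin\Gamma_k^{\boldsymbol{\pi}}$ for $k\ge2$, so each $p_k^{\boldsymbol{\beta}}(x)$ is the quantity controlled by Lemma~\ref{expected_err_region}. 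If the majority errs at $x$, at least $3$ of the $5$ base classifiers err, hence at least two among indices $2,\dots,5$ err. A union bound then gives
$$q^{\boldsymbol{\beta}}(x)\le\sum_{2\le k<k'\le5}p_k^{\boldsymbol{\beta}}(x)p_{k'}^{\boldsymbol{\beta}}(x)\le 6\max_{k\ge2}p_k^{\boldsymbol{\beta}}(x)^2.$$
Next use $h(p)\le f(p)=p\log\frac4p$, with $f$ nondecreasing on $[0,1]$, and subadditivity-type monotonicity. This gives $h(q^{\boldsymbol{\beta}}(x))\le f\big(\min\{1,6\max_k p_k^2\}\big)\le\sum_{k\ge2}f\big(\min\{1,6(p_k^{\boldsymbol{\beta}}(x))^2\}\big)$, where the last step uses that $f\ge0$ and a max is at most a sum.

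\textbf{Dyadic decomposition.} For each $k\ge2$, split the points $x\in\Gamma_1^{\boldsymbol{\pi}}\subseteq\sz_x\setminus\Gamma_k^{\boldsymbol{\pi}}$ by which region $R_j^{k,\boldsymbol{\beta}}$ contains them. Points with $p_k=0$ contribute nothing. On $R_j^{k,\boldsymbol{\beta}}$ we have $6p_k^2\le 6\cdot 2^{-2j+2}$, and $f(6\cdot 2^{-2j+2})=O(j\,2^{-2j})$ for large $j$ (the finitely many small $j$ are capped by $f\le 2$ at argument $1$, also $O(j2^{-2j})$ up to constants). Hence
$$\E_{\boldsymbol{\beta}}\Big[\sum_{x\in\Gamma_1^{\boldsymbol{\pi}}}h(q^{\boldsymbol{\beta}}(x))\Big]\le\sum_{k=2}^5\sum_{j\ge1}\E_{\boldsymbol{\beta}}\big[|R_j^{k,\boldsymbol{\beta}}|\big]\cdot C\,j\,2^{-2j}.$$
Lemma~\ref{expected_err_region} bounds each expectation by $2^j\cdot2nR(\frac n5)$. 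The series then becomes $\sum_j 2Cj2^{-j}\,nR(\frac n5)=O(nR(\frac n5))$. Summing over the five parts gives the claim, with explicit constants obtainable as in the overview (about $1920$).

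\textbf{Main obstacle.} The delicate step is justifying the symmetry and conditional-independence claims. Given $\boldsymbol{\beta}$, the training set $S_k^{\boldsymbol{\beta},\boldsymbol{\sigma}}$ is a function only of the swaps $\boldsymbol{\sigma}(\boldsymbol{\pi}(i))$ for $i$ in the $k$-th block, because the indices $I_k^{\boldsymbol{\pi},\boldsymbol{U}}$ are drawn from the $k$-th block of $\boldsymbol{\pi}$. These blocks are disjoint and the swaps are independent, which yields independence of the $h_k$. We also need to check that the regions $R_j^{k,\boldsymbol{\beta}}$ are defined exactly on $\sz_x\setminus\Gamma_k^{\boldsymbol{\pi}}$, which contains $\Gamma_1^{\boldsymbol{\pi}}$, so that Lemma~\ref{expected_err_region} applies directly without needing the symmetry argument. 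Indeed, one can avoid symmetry altogether by running the argument above for each $k\in[5]$ separately and summing.
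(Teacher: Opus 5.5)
Your proposal is correct and follows essentially the same route as the paper. Like the paper, you restrict to points in one part $\Gamma_1^{\boldsymbol{\pi}}$ and use that a Majority-of-5 error forces at least two of the other four base classifiers to err, and these are conditionally independent given $\boldsymbol{\beta}$. You then bound $h$ by the monotone $f(p)=p\log\frac{4}{p}$, reduce the products to squares, and sum over the dyadic regions using Lemma~\ref{expected_err_region} to get the convergent $\sum_j j2^{-j}$ series. The differences are cosmetic: you collapse the six pairwise products into $6\max_{k\ge 2}(p_k^{\boldsymbol{\beta}})^2$ with an explicit cap at $1$ before applying $f$, rather than splitting $f$ pair by pair. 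You also note, correctly, that summing directly over all five parts removes the need for the symmetry argument.
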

\begin{proof}
Fix any realizable $\sz \in \mathcal{Z}^{2n}$.
Since the binary entropy function $h(p)= -p\log p - (1-p)\log (1-p)$ is bounded by the function $f(p)= -p\log p + 2p = p\log\frac{4}{p}$ on $[0,1]$, we have the conditional entropy of errors given $\boldsymbol{\beta}$ of $A$ on the supersample bounded by
\begin{equation}
    \sum_{i\leq 2n}\E_{\boldsymbol{\beta}}\Bigg[\mathbb{P}_{\boldsymbol{\sigma} | \boldsymbol{\beta}}\left(A(\sz_{\boldsymbol{\sigma}},\boldsymbol{\beta})(x_i)\neq h^*(x_i) | \boldsymbol{\beta}\right)  \log \frac{4}{ \mathbb{P}_{\boldsymbol{\sigma} | \boldsymbol{\beta}}\left(A(\sz_{\boldsymbol{\sigma}},\boldsymbol{\beta})(x_i)\neq h^*(x_i) | \boldsymbol{\beta}\right)  } \Bigg].
\label{eq:cond_entropy}
\end{equation}
Without loss of generality, we consider the first part of the partition $\Gamma_1^{\boldsymbol{\pi}}$ of the supersample $\sz$.
For any $x_i \in \Gamma_1^{\boldsymbol{\pi}}$ in the first part of the partition of $\sz$,
we define the event $L_i(h_k^{\boldsymbol{\beta},\boldsymbol{\sigma}})$ over $S_k^{\boldsymbol{\beta},\boldsymbol{\sigma}}$ that the $k$-th base learner errs at $x_i$:
\begin{align*}
    L_i\left(h_k^{\boldsymbol{\beta},\boldsymbol{\sigma}}\right) := \left\{ h_k^{\boldsymbol{\beta},\boldsymbol{\sigma}}(x_i)\neq h^*(x_i)\right\}.
\end{align*}
From the analysis in Lemma \ref{lemma:expected_error}, we know that the $k$-th base learner running in the $k$-th part of the partition might not have a guarantee on the conditional probability of error over random swaps at data points in the $k$-th partition. 
Nevertheless, at least two of the other four base classifiers are not trained on the data points in the $k$-th part of the partition.
In fact, we have that at least two of the other four base classifiers must err when a majority-of-5 classifier errs.

Therefore, 
we can bound the conditional probability of error over $\boldsymbol{\sigma}$ given $\boldsymbol{\beta}$ of the Majority-of-5 learner at $x_i$, by the probability that at least two of the other four base learners both err at $x_i$. That is, for any $x_i \in \Gamma_1^{\boldsymbol{\pi}}$,
\begin{multline*}
    \mathbb{P}_{\boldsymbol{\sigma} | \boldsymbol{\beta}}\left(A(\sz_{\boldsymbol{\sigma}},\boldsymbol{\beta})(x_i)\neq h^*(x_i) | \boldsymbol{\beta}\right) \leq
    \mathbb{P}_{\boldsymbol{\sigma} | \boldsymbol{\beta}}\left(\exists\, 2\leq k'<k\leq 5: h_k^{\boldsymbol{\beta},\boldsymbol{\sigma}}(x_i)\neq h^*(x_i) \wedge h_{k'}^{\boldsymbol{\beta},\boldsymbol{\sigma}}(x_i)\neq h^*(x_i) |\boldsymbol{\beta} \right).
\end{multline*}
By a union bound, we have
\begin{align*}
    \mathbb{P}_{\boldsymbol{\sigma} | \boldsymbol{\beta}}\left(A(\sz_{\boldsymbol{\sigma}},\boldsymbol{\beta})(x_i)\neq h^*(x_i) | \boldsymbol{\beta}\right)
    \leq 
    \sum_{\substack{2 \leq k'<k \leq 5}}\mathbb{P}_{\boldsymbol{\sigma}| \boldsymbol{\beta}}\left(L_i(h_k^{\boldsymbol{\beta},\boldsymbol{\sigma}}) \wedge 
    L_i(h_{k'}^{\boldsymbol{\beta},\boldsymbol{\sigma}})|\boldsymbol{\beta}\right).
\end{align*}
By the symmetry of a uniform random permutation over $\sz_{\boldsymbol{\sigma}}$,
the conditional entropy of errors on $2n$ data points is five times that on the data points in $\Gamma^{\boldsymbol{\pi}}_1$. And $f(p) = p\log \frac{4}{p}$ is monontone non-decreasing on $[0,1]$, we have \eqref{eq:cond_entropy} bounded by
\begin{equation}
     5 \sum_{i\leq 2n}\E_{\boldsymbol{\beta}}\Bigg[\1[x_i\in \Gamma^{\boldsymbol{\pi}}_1 ](\sum_{\substack{2 \leq k'<k \leq 5}}\mathbb{P}_{\boldsymbol{\sigma}| \boldsymbol{\beta}}(L_i(h_k^{\boldsymbol{\beta},\boldsymbol{\sigma}}) \wedge 
    L_i(h_{k'}^{\boldsymbol{\beta},\boldsymbol{\sigma}})|\boldsymbol{\beta})) \log \frac{4}{ \min \{1, \sum_{\substack{2 \leq k'<k \leq 5}}\mathbb{P}_{\boldsymbol{\sigma}| \boldsymbol{\beta}}(L_i\left(h_k^{\boldsymbol{\beta},\boldsymbol{\sigma}}\right) \wedge L_i(h_{k'}^{\boldsymbol{\beta},\boldsymbol{\sigma}})|\boldsymbol{\beta}) \} }\Bigg].
\label{eq:error_first_block}
\end{equation}
Define the $j$-th conditional error region on the data points in the first part of the partition $\Gamma^{\boldsymbol{\pi}}_1$ based on the conditional probability of error of the $k$-th base learner over $\boldsymbol{\sigma}$ given $\boldsymbol{\beta}$:
\begin{align*}
    R^{k,\boldsymbol{\beta},1}_j := \left\{x \in \Gamma^{\boldsymbol{\pi}}_1: \mathbb{P}_{\boldsymbol{\sigma}|\boldsymbol{\beta}}(h_k^{\boldsymbol{\beta},\boldsymbol{\sigma}}(x) \neq h^*(x) | \boldsymbol{\beta}) \in (2^{-j}, 2^{-j+1}] \right\}.
\end{align*}
Let $j^{\boldsymbol{\beta}}_k(x_i)$ denote the index of the conditional error region such that the conditional probability of error of the $k$-th base learner at $x_i \in \Gamma^{\boldsymbol{\pi}}_1 $, is in the interval $(2^{-j^{\boldsymbol{\beta}}_k(x_i)}, 2^{-j^{\boldsymbol{\beta}}_k(x_i)+1}]$. 
By the linearity of expectation and the conditional independence of $S_k^{\boldsymbol{\beta},\boldsymbol{\sigma}}, S_{k'}^{\boldsymbol{\beta},\boldsymbol{\sigma}}$ given $\boldsymbol{\beta}$, we have \eqref{eq:error_first_block} bounded by
\begin{align*}
    &5 \sum_{i\leq 2n}\E_{\boldsymbol{\beta}}\Big[\1[x_i\in \Gamma^{\boldsymbol{\pi}}_1]
    (\sum_{\substack{2 \leq k'<k \leq 5}} 2^{-j^{\boldsymbol{\beta}}_k(x_i) -j^{\boldsymbol{\beta}}_{k'}(x_i)+2}) 
    \log \frac{4}{ \min\{1, \sum_{\substack{2 \leq k'<k \leq 5}} 2^{-j^{\boldsymbol{\beta}}_k(x_i)-j^{\boldsymbol{\beta}}_{k'}(x_i) +2 }\}}\Big]\\
    &\leq 5 \sum_{\substack{2 \leq k'<k \leq 5}} \E_{\boldsymbol{\beta}}\Big[\sum_{x_i \in \Gamma^{\boldsymbol{\pi}}_1 }
    (2^{-j^{\boldsymbol{\beta}}_k(x_i) -j^{\boldsymbol{\beta}}_{k'}(x_i)+2} ) 
    \log \frac{4}{2^{-j^{\boldsymbol{\beta}}_k(x_i)-j^{\boldsymbol{\beta}}_{k'}(x_i)+2}}\Big].
\end{align*}
Combined with the inequality $xy\leq \max\{x^2 , y^2\}$, the above expression is bounded by
\begin{align*}
    &5\cdot \binom{4}{2} \cdot \E_{\boldsymbol{\beta}}\Big[\sum_{x_i \in \Gamma^{\boldsymbol{\pi}}_1 }
    \max\{ 2^{-2j^{\boldsymbol{\beta}}_k(x_i)+2} , 2^{-2j^{\boldsymbol{\beta}}_{k'}(x_i)+2} \}
    \log \frac{4}{ \max\{ 2^{-2j^{\boldsymbol{\beta}}_k(x_i)+2} , 2^{-2j^{\boldsymbol{\beta}}_{k'}(x_i)+2} \} }\Big].
\end{align*}
Since $f(p)= p \log \frac{4}{p}$ is non-negative on $[0,1]$, it follows that $f(\max\{x^2, y^2\})\leq f(x^2)+f(y^2)$.
We have the above expression bounded by
\begin{align*}
    &5 \cdot 6 \cdot \E_{\boldsymbol{\beta}}\Big[\sum_{x_i \in \Gamma^{\boldsymbol{\pi}}_1 }
    2^{-2j^{\boldsymbol{\beta}}_k(x_i)+2} \log \frac{4}{2^{-2j^{\boldsymbol{\beta}}_k(x_i)+2}} +
     2^{-2j^{\boldsymbol{\beta}}_{k'}(x_i)+2} \log \frac{4}{2^{-2j^{\boldsymbol{\beta}}_{k'}(x_i)+2}}\Big]\\
    &\leq 30\cdot\Big( \E_{\boldsymbol{\beta}}[\sum_{j=1}^{\infty}|R^{k,\boldsymbol{\beta},1}_j|\cdot
    2^{-2j+2}\cdot 2j] +
     \E_{\boldsymbol{\beta}}[\sum_{j=1}^{\infty}|R^{k',\boldsymbol{\beta},1}_j|\cdot
    2^{-2j+2}\cdot 2j] \Big)\\
    &= 30 \cdot 4 \cdot 2 \cdot\Big( \sum_{j=1}^{\infty}\E_{\boldsymbol{\beta}}[|R^{k,\boldsymbol{\beta},1}_j|]\cdot
    2^{-2j}\cdot j +
     \sum_{j=1}^{\infty}\E_{\boldsymbol{\beta}}[|R^{k',\boldsymbol{\beta},1}_j|]\cdot
    2^{-2j}\cdot j\Big).
    \end{align*}
    According to Lemma \ref{expected_err_region}, since the $j$-th conditional error region in the first part of the partition $\Gamma^{\boldsymbol{\pi}}_1$ is a subset of the $j$-th conditional error region in the other $\frac{4n}{5}$ pairs of data points, i.e.,
    $R_j^{k,\boldsymbol{\beta},1} \subseteq R_j^{k,\boldsymbol{\beta}}$, we have
    $\E_{\boldsymbol{\beta}}[|R^{k,\boldsymbol{\beta},1}_j|] \leq \E_{\boldsymbol{\beta}}[|R^{k,\boldsymbol{\beta}}_j|] \leq 2^j\cdot 2nR(\frac{n}{5}).$
    Thus, the above expression is bounded by
    \begin{align*}
     480\cdot \Big(\sum_{j=1}^{\infty} j2^{-j}\Big) \cdot 2nR\left(\frac{n}{5}\right)
    = 1920 \cdot nR\left(\frac{n}{5}\right)
    =O\Big(nR\left(\frac{n}{5}\right)\Big).
\end{align*}
This concludes the proof.
\end{proof} 

\vspace{-1.8em}
\section{Expected training error rate and Proof of the Main Theorem}
Now we are ready to derive the guarantee on the worst-case eCMI for the randomized Majority-of-5 learner and its expected training error rate, which recovers an optimal PAC guarantee on the expected generalization error of the algorithm for learning VC classes in the realizable case.

Since we are only concerned with the training error rate, the random swap variables are not relevant. Hence, we may assume $\boldsymbol{\sigma}(i)=i$ for all $i\in[n]$ for the analysis in this section.
\begin{proof}[Proof of Theorem~\ref{thm:eCMI(A)}] 
We will show that the worst-case eCMI of the constructed randomized Majority-of-5 learner is of the same order as the i.i.d. guarantee on the expected number of errors of the base learner among $n$ training data points.
Combining Lemma~\ref{lemma:epy_of_points} and Lemma~\ref{lemma:cond enty of maj-of-5}, we have
\begin{align*}
     \text{eCMI}(\ell(A)) &= \sup_{{\sz\in \mathcal{Z}^{2n} \text{realizable}}} I\left(\vec{\ell}\left(A\left(\sz_{\boldsymbol{\sigma}}, \boldsymbol{\beta}\right), \sz\right) ; \boldsymbol{\sigma}\right) \\
     &\leq \sup_{{\sz\in \mathcal{Z}^{2n}\text{realizable} }} \sum_{i\leq 2n} H\left(\ell\left(A(\sz_{\boldsymbol{\sigma}},\boldsymbol{\beta}),\sz_{i}\right) | \boldsymbol{\beta}\right)\\
     &= \sup_{{\sz\in \mathcal{Z}^{2n}\text{realizable}}}\sum_{i\leq 2n} H\left(\1\left[A(\sz_{\boldsymbol{\sigma}},\boldsymbol{\beta})(x_i) \neq h^*(x_i)\right] \big| \boldsymbol{\beta}\right) \\
     &= O\left(nR\left(\frac{n}{5}\right)\right).
\end{align*}
Fix an arbitrary realizable supersample $\sz \in \mathcal{Z}^{2n}$.
For any $x\in \sz_{x}$ and fixed training data sets $S^{\boldsymbol{\beta}, \boldsymbol{\sigma}}_1,S^{\boldsymbol{\beta}, \boldsymbol{\sigma}}_2,\dots,S^{\boldsymbol{\beta}, \boldsymbol{\sigma}}_5$, if $\text{Maj}(h^{\boldsymbol{\beta}, \boldsymbol{\sigma}}_1,h^{\boldsymbol{\beta}, \boldsymbol{\sigma}}_2,h^{\boldsymbol{\beta}, \boldsymbol{\sigma}}_3,h^{\boldsymbol{\beta}, \boldsymbol{\sigma}}_4,h^{\boldsymbol{\beta}, \boldsymbol{\sigma}}_5)(x)\neq h^*(x)$, then at least one base classifier indexed by $k\in[5]$ such that $h^{\boldsymbol{\beta}, \boldsymbol{\sigma}}_k(x)\neq h^*(x)$. 
Then the training error rate of a majority-of-5 classifier is
\begin{align*}
        \ell(A(\sz_{\boldsymbol{\sigma}},\boldsymbol{\beta}), \sz_{\boldsymbol{\sigma}} )
        =&\mathbb{P}_{X\sim \Unif{\sett{x_{{\sigma}(1)},\dots,x_{{\sigma}(n)}}}}\Big(\text{Maj}(h^{\boldsymbol{\beta}, \boldsymbol{\sigma}}_1,h^{\boldsymbol{\beta}, \boldsymbol{\sigma}}_2,h^{\boldsymbol{\beta}, \boldsymbol{\sigma}}_3,h^{\boldsymbol{\beta}, \boldsymbol{\sigma}}_4,h^{\boldsymbol{\beta}, \boldsymbol{\sigma}}_5)(X)\neq h^*(X)\Big)\\
        \leq&\mathbb{P}_{X\sim \Unif{\sett{x_{{\sigma}(1)},\dots,x_{{\sigma}(n)}}}}
        \left(\exists k\in[5]:h^{\boldsymbol{\beta}, \boldsymbol{\sigma}}_k(X)\neq h^*(X)\right)\\
        \leq&\sum_{k\in[5]}\mathbb{P}_{X\sim \Unif{\sett{x_{{\sigma}(1)},\dots,x_{{\sigma}(n)}}}}\left(h^{\boldsymbol{\beta}, \boldsymbol{\sigma}}_k(X)\neq h^*(X)\right).
\end{align*}
Taking the expectation with respect to the randomness of the algorithm $\boldsymbol{\beta}$, we have
\begin{align*}
    \E_{\boldsymbol{\beta}}\Big[\ell(A(\sz_{\boldsymbol{\sigma}},\boldsymbol{\beta}), \sz_{\boldsymbol{\sigma}})\Big] &\leq \E_{\boldsymbol{\beta}}\Big[\sum_{k\in[5]}\mathbb{P}_{X\sim \Unif{\sett{x_{\boldsymbol{\sigma}(1)},\dots,x_{\boldsymbol{\sigma}(n)}}}}(h^{\boldsymbol{\beta}, \boldsymbol{\sigma}}_k(X)\neq h^*(X))\Big]\\
    &\leq \sum_{k\in[5]} \E_{\boldsymbol{\beta}}\Big[\mathbb{P}_{X\sim \Unif{\sett{x_{\boldsymbol{\sigma}(1)},\dots,x_{\boldsymbol{\sigma}(n)}}}}\big(h^{\boldsymbol{\beta}, \boldsymbol{\sigma}}_k(X)\neq h^*(X)\big)\Big]\\
    &\leq 5 \cdot R\left(\frac{n}{5}\right).
\end{align*}
This concludes the proof of the theorem.
\end{proof}
\begin{proof}[Proof of Theorem~\ref{thm:eCMI(maj-of-3)}]
Fix an arbitrary realizable supersample $\sz \in \mathcal{Z}^{2n}$.
We assume the base learner achieves an optimal in-expectation generalization guarantee,  
as shown for the Majority-of-3 learner in \cite{pmlr-v247-aden-ali24a}.
According to Theorem~\ref{thm:eCMI(A)}, for a universal constant $c\geq 1$, we have 
the expected training error rate of a randomized Majority-of-5 learner with respect to the randomness of the algorithm as
\begin{align*}
     \E_{\boldsymbol{\beta}}\left[\ell(A(\sz_{\boldsymbol{\sigma}},\boldsymbol{\beta}), \sz_{\boldsymbol{\sigma}}) \right] \leq 5 \cdot c\frac{d}{n} = O\left(\frac{d}{n}\right).
\end{align*}
And we have that the worst-case eCMI of a randomized Majority-of-5 learner with the Majority-of-3 learner as the base learner as
\begin{align*}
    \text{eCMI}\left(\ell(\text{Maj}(h^{\boldsymbol{\beta},\boldsymbol{\sigma}}_1,h^{\boldsymbol{\beta},\boldsymbol{\sigma}}_2,h^{\boldsymbol{\beta}, \boldsymbol{\sigma}}_3,h^{\boldsymbol{\beta},\boldsymbol{\sigma}}_4,h^{\boldsymbol{\beta},\boldsymbol{\sigma}}_5))\right) \leq 1920 \cdot  n\cdot c\frac{d}{n} = O\left(d\right).
\end{align*}
This completes the proof of the main theorems.
\end{proof}

\section{Distributional eCMI}\label{distributional eCMI}
In previous sections, we present a randomized Majority-of-5 learning algorithm that achieves an optimal guarantee on the worst-case eCMI. 
In this section, we show that the extra randomness of the algorithm (i.e. random permutation and duplicates adjustment) is unnecessary for obtaining a generalization guarantee with a weaker information complexity measure.
Moreover, we show that a deterministic Majority-of-3 base learners with optimal in-expectation generalization guarantee is sufficient for an optimal guarantee on the distributional eCMI. 
In particular, we show the following theorem.

\begin{theorem}\label{thm:distributional eCMI(A)}
For any base learner $\mathcal{B}: (\mathcal{X} \times \{0,1\})^{m} \to \{0,1\}^{\mathcal{X}}$ having guarantee that $\forall \mathcal{D}$ realizable, for i.i.d. data set $S_m \sim \mathcal{D}^m$, the expected true error rate and expected training error rate satisfies $\max\{\E_{S_m\sim \mathcal{D}^m}[\text{er}_{S_m}(\mathcal{B}(S_m))],\E_{S_m\sim \mathcal{D}^m}[\text{er}_{\mathcal{D}}(\mathcal{B}(S_m))]\} \leq R(m)$. The distributional eCMI of a Majority-of-3 of such base learners is $\text{eCMI}_{\mathcal{D}}(\ell(A)) \leq 192\cdot nR(\frac{n}{3})$.  The expected training error rate of $A$ over a random draw of $n$ training samples
$\E_{\sZ_{\boldsymbol{\sigma}}}\Big[\ell(A(\sZ_{\boldsymbol{\sigma}}), \sZ_{\boldsymbol{\sigma}})\Big] \leq 2 \cdot R\left(\frac{n}{3}\right)$.
\end{theorem}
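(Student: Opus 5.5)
The plan is to mirror the worst-case analysis, now with the deterministic Majority-of-3 algorithm $A(\sZ_{\boldsymbol{\sigma}}) = \text{Majority}(h_1,h_2,h_3)$. Here $h_k=\mathcal{B}(S_k)$ is trained on the $k$-th fixed block of $\frac{n}{3}$ training indices. Correspondingly, $\Gamma_k$ denotes the $k$-th fixed block of $\frac{n}{3}$ pairs of the supersample.

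\textbf{Reduction to entropy.} First apply Lemma~\ref{lemma:epy_of_points} with trivial $\boldsymbol{\beta}$, conditioning on $\sZ$. This gives
\[
\text{eCMI}_{\mathcal{D}}(\ell(A)) \le \E_{\sZ}\Bigl[\sum_{i\le 2n} h\bigl(q_i(\sZ)\bigr)\Bigr],
\qquad
q_i(\sZ)=\mathbb{P}_{\boldsymbol{\sigma}}\bigl(A(\sZ_{\boldsymbol{\sigma}})(X_i)\ne Y_i \mid \sZ\bigr).
\]
Conditional on $\sZ$, the training sets $S_1,S_2,S_3$ are induced by the swaps in disjoint blocks, so $h_1,h_2,h_3$ are conditionally independent given $\sZ$.

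\textbf{Bounding the majority error.} Fix a point $x_i\in\Gamma_1$. If the majority errs at $x_i$, then $h_1$ or another learner errs together with one of $h_2,h_3$, and so on. With only three learners we cannot discard $h_1$ as in the Majority-of-5 argument. Instead I use the distributional i.i.d.\ structure: unconditionally, $S_1$ is i.i.d.\ $\mathcal{D}^{n/3}$. The pair $(X_i,Y_i)$ for $i\in\Gamma_1$ is either a training point of $h_1$ or a fresh test point independent of $S_1$. The hypothesis on $\mathcal{B}$ controls both cases, since it bounds both the expected training error and the expected true error by $R(\frac{n}{3})$.

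Let $p_k(x_i)=\mathbb{P}_{\boldsymbol{\sigma}}(h_k(x_i)\ne h^*(x_i)\mid\sZ)$. The majority errs only if at least two of the three learners err, so by a union bound and conditional independence
\[
q_i \le p_1p_2+p_1p_3+p_2p_3 \le 3\max_k p_k(x_i)^2 .
\]
This holds for every $i$, not just those in $\Gamma_1$. Each $p_k(x_i)$ is now an honest probability, including $p_1$ on $\Gamma_1$.

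Next bound $\E_{\sZ}\bigl[\sum_{x\in\sZ_x} p_k(x)\bigr]$. By the same symmetrization as in Lemma~\ref{lemma:expected_error}, each point outside $\Gamma_k$ is a uniformly random test point. Its expected error is at most the expected true error, giving at most $2\cdot\frac{2n}{3}R(\frac{n}{3})$ over those pairs. For pairs in $\Gamma_k$, one element is a training point and the other a test point, contributing at most $\frac{n}{3}\cdot 2R(\frac{n}{3})$ by the two halves of the hypothesis. In total,
\[
\E_{\sZ}\Bigl[\sum_{x\in\sZ_x} p_k(x)\Bigr] \le 2nR\!\left(\frac{n}{3}\right).
\]

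\textbf{Summing over dyadic regions.} Define the dyadic regions
\[
R^k_j=\{x\in\sZ_x : p_k(x)\in(2^{-j},2^{-j+1}]\}.
\]
By Markov's inequality, $\E|R^k_j|\le 2^j\cdot 2nR(\frac{n}{3})$. Now use $h(p)\le p\log\frac4p$, its monotonicity, and $f(\max\{a,b,c\})\le f(a)+f(b)+f(c)$. Each point in $R^k_j$ contributes at most $3\cdot 2^{-2j+2}\cdot 2j$ up to constants. Summing gives $\sum_j j2^{-j}=2$, and tracking constants (a factor $3$ from the union bound, $3$ learners, $8$ from $2^{2}\cdot 2$, and $2$ from the series) should land near the target $192\,nR(\frac n3)$; I will adjust the bookkeeping to match.

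\textbf{Training error.} The majority errs on a training point only if two learners err there, in particular at least one learner $h_k$ other than the owner $k'$ of that point. Write $e_k$ for the training error of $h_k$ on its own block and $t_k$ for its error as a test predictor. Averaging over the three blocks, the expected training error is at most $\frac13\sum_k\bigl(\E e_k+2\E t_k\bigr)$ after a union bound. I expect this needs more care to get the constant $2$, for example by bounding via the event that some test predictor errs: if the owner errs, a second learner errs as a test predictor; if it does not, two others err. Either way, at least one non-owner errs, giving at most $2R(\frac{n}{3})$.

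\textbf{Main obstacle.} The delicate step is the $\Gamma_k$ contribution for $p_k$. There the training and ghost points are coupled, and the bound requires the training-error half of the hypothesis. I will handle it by writing the sum over each pair as $2\,\E[\text{error at the } \boldsymbol{\sigma}\text{-selected point}]$ plus the error at the other point, both controlled by $R(\frac n3)$.
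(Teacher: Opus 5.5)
Your route is the paper's route. It uses the reduction to $\sum_i H(\cdot\mid\sZ)$, conditional independence of $h_1,h_2,h_3$ given $\sZ$, and the count $\E_{\sZ}[\sum_x p_k(x)]\le \frac n3R(\frac n3)+\frac{5n}{3}R(\frac n3)=2nR(\frac n3)$ from both halves of the hypothesis. It then applies Markov to the dyadic regions, and bounds the training error by a union bound over the two non-owners.

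The one step that does not deliver the stated bound is the constant. Write $F_k:=\E_{\sZ}[\sum_j |R^k_j|\,2^{-2j+2}\cdot 2j]$, so that $F_k\le 16\,nR(\frac n3)\sum_j j2^{-j}=32\,nR(\frac n3)$.
\begin{itemize}
\item Your chain, $q_i\le 3\max_k p_k(x_i)^2$ followed by $f(\max\{a,b,c\})\le f(a)+f(b)+f(c)$, gives $h(q_i)\le 3\sum_{k=1}^3 f(p_k(x_i)^2)$. Summing yields $3\sum_k F_k\le 288\,nR(\frac n3)$, not $192\,nR(\frac n3)$. The factor $3$ from $3\max_k$ is paid on top of the sum over all three learners, so adjusting the bookkeeping of this chain cannot fix it; you have to change the inequality.
\item The paper keeps the three pair terms separate: $f(p_kp_{k'})\le f(\max\{p_k^2,p_{k'}^2\})\le f(p_k^2)+f(p_{k'}^2)$. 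Each learner occurs in only two of the three pairs, so the total is $2\sum_k F_k\le 192\,nR(\frac n3)$.
\item Sharper still, $p_1p_2+p_1p_3+p_2p_3\le p_1^2+p_2^2+p_3^2$ and $f(\min\{1,\sum_k a_k\})\le \sum_k f(a_k)$ for $a_k\in[0,1]$ give $\sum_k F_k\le 96\,nR(\frac n3)$.
\end{itemize}

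Your ``main obstacle'' paragraph is also muddled, though your earlier count is correct. For a pair in $\Gamma_k$ there is no factor $2$. The two errors are simply at the training point and at the ghost point, contributing at most $\frac n3R(\frac n3)$ each in total, which is exactly your earlier $\frac n3\cdot 2R(\frac n3)$. Nothing delicate happens here. Conditional on $\boldsymbol{\sigma}$, $S_k\sim\mathcal{D}^{n/3}$, and the remaining $\frac{5n}{3}$ samples are i.i.d.\ $\mathcal{D}$ and independent of $S_k$. So no symmetrization as in Lemma~\ref{lemma:expected_error} is needed, and the points outside $\Gamma_k$ are plain independent test points rather than ``uniformly random'' ones.
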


Similar to Algorithm \ref{algo:m5m3}, the algorithm partitions the $n$ training data points into 3 disjoint pieces $S_1^{\sZ,\boldsymbol{\sigma}},S_2^{\sZ,\boldsymbol{\sigma}},S_3^{\sZ,\boldsymbol{\sigma}}$ consisting of $\frac{n}{3}$ training data points each. Then it runs a base learner on each disjoint training data set to obtain output classifiers $h_1^{\sZ,\boldsymbol{\sigma}},h_2^{\sZ,\boldsymbol{\sigma}},h_3^{\sZ,\boldsymbol{\sigma}}$. Finally, it combines the classifiers with a Majority-of-3.
Note that we rely on a stronger assumption on the base learner, as discussed above.
For instance, Majority-of-3 ERMs satisfy such assumption, in particular, its expected training error rate is $R(m) = O(\frac{d}{n})$, which follows from Lemma 2.1 of \cite{pmlr-v247-aden-ali24a} with independent training samples.
Besides, many learning algorithms have zero training error, such as one-inclusion-graph algorithm $\cite{HAUSSLER1994248}$. 

Interestingly, it only takes 3 base learners to achieve an optimal guarantee on $\text{eCMI}_D(A)$.
The underlying intuition is that in this case, given that the ghost samples are i.i.d. from the distribution $\mathcal{D}$, each base learner, under an additional assumption on its expected training error rate, can have a guarantee on its conditional probability of error over random swaps at all $2n$ data samples such that the conditional error regions of each base learner is defined on the $2n$ data samples. In contrast to Algorithm \ref{algo:m5m3}, the conditional error regions of the $k$-th base learner can only be defined on the other $\frac{4n}{5}$ pairs of data points. 
By the conditional independence of base classifiers, it suffices to have at least two base classifiers err when majority vote classifier errs to compensate for the exponential growth rate of the sequence of expected size of conditional error regions.
Hence,
3 optimal base learners suffices to provide such an optimal guarantee on $\text{eCMI}_D(A)$.
 

The rest of the section is devoted to proving the above theorem.
The analysis is similar to the analysis for Algorithm \ref{algo:m5m3}, involving a reduction to conditional entropy, deriving a guarantee on the expected number of errors on a supersample for each base learner, which implies a guarantee on the expected size of the conditional error regions. However, the overall proof is simpler.
\vspace{-6pt}
\subsection{Expected number of errors on a supersample}
In this algorithm, the training data set $S_k^{\tilde{Z},\boldsymbol{\sigma}}$ consists of $\frac{n}{3}$ i.i.d. samples from the distribution $\mathcal{D}$. In the following lemma, we derive a guarantee on the expected number of errors on the $2n$ data samples for each base learner.
\begin{lemma}\label{lemma:expected_error_iid}
       The expected number of errors of the $k$-th base learner $\mathcal{B}$ among $2n$ data samples over random swaps, a random draw of the supersample for any $k\in[3]$ is
     \[
     \E_{\tilde{Z}, \boldsymbol{\sigma}}\Bigg[\sum_{i = 1}^{2n} \1 \left[\mathcal{B}\left(S_k^{\tilde{Z},\boldsymbol{\sigma}}\right)\left(X_i\right) \neq h^*\left(X_i\right)\right]\Bigg] \leq 2nR\left(\frac{n}{3}\right).
     \]
\end{lemma}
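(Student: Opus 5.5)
We split the $2n$ samples of the supersample $\tilde Z$ into three groups relative to the $k$-th base learner: its own training points $S_k^{\tilde Z,\boldsymbol{\sigma}}$ (the $\frac{n}{3}$ selected entries $\tilde Z_{\boldsymbol{\sigma}(i)}$ whose indices lie in the $k$-th block), the remaining $\frac{2n}{3}$ training points in the other two blocks, and the $n$ ghost points $\tilde Z_{\bar{\boldsymbol{\sigma}}}$. The plan is to bound the expected number of errors on each group by an appropriate multiple of $R(\frac{n}{3})$. We use the fact that, marginally over $(\tilde Z,\boldsymbol{\sigma})$, the $2n$ samples are i.i.d. from $\mathcal{D}$. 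Hence the selected and unselected samples $\tilde Z_{\boldsymbol{\sigma}}$, $\tilde Z_{\bar{\boldsymbol{\sigma}}}$ are each i.i.d. $\mathcal{D}^n$ and independent of one another. Since the block partition of the training indices is deterministic (not dependent on $\boldsymbol{\sigma}$), $S_k^{\tilde Z,\boldsymbol{\sigma}}\sim\mathcal{D}^{n/3}$ is independent of every sample outside it.

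\textbf{Step 1 (out-of-sample points).} For any sample $Z_i=(X_i,Y_i)$ not in $S_k^{\tilde Z,\boldsymbol{\sigma}}$ — either one of the $n$ ghost samples or one of the $\frac{2n}{3}$ other training samples — we have $Z_i\sim\mathcal{D}$ independent of $S_k^{\tilde Z,\boldsymbol{\sigma}}$. Conditioning on $S_k^{\tilde Z,\boldsymbol{\sigma}}$ gives
\[
\E\big[\1[\mathcal{B}(S_k^{\tilde Z,\boldsymbol{\sigma}})(X_i)\neq h^*(X_i)]\big]=\E_{S\sim\mathcal{D}^{n/3}}[\text{er}_{\mathcal{D}}(\mathcal{B}(S))]\le R(\tfrac{n}{3}).
\]
Summing over these $n+\frac{2n}{3}=\frac{5n}{3}$ points gives at most $\frac{5n}{3}R(\frac{n}{3})$.

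\textbf{Step 2 (in-sample points).} On the $\frac{n}{3}$ points of $S_k^{\tilde Z,\boldsymbol{\sigma}}$ itself, the number of errors equals $\frac{n}{3}\,\text{er}_{S_k}(\mathcal{B}(S_k))$. The theorem's hypothesis bounds the expected training error rate by $R(\frac{n}{3})$, so this contributes at most $\frac{n}{3}R(\frac{n}{3})$. If duplicate $x$-values occur, the sum over indices $i$ still matches the empirical error count, so no subtlety arises.

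Adding the two contributions gives $(\frac{5n}{3}+\frac{n}{3})R(\frac{n}{3})=2nR(\frac{n}{3})$, which is the claimed bound. The main point requiring care is Step 1's independence claim: conditioning on $\boldsymbol{\sigma}$ and $\tilde Z$ simultaneously destroys independence, so the expectation must be taken jointly. We would make this rigorous by first conditioning only on $\boldsymbol{\sigma}$. Given $\boldsymbol{\sigma}$, the $2n$ coordinates of $\tilde Z$ are i.i.d. $\mathcal{D}$, and $S_k^{\tilde Z,\boldsymbol{\sigma}}$ is a fixed set of coordinates disjoint from the point in question. The bound then holds for each $\boldsymbol{\sigma}$, and averaging over $\boldsymbol{\sigma}$ finishes the argument.
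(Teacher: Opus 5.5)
Your proof is correct and follows the paper's argument: condition on $\boldsymbol{\sigma}$ so that the supersample is i.i.d.\ $\mathcal{D}^{2n}$, bound the $\frac{n}{3}$ in-sample points by the expected-training-error hypothesis, and bound each of the remaining $\frac{5n}{3}$ points by $R(\frac{n}{3})$, since each is a fresh draw from $\mathcal{D}$ independent of $S_k^{\tilde{Z},\boldsymbol{\sigma}}$; summing gives $2nR(\frac{n}{3})$. Your three-way split (own block, other training blocks, ghost points) reduces to the paper's two-way split, since the last two groups are handled identically.
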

\begin{proof}
Without loss of generality, consider the first base learner with training data set $S_1^{\sZ, \boldsymbol{\sigma}}$.
Since the ghost data samples are i.i.d. drawn from the distribution $\mathcal{D}$, we can then condition on $\boldsymbol{\sigma}$ to obtain an expected number of errors of the base learner on $2n$ i.i.d. samples.
By the law of total expectation, we have
\begin{align*}
&\E_{\boldsymbol{\sigma}}\left[\E_{\tilde{Z}|\boldsymbol{\sigma}}\bigg[\sum_{i=1}^{2n} \1 \left[\mathcal{B}\left(S_1^{\tilde{Z},\boldsymbol{\sigma}}\right)\left(X_i\right) \neq h^*\left(X_i\right)\right] \Big|\boldsymbol{\sigma}\bigg] \right].
\end{align*}
Now, we count the expected number of training errors on $\frac{n}{3}$ training data samples and the expected number of errors on the remaining of the $\frac{5n}{3}$ data samples, with expectation taken over a random draw of the supersample $\sZ \sim \mathcal{D}^{2n}$. This gives
\begin{align*}
        &\E_{\boldsymbol{\sigma}}\left[\E_{\tilde{Z}|\boldsymbol{\sigma}}\Bigg[\sum_{i\in \{\boldsymbol{\sigma}(1),\dots,\boldsymbol{\sigma}(\frac{n}{3})\}} \1 \left[\mathcal{B}\left(S_1^{\tilde{Z},\boldsymbol{\sigma}}\right)\left(X_i\right) \neq h^*\left(X_i\right)\right]  + \sum_{i\notin \{\boldsymbol{\sigma}(1),\dots,\boldsymbol{\sigma}(\frac{n}{3})\}}\1 \left[\mathcal{B}\left(S_1^{\tilde{Z},\boldsymbol{\sigma}}\right)\left(X_i\right) \neq h^*\left(X_i\right)\right] \bigg|\boldsymbol{\sigma} \Bigg]\right].
\end{align*}
By the linearity of conditional expectation, we have the above expression equals
\begin{align*}
    & \E_{\boldsymbol{\sigma}}\left[\E_{\tilde{Z}|\boldsymbol{\sigma}}\Bigg[
    \sum_{\substack{
  i\in \{\boldsymbol{\sigma}(1),\dots,\\
  \boldsymbol{\sigma}(\frac{n}{3})\}
    }} 
    \1 \left[\mathcal{B}\left(S_1^{\tilde{Z},\boldsymbol{\sigma}}\right)\left(X_{i}\right) \neq h^*\left(X_{i}\right)\right] \bigg|\boldsymbol{\sigma}\Bigg] + 
    \sum_{\substack{
  i\notin \{\boldsymbol{\sigma}(1),\dots,\\
  \boldsymbol{\sigma}(\frac{n}{3})\}
    }}\E_{\tilde{Z}|\boldsymbol{\sigma}}\left[\1 \left[\mathcal{B}\left(S_1^{\tilde{Z},\boldsymbol{\sigma}}\right)\left(X_{i}\right) \neq h^*\left(X_{i}\right)\right] \bigg|\boldsymbol{\sigma}\right] \right].
\end{align*}
Since the training data set $S_1^{\tilde{Z},\boldsymbol{\sigma}}$ is sampled independently of the test sample $X_i \sim \mathcal{D}$, 
and the expected training error rate of the base learner is at most $R(\frac{n}{3})$, we have the above expression equals
\vspace{-6pt}
\begin{multline*}
    \E_{\boldsymbol{\sigma}}
    \Bigg[ 
     \E_{S_1^{\tilde{Z},\boldsymbol{\sigma}} \sim \mathcal{D}^{\frac{n}{3}}} \bigg[\sum_{i\in \{\boldsymbol{\sigma}(1),\dots,\boldsymbol{\sigma}(\frac{n}{3})\}}\1\left[\mathcal{B}(S_1^{\tilde{Z},\boldsymbol{\sigma}})(X_{i}) \neq h^*(X_{i}) \right] \bigg|\boldsymbol{\sigma} \bigg]
    +\\
    \sum_{i\notin \{\boldsymbol{\sigma}(1),\dots,\boldsymbol{\sigma}(\frac{n}{3})\}}\E_{S_1^{\tilde{Z},\boldsymbol{\sigma}} \sim \mathcal{D}^{\frac{n}{3}}} \left[\mathbb{P}_{X_i\sim \mathcal{D}}\left(\mathcal{B}(S_1^{\tilde{Z},\boldsymbol{\sigma}})(X_{i}) \neq h^*(X_{i}) \bigg| S_1^{\tilde{Z},\boldsymbol{\sigma}},\boldsymbol{\sigma} \right) \bigg|\boldsymbol{\sigma} \right] 
    \Bigg]
\end{multline*}
\begin{equation*}
   \hspace*{-8cm} \leq \E_{\boldsymbol{\sigma}}\left[\frac{n}{3}R(\frac{n}{3}) + \frac{5n}{3} R(\frac{n}{3})\right]
    =2nR(\frac{n}{3}).
\end{equation*}
as claimed.
\end{proof}

\vspace{-1.5em}
\subsection{Expected size of the conditional error region}
In the following lemma, we apply a Markov's inequality to derive a guarantee on the expected size of the conditional error regions of the base learner, based on the guarantee on the expected number of errors on a realizable supersample, estabilished in Lemma \ref{lemma:expected_error_iid}.
\begin{lemma}\label{expected_err_region_iid}
   Define a partition of $\sZ_x$ based on the conditional probability of error of the $k$-th base learner over random swaps given supersamples. Consider the following set for any $k\in [3]$, for some $j\in \mathbb{N}$:
   \begin{align*}
     R_{j}^{k,\tilde{Z}} &:= \left\{ X_i\in \sZ_x: \mathbb{P}_{\boldsymbol{\sigma}|\tilde{Z}}\left(\mathcal{B}\left(S_k^{\tilde{Z},\boldsymbol{\sigma}}\right)\left(X_i\right) \neq h^*\left(X_i\right)\big|\tilde{Z} \right)\in \left(2^{-j},2^{-j+1}\right]\right\}.
    \end{align*}
    We have a guarantee on the expected size of the $j$-th conditional error region of the $k$-th base learner over a random draw of the supersample as
    \[
    \mathbb{E}_{\tilde{Z}}\big[\big|R_{j}^{k,\tilde{Z}}\big|\big] \leq 2^j\cdot 2nR\left(\frac{n}{3}\right).
    \]
\end{lemma}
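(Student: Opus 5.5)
The argument copies the proof of Lemma~\ref{expected_err_region}, with the randomness $\boldsymbol{\beta}$ replaced by the supersample $\sZ$. Since the entire supersample $\sZ_x$ is used, no restriction to a complementary block is needed.

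\textbf{Step 1 (relax to a threshold event).} For each $j\in\mathbb{N}$ and $k\in[3]$, the membership condition $\mathbb{P}_{\boldsymbol{\sigma}|\sZ}(\cdot)\in(2^{-j},2^{-j+1}]$ implies $\mathbb{P}_{\boldsymbol{\sigma}|\sZ}(\cdot)>2^{-j}$. Hence
\[
\mathbb{E}_{\sZ}\big[|R_j^{k,\sZ}|\big]\le \mathbb{E}_{\sZ}\Big[\sum_{i=1}^{2n}\1\Big[\mathbb{P}_{\boldsymbol{\sigma}|\sZ}\big(\mathcal{B}(S_k^{\sZ,\boldsymbol{\sigma}})(X_i)\neq h^*(X_i)\,\big|\,\sZ\big)>2^{-j}\Big]\Big].
\]

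\textbf{Step 2 (Markov pointwise).} Apply the pointwise bound $\1[p>2^{-j}]\le 2^j p$ for $p\in[0,1]$. This bounds the expression above by
\[
2^j\,\mathbb{E}_{\sZ}\Big[\sum_{i=1}^{2n}\mathbb{P}_{\boldsymbol{\sigma}|\sZ}\big(\mathcal{B}(S_k^{\sZ,\boldsymbol{\sigma}})(X_i)\neq h^*(X_i)\,\big|\,\sZ\big)\Big].
\]

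\textbf{Step 3 (tower property).} Each index $i$ is a fixed position, so it is trivially $\sigma(\sZ)$-measurable. By linearity of conditional expectation, each conditional probability equals $\E_{\boldsymbol{\sigma}|\sZ}[\1[\cdot]\mid\sZ]$, and the sum over $i$ can be moved inside. Applying the law of total expectation, together with the independence of $\boldsymbol{\sigma}$ and $\sZ$, turns the expression into
\[
2^j\,\E_{\sZ,\boldsymbol{\sigma}}\Big[\sum_{i=1}^{2n}\1\big[\mathcal{B}(S_k^{\sZ,\boldsymbol{\sigma}})(X_i)\neq h^*(X_i)\big]\Big].
\]

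\textbf{Step 4 (conclude).} Lemma~\ref{lemma:expected_error_iid} bounds the last expectation by $2nR(\frac{n}{3})$, which gives the claim.

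No step is a real obstacle. The only point needing care is that $R_j^{k,\sZ}$ is a set of points indexed by $i$, not of distinct values. If $\sZ_x$ contains repeated points, $|R_j^{k,\sZ}|$ should be counted with multiplicity over indices $i\in[2n]$. Counting this way only enlarges the left-hand side in Step 1, so the inequality still holds, and the bound stays consistent with the sum over $i\le 2n$ used later in the entropy bound.
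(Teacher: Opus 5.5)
Your proposal is correct and follows the paper's proof essentially step for step. You relax the interval condition to the threshold event $\mathbb{P}_{\boldsymbol{\sigma}|\sZ}(\cdot)>2^{-j}$, apply Markov's inequality pointwise to the indicator, and use linearity with the law of total expectation to get $2^j$ times the joint expected number of errors on the $2n$ samples, then conclude with Lemma~\ref{lemma:expected_error_iid}; your remark about counting $R_j^{k,\sZ}$ with multiplicity over indices $i\in[2n]$ is a harmless clarification that the paper leaves implicit by summing over $X_i\in\sZ_x$.
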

\begin{proof}
The number of data points with conditional probability of error in the range $\left(2^{-j},2^{-j+1}\right]$ is at most that with conditional probability of error at least $2^{-j}$ for any $j\in \mathbb{N}$. Thus, we have the expected size of the $j$-th conditional error region of $k$-th base learner bounded by
\begin{align*}
    \mathbb{E}_{\tilde{Z}}\big[\big|R_{j}^{k,\tilde{Z}}\big|\big]&= \mathbb{E}_{\tilde{Z}}\bigg[\sum_{X_i\in \sZ_x} \1\left[\mathbb{P}_{\boldsymbol{\sigma}| \tilde{Z}}\left(\mathcal{B}\left(S_k^{\tilde{Z},\boldsymbol{\sigma}}\right)\left(X_i\right) \neq h^*(X_i)\big|\tilde{Z}\right) \in \left(2^{-j},2^{-j+1}\right]\right]\bigg]\\
    &\leq \mathbb{E}_{\tilde{Z}}\bigg[\sum_{X_i\in \sZ_x} \mathbbm{1}\left[\mathbb{P}_{\boldsymbol{\sigma}| \tilde{Z}}\left(\mathcal{B}\left(S_k^{\tilde{Z},\boldsymbol{\sigma}}\right)\left(X_i\right) \neq h^*(X_i)|\tilde{Z}\right) > 2^{-j}\right]\bigg].
\end{align*}
Both the training data set $S_k^{\tilde{Z},\boldsymbol{\sigma}}$ and test data points $X_i \in \sZ_x$ are dependent on $\sZ$. With an application of Markov's inequality to the indicator function, we have that the above expression is bounded by
\begin{align*}
   &\mathbb{E}_{\tilde{Z}}\bigg[\sum_{X_i\in \sZ_x} 2^{j}\cdot \mathbb{P}_{\boldsymbol{\sigma}| \tilde{Z}}\left(\mathcal{B}\left(S_k^{\tilde{Z},\boldsymbol{\sigma}}\right)\left(X_i\right) \neq h^*(X_i)\big|\tilde{Z}\right)\bigg].
\end{align*}
By the linearity of conditional expectation and law of total expectation, we have the above expression equivalent to $2^j$ times the guarantee on the expected number of errors of the base learner over $\tilde{Z},\boldsymbol{\sigma}$ among $2n$ data samples, as follows:
\begin{align*}
    &2^j \cdot \mathbb{E}_{\tilde{Z}}\left[\E_{\boldsymbol{\sigma}| \tilde{Z}}\bigg[\sum_{X_i\in \sZ_x} \1\left[\mathcal{B}\left(S_k^{\tilde{Z},\boldsymbol{\sigma}}\right)\left(X_i\right) \neq h^*(X_i)\right]\Big|\tilde{Z}\bigg]\right]\\
    &= 2^j\cdot \E_{\tilde{Z}, \boldsymbol{\sigma}}\bigg[\sum_{X_i\in \sZ_x} \1 \left[\mathcal{B}(S^{\tilde{Z},\boldsymbol{\sigma}}_k)(X_i) \neq h^*(X_i)\right]\bigg].
\end{align*}
According to Lemma~\ref{lemma:expected_error_iid}, we have the above expression bounded by
$2^j\cdot 2nR(\frac{n}{3})$ as claimed.
\end{proof}
\vspace{-8pt}
\subsection{Distributional eCMI of a Majority-of-3 learner}
In the following lemma, we show that the distributional eCMI of a Majority-of-3 learner is of the same order as the i.i.d. guarantee on the expected number of errors of the base learner among $2n$ data samples in a realizable supersample. 
\vspace{-6pt}
\begin{lemma}\label{lemma:cond enty of maj-of-3}
The distributional eCMI of a Majority-of-3 learner $A$ given supersamples is
\begin{align*}
    \text{eCMI}_{\mathcal{D}}(\ell(A)) = I\left(\vec{\ell}(A(\sZ_{\boldsymbol{\sigma}}),\tilde{Z}); \boldsymbol{\sigma}|\tilde{Z}\right) = O\left(nR(\frac{n}{3})\right).
\end{align*}
\end{lemma}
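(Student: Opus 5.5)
We follow the template of the worst-case analysis, now conditioning on the supersample $\sZ$ instead of on $\boldsymbol{\beta}$. The algorithm is deterministic, so the only source of randomness given $\sZ$ is $\boldsymbol{\sigma}$. The argument has three steps: reduce the eCMI to a sum of binary entropies, use the majority vote to reduce the pointwise error probability to a product of two base-learner error probabilities, and then sum over the error regions of Lemma~\ref{expected_err_region_iid}.

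\textbf{Step 1: reduction to binary entropies.} Arguing as in Lemma~\ref{lemma:epy_of_points} with $\boldsymbol{\beta}$ trivial and everything conditioned on $\sZ$, we get
\[
I\big(\vec{\ell}(A(\sZ_{\boldsymbol{\sigma}}),\sZ);\boldsymbol{\sigma}\mid\sZ\big)\le H\big(\vec{\ell}(A(\sZ_{\boldsymbol{\sigma}}),\sZ)\mid\sZ\big)\le \sum_{i\le 2n} H\big(\1[A(\sZ_{\boldsymbol{\sigma}})(X_i)\neq h^*(X_i)]\mid \sZ\big).
\]
The summand equals $\E_{\sZ}[h(q_i(\sZ))]$, where $q_i(\sZ)=\mathbb{P}_{\boldsymbol{\sigma}|\sZ}(A(\sZ_{\boldsymbol{\sigma}})(X_i)\ne h^*(X_i)\mid\sZ)$. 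We then bound $h(p)\le f(p)=p\log\frac4p$, as in the proof of Lemma~\ref{lemma:cond enty of maj-of-5}.

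\textbf{Step 2: pairwise error bound.} A Majority-of-3 errs at $X_i$ only if two of the three base classifiers err there. By a union bound,
\[
q_i(\sZ)\le \sum_{1\le k'<k\le 3}\mathbb{P}_{\boldsymbol{\sigma}|\sZ}\big(L_i(h_k)\wedge L_i(h_{k'})\mid\sZ\big).
\]
Given $\sZ$, the training sets $S_k^{\sZ,\boldsymbol{\sigma}}$ and $S_{k'}^{\sZ,\boldsymbol{\sigma}}$ are functions of disjoint blocks of swap variables, namely those with indices in the $k$-th and $k'$-th thirds of $[n]$. These blocks are independent, so the joint probability factors as $p_k(X_i)p_{k'}(X_i)$, where $p_k(x)=\mathbb{P}_{\boldsymbol{\sigma}|\sZ}(h_k(x)\neq h^*(x)\mid\sZ)$.

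This factorization is the step that makes three learners suffice, unlike Algorithm~\ref{algo:m5m3}. Here $p_k$ is controlled at every $X_i\in\sZ_x$, including the points of the $k$-th block, because Lemma~\ref{lemma:expected_error_iid} counts errors on all $2n$ samples. That count uses the assumed training-error bound $R(n/3)$ for the base learner's own training points. Hence the error regions $R_j^{k,\sZ}$ of Lemma~\ref{expected_err_region_iid} cover all of $\sZ_x$, and no point needs to be excluded from the analysis of learner $k$.

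\textbf{Step 3: summing over error regions.} Let $j_k(X_i)$ be the index with $p_k(X_i)\in(2^{-j_k},2^{-j_k+1}]$. Then $p_kp_{k'}\le \max\{p_k^2,p_{k'}^2\}\le \max\{2^{-2j_k+2},2^{-2j_{k'}+2}\}$. Using that $f$ is monotone and that $f(\max\{a,b\})\le f(a)+f(b)$, the bound reduces to
\[
\binom32\sum_k \E_{\sZ}\Big[\sum_j |R^{k,\sZ}_j|\, 2^{-2j+2}\cdot 2j\Big].
\]
The only delicate point is the $\min\{1,\cdot\}$ clipping when the sum of three terms exceeds $1$; I handle it exactly as in Lemma~\ref{lemma:cond enty of maj-of-5}, using monotonicity of $f$ and nonnegativity of each term. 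Lemma~\ref{expected_err_region_iid} gives $\E_{\sZ}|R_j^{k,\sZ}|\le 2^j\cdot 2nR(\frac n3)$. Substituting yields the convergent series $\sum_j j2^{-j}=2$, so the total is a constant times $nR(\frac n3)$. Tracking constants: the three pairs, the two terms per pair, and the factors $4$, $2$, $2$, $2$ should give $192\,nR(\frac n3)$.

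The main obstacle is Step 2: rigorously justifying that, conditional on $\sZ$, the events $L_i(h_k)$ and $L_i(h_{k'})$ are independent even when $X_i$ lies in block $k$ or block $k'$. It holds because each $h_k$ depends only on its own block of $\boldsymbol{\sigma}$ and $X_i$ is fixed given $\sZ$. This must be checked carefully, but it should be cleaner than in the worst-case setting.
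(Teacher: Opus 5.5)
Your proposal is correct and follows the paper's proof essentially step for step. The steps match: the deterministic reduction $I(\vec{\ell};\boldsymbol{\sigma}\mid\tilde{Z})=H(\vec{\ell}\mid\tilde{Z})\le\sum_i H(\cdot\mid\tilde{Z})$, the bound $h(p)\le p\log\frac{4}{p}$, the union bound over the three pairs with the product $p_k(X_i)p_{k'}(X_i)$ from conditional independence given $\tilde{Z}$, $xy\le\max\{x^2,y^2\}$, and Lemma~\ref{expected_err_region_iid} over all of $\tilde{Z}_x$, yielding $192\,nR(\frac{n}{3})$.

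One small correction concerns your justification of the clipping step. Monotonicity and nonnegativity alone do not give $f(\min\{1,\sum_m a_m\})\le\sum_m f(a_m)$. When $\sum_m a_m\le 1$ you need subadditivity of $f(p)=p\log\frac{4}{p}$, which holds because $f$ is concave with $f(0)=0$. When the sum exceeds $1$ you need $f(a)\ge 2a$ for $a\in[0,1]$. The paper also leaves this step implicit.
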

\begin{proof}
For a deterministic learning algorithm, the conditional mutual information between the random selection of the training data set and the 0-1 loss vector evaluated on the supersample, is equal to conditional entropy of the 0-1 loss vector evaluated on the supersample. By the subadditivity of conditional entropy, we have
\begin{align*}
    &I\left(\vec{\ell}(A(\sZ_{\boldsymbol{\sigma}}),\tilde{Z}); \boldsymbol{\sigma}|\tilde{Z}\right) 
    = H\left(\vec{\ell}(A(\sZ_{\boldsymbol{\sigma}}),\tilde{Z}) | \tilde{Z}\right)
    \leq \sum_{i=1}^{2n}  H\left(\1\left[A(\sZ_{\boldsymbol{\sigma}})(X_i) \neq h^*(X_i)\right] \big| \sZ\right).
\end{align*}
Since the binary entropy function $h(p)= -p\log p - (1-p)\log (1-p)$ is bounded by the function $f(p) = -p\log p + 2p = p\log \frac{4}{p}$ on $[0,1]$, the above expression can be bounded by 
\begin{equation}\label{eq:dist_cond_entrpy}
    \sum_{i=1}^{2n} \E_{\tilde{Z}}\left[\mathbb{P}_{\boldsymbol{\sigma}| \tilde{Z}}(A(\sZ_{\boldsymbol{\sigma}})(X_i) \neq h^*(X_i) | \tilde{Z}) \log \frac{4}{ \mathbb{P}_{\boldsymbol{\sigma}| \tilde{Z}}(A(\sZ_{\boldsymbol{\sigma}})(X_i) \neq h^*(X_i) | \tilde{Z})}\right].
\end{equation}
For any $X_i \in \sZ_x$, we let $L_i(h^{\tilde{Z},\boldsymbol{\sigma}}_k)$ denote the event that $h^{\tilde{Z},\boldsymbol{\sigma}}_k$ errs at $X_i$,
$
L_i(h^{\tilde{Z},\boldsymbol{\sigma}}_k) = \big\{ h^{\tilde{Z},\boldsymbol{\sigma}}_k(X_i) \neq h^*(X_i)\big\}.
$
Since a majority-of-3 classifier errs when at least two of its base classifiers err, and $f(p)$ is monontone non-decreasing on $[0,1]$, we have \eqref{eq:dist_cond_entrpy} bounded by
\begin{equation}\label{eq:dist_entrpy_maj_of_3}
     \sum_{i=1}^{2n}\E_{\tilde{Z}}\Bigg[ \Big(\sum_{\substack{1 \leq k'<k \leq 3}}\mathbb{P}_{\boldsymbol{\sigma}| \tilde{Z}}(L_i(h^{\tilde{Z},\boldsymbol{\sigma}}_k)\wedge L_i(h^{\tilde{Z},\boldsymbol{\sigma}}_{k'}) | \tilde{Z})\Big) \log \frac{4}{ \min\{1,\sum_{\substack{1 \leq k'<k \leq 3}}\mathbb{P}_{\boldsymbol{\sigma}| \tilde{Z}}(L_i(h^{\tilde{Z},\boldsymbol{\sigma}}_k)\wedge L_i(h^{\tilde{Z},\boldsymbol{\sigma}}_{k'}) | \tilde{Z})\}}\Bigg].
\end{equation}
We define the conditional error region on the data samples in the supersample based on the conditional probability of error of the $k$-th base learner over $\boldsymbol{\sigma}$ given $\tilde{Z}$:
\begin{align*}
    R^{k,\tilde{Z}}_j := \left\{X_i \in \sZ_{x}: \mathbb{P}_{\boldsymbol{\sigma}|\sZ}(h_k^{\tilde{Z},\boldsymbol{\sigma}}(X_i) \neq h^*(X_i) | \tilde{Z}) \in (2^{-j}, 2^{-j+1}] \right\}.
\end{align*}
Let $j^{\sZ}_k(X_i)$ denote the index of the conditional error region such that the conditional probability of error of the $k$-th base learner at $X_i \in \sZ_x $, is in the interval $(2^{-j^{\sZ}_k(X_i)}, 2^{-j^{\sZ}_k(X_i)+1}]$. 
By the linearity of expectation and conditional independence of $S_k^{\sZ,\boldsymbol{\sigma}}, S_{k'}^{\sZ,\boldsymbol{\sigma}}$ given $\sZ$, we have \eqref{eq:dist_entrpy_maj_of_3} bounded by
\begin{align*}
    &\E_{\tilde{Z}}\Big[\sum_{X_i\in \sZ_x}
    (\sum_{\substack{1 \leq k'<k \leq 3}} 2^{-j^{\tilde{Z}}_k(X_i) -j^{\tilde{Z}}_{k'}(X_i)+2}) 
    \log \frac{4}{ \min\{1, \sum_{\substack{1 \leq k'<k \leq 3}} 2^{-j^{\tilde{Z}}_k(X_i)-j^{\tilde{Z}}_{k'}(X_i) +2 }\}}\Big]\\
    &\leq \sum_{\substack{1 \leq k'<k \leq 3}} \E_{\tilde{Z}}\Big[\sum_{X_i \in \sZ_x}
    (2^{-j^{\tilde{Z}}_k(X_i) -j^{\tilde{Z}}_{k'}(X_i)+2} ) 
    \log \frac{4}{2^{-j^{\tilde{Z}}_k(X_i)-j^{\tilde{Z}}_{k'}(X_i)+2}}\Big].
\end{align*}
Combined with the inequality $xy\leq \max\{x^2 , y^2\}$, we have that the above expression is bounded by
\begin{align*}
    & \binom{3}{2} \cdot \E_{\tilde{Z}}\Big[\sum_{X_i \in \sZ_x }
    \max\{ 2^{-2j^{\tilde{Z}}_k(X_i)+2} , 2^{-2j^{\tilde{Z}}_{k'}(X_i)+2} \}
    \log \frac{4}{ \max\{ 2^{-2j^{\tilde{Z}}_k(X_i)+2} , 2^{-2j^{\tilde{Z}}_{k'}(X_i)+2} \} }\Big].
\end{align*}
And the function $f(p)= p\log \frac{4}{p}$ is non-negative on $[0,1]$, we have the above expression bounded by
\begin{align*}
    &3 \cdot \E_{\tilde{Z}}\Big[\sum_{X_i \in \sZ_x }
    2^{-2j^{\tilde{Z}}_k(X_i)+2} \log \frac{4}{2^{-2j^{\tilde{Z}}_k(X_i)+2}} +
     2^{-2j^{\tilde{Z}}_{k'}(X_i)+2} \log \frac{4}{2^{-2j^{\tilde{Z}}_{k'}(X_i)+2}}\Big]\\
    &\leq 3\cdot\Big( \E_{\tilde{Z}}[\sum_{j=1}^{\infty}|R^{k,\tilde{Z}}_j|\cdot
    2^{-2j+2}\cdot 2j] +
     \E_{\tilde{Z}}[\sum_{j=1}^{\infty}|R^{k',\tilde{Z}}_j|\cdot
    2^{-2j+2}\cdot 2j] \Big)\\
    &= 3 \cdot 4 \cdot 2 \cdot\Big( \sum_{j=1}^{\infty}\E_{\tilde{Z}}[|R^{k,\tilde{Z}}_j|]\cdot
    2^{-2j}\cdot j +
     \sum_{j=1}^{\infty}\E_{\tilde{Z}}[|R^{k',\tilde{Z}}_j|]\cdot
    2^{-2j}\cdot j\Big).
\end{align*}
 According to Lemma \ref{expected_err_region_iid}, we have
    $\E_{\sZ}[|R^{k,\sZ}_j|] \leq 2^j\cdot 2nR(\frac{n}{3}).$
Thus, the above expression is bounded by
\begin{align*}
    48\cdot \Big(\sum_{j=1}^{\infty} j2^{-j}\Big) \cdot 2nR\left(\frac{n}{3}\right)
    = 192 \cdot nR\left(\frac{n}{3}\right)
    =O\Big(nR\left(\frac{n}{3}\right)\Big).
\end{align*}
This concludes the proof.
\end{proof}
\subsection{Expected training error rate}
In the following lemma, we derive a guarantee on the expected training error rate of the Majority-of-3 learner with respect to a random draw of the $n$ training samples $\sZ_{\boldsymbol{\sigma}}$. Without loss of generality, we may assume that $\boldsymbol{\sigma}(i) = i$ for all $i\in [n]$. 
\begin{lemma}
    The expected training error rate of the Majority-of-3 learner over a random draw of $n$ training samples is,
$$
\E_{\sZ_{\boldsymbol{\sigma}}}\Big[\ell(A(\sZ_{\boldsymbol{\sigma}}), \sZ_{\boldsymbol{\sigma}})\Big] \leq 2 \cdot R\left(\frac{n}{3}\right).
$$

\end{lemma}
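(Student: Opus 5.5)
With $\boldsymbol{\sigma}(i)=i$, the training sample $\sZ_{\boldsymbol{\sigma}}$ is an i.i.d.\ sample of size $n$ from $\mathcal{D}$. The Majority-of-3 learner splits it into three disjoint pieces $S_1,S_2,S_3$ of size $\frac{n}{3}$, trains $h_k=\mathcal{B}(S_k)$, and outputs $\text{Maj}(h_1,h_2,h_3)$. The training error rate is the average over the $n$ training points of the indicator that the majority errs. I split that average into three blocks of $\frac{n}{3}$ points each, according to which piece $S_k$ the point belongs to.

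Fix a point $x$ in block $S_1$. The majority errs at $x$ only if at least two of $h_1,h_2,h_3$ err there. This forces at least one of $h_2,h_3$ to err at $x$, so
\[
\1[\text{Maj}(x)\neq h^*(x)] \le \1[h_2(x)\neq h^*(x)]+\1[h_3(x)\neq h^*(x)].
\]
The same bound holds for blocks $S_2$ and $S_3$ with the corresponding pair of other classifiers. Under this bound, the points of $S_k$ are charged only to classifiers trained on the other pieces.

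Since the pieces are disjoint and the sample is i.i.d., a point of $S_1$ is a fresh draw from $\mathcal{D}$ independent of $S_2$. Conditioning on $S_2$ therefore gives $\E[\1[h_2(x)\neq h^*(x)]]=\E_{S_2}[\text{er}_{\mathcal{D}}(\mathcal{B}(S_2))]\le R(\frac{n}{3})$, using the true-error part of the assumption on $\mathcal{B}$. Each block contributes two such terms for each of its $\frac{n}{3}$ points. Summing over the three blocks gives at most $3\cdot\frac{n}{3}\cdot 2R(\frac{n}{3})=2nR(\frac{n}{3})$ expected errors. Dividing by $n$ gives $2R(\frac{n}{3})$.

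The one step needing care is justifying that a training point outside $S_k$ acts as an independent test point for $h_k$, so that the population-error guarantee applies. This follows directly from disjointness of the pieces and the i.i.d.\ draw of $\sZ_{\boldsymbol{\sigma}}$, so I expect no real obstacle. The training-error half of the assumption is not even needed here. One could alternatively bound $\1[\text{Maj}]\le$ (sum over the three pairs of products), but the simple union bound above already gives the constant $2$.
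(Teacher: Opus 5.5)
Your proof is correct and follows the paper's argument. At each training point you union-bound over the two base classifiers not trained on that point, then use that point's independence from their training sets together with the true-error guarantee $R(\frac{n}{3})$. The only difference is that the paper first reduces to the single point $X_1$ via exchangeability, whereas you bound each of the three blocks directly; your version is slightly more careful, since it does not need the base learner to be symmetric within a block.
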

\begin{proof}
Since every i.i.d. sequence is exchangeable, the expected training error rate of $A$ equals the expected error rate of $A$ at an arbitrary training sample. Without loss of generality, we may assume that the test sample is $(X_1,Y_1) \in \sZ_{\boldsymbol{\sigma}}$, then we have
\begin{align*}
    \E_{\sZ_{\boldsymbol{\sigma}}}\Big[\ell(A(\sZ_{\boldsymbol{\sigma}}), \sZ_{\boldsymbol{\sigma}})\Big] 
    &= \E_{\sZ_{\boldsymbol{\sigma}}}\Big[\frac{1}{n}\sum_{i=1}^n\ell(A(\sZ_{\boldsymbol{\sigma}}), \sZ_{\boldsymbol{\sigma}(i)})\Big]\\
    &= \E_{\sZ_{\boldsymbol{\sigma}}}\Big[\ell(A(\sZ_{\boldsymbol{\sigma}}), \sZ_{1})\Big]\\
    &= \E_{\sZ_{\boldsymbol{\sigma}}}\Big[ \1[A(\sZ_{\boldsymbol{\sigma}})(X_1)\neq h^*(X_1)] \Big].
\end{align*}
    We observe that if the majority-of-3 classifier errs at $X_1$, then at least one of the other two base classifiers must err.
    Thus, for $n$ fixed training samples $\sZ_{\boldsymbol{\sigma}}$,
    we have
    \begin{align*}
        \1\left[A(\sZ_{\boldsymbol{\sigma}})(X_1)\neq h^*(X_1)\right] &\leq
         \1\left[h_2^{\sZ,\boldsymbol{\sigma}}(X_1)\neq h^*(X_1) \lor h_3^{\sZ,\boldsymbol{\sigma}}\neq h^*(X_1)\right]\\
        &\leq 
        \1\left[h_2^{\sZ,\boldsymbol{\sigma}}(X_1)\neq h^*(X_1)\right] + \1\left[h_3^{\sZ,\boldsymbol{\sigma}}(X_1)\neq h^*(X_1)\right].
    \end{align*}
    Since $S_2^{\sZ,\boldsymbol{\sigma}}, S_3^{\sZ,\boldsymbol{\sigma}}$ are sampled independently of the test sample $X_1\sim D$,
    we have the expected training error rate of the Majority-of-3 learner bounded by the expected true error rate of the other two base learners. That is,
    \begin{align*}
        &\E_{\sZ_{\boldsymbol{\sigma}}\sim D^n}\Big[ \1[A(\sZ_{\boldsymbol{\sigma}})(X_1)\neq h^*(X_1)] \Big]\\
        &\leq\E_{\sZ_{\boldsymbol{\sigma}} \sim D^n} \left[\1[\mathcal{B}\left(S_2^{\tilde{Z},\boldsymbol{\sigma}}\right)(X_1)\neq h^*(X_1)]\right] + \E_{\sZ_{\boldsymbol{\sigma}} \sim D^n} \Big[\1[\mathcal{B}\left(S_3^{\tilde{Z},\boldsymbol{\sigma}}\right)(X_1)\neq h^*(X_1)]\Big] \\
        &= \E_{S_2^{\sZ,\boldsymbol{\sigma}}\sim \mathcal{D}^{\frac{n}{3}}}\Big[\mathbb{P}_{X_1\sim \mathcal{D}}\left(\mathcal{B}\left(S_2^{\tilde{Z},\boldsymbol{\sigma}}\right)(X_1)\neq h^*(X_1)\big| S_2^{\sZ,\boldsymbol{\sigma}} \right)\Big] + \E_{S_3^{\sZ,\boldsymbol{\sigma}}\sim \mathcal{D}^{\frac{n}{3}}}\Big[\mathbb{P}_{X_1\sim \mathcal{D}}\left(\mathcal{B}\left(S_3^{\tilde{Z},\boldsymbol{\sigma}}\right)(X_1)\neq h^*(X_1) \big| S_3^{\sZ,\boldsymbol{\sigma}}\right)\Big]\\
        &\leq 2\cdot R(\frac{n}{3}).
    \end{align*}
This completes the proof of Theorem \ref{thm:distributional eCMI(A)}.
\end{proof}

\section{Conclusions}
We provide a bound on the worst-case eCMI of a randomized Majority-of-5 learner. It is the first proof that an algorithm can have eCMI of order $O(d)$, where $d$ is the DS dimension of the concept class. 
Combining with the guarantee on the expected training error rate of the algorithm, it recovers an optimal PAC generalization guarantee for learning VC classes in the realizable case.
We also study the distributional eCMI and show that a Majority-of-3 base learners with an optimal in-expectation generalization guarantee suffices to provide an optimal guarantee on the distributional eCMI.

We also note that all of our results immediately extend to multiclass learning \cite{10.5555/2789272.2912074}, where $d$ is the Daniely–Shalev-Shwartz (DS) dimension $d_{DS}$ \cite{daniely2014optimal}. Indeed, since the loss function is the 0-1 loss for multiclass learning as well, and the definition of eCMI and all of our proofs only involve the binary losses, the proofs remain valid for multiclass learners as well.
Therefore, Theorem \ref{thm:eCMI(A)} and Theorem \ref{thm:distributional eCMI(A)} also hold for multiclass learning in the realizable case.
In particular, \cite{pabbaraju2026optimalsamplecomplexitymulticlass} has shown that there exists an one-inclusion-graph algorithm that achieves the optimal expected error rate $R(m)$ of order $O(\frac{d_{DS}}{m})$ for multiclass learning in the realizable case. Plugging this $R(m)$ into Theorem \ref{thm:eCMI(A)} then implies that our randomized Majority-of-5 learner yields an optimal worst-case eCMI of order $O(d_{DS})$.

\section*{Acknowledgments}
Steve Hanneke acknowledges support by grant no.\ 2024243 from the United States - Israel Binational Science Foundation (BSF).

\bibliography{refs}
\bibliographystyle{alpha} 

\end{document}